\documentclass{article}
\usepackage{iclr2027_conference,times}
\usepackage[T1]{fontenc}
\usepackage{amsmath,amssymb,amsthm,booktabs,graphicx,microtype}
\usepackage{algorithm,algpseudocode}
\usepackage{hyperref,url}
\usepackage{etoolbox}
\iclrfinalcopy
\apptocmd{\maketitle}{\lhead{Preprint}}{}{\PackageError{subgoal-preprint}{Cannot set the preprint header}{Check the ICLR style file.}}
\newtheorem{proposition}{Proposition}

\theoremstyle{definition}

\theoremstyle{remark}

\newcommand{\DiagN}{28}

\newcommand{\DiagSamples}{512}
\newcommand{\DiagSparseCosN}{0.841}
\newcommand{\DiagSparseCosR}{0.306}
\newcommand{\HorizonLogIdentities}{1377}
\newcommand{\MazeFixedDense}{73.75}
\newcommand{\MazeFixedSparse}{19.38}
\newcommand{\PFConfigs}{74}
\newcommand{\PFExactResamples}{3125}

\newcommand{\PFPairs}{36}

\newcommand{\PFSeeds}{5}

\newcommand{\PFTargetMillions}{10}
\newcommand{\PFUniquePairs}{32}
\newcommand{\ReacherConfigs}{64}

\newcommand{\ReacherPairedEntries}{512}

\newcommand{\ReacherSeeds}{8}
\newcommand{\ReacherWins}{60}

\newcommand{\MechCorrectionCheckpoints}{16}

\newcommand{\MechMatchedGradientPairs}{5}
\newcommand{\MechMatchedGradientPolicies}{10}
\newcommand{\MechMatchedRetentionCorrelation}{-0.800}

\newcommand{\MechRetentionMean}{27.9}

\newcommand{\VThreeFixedAAAAMean}{94.3}
\newcommand{\VThreeFixedAAAASd}{2.2}
\newcommand{\VThreeFixedAAABMean}{92.6}
\newcommand{\VThreeFixedAAABSd}{4.0}
\newcommand{\VThreeFixedAAACMean}{95.4}
\newcommand{\VThreeFixedAAACSd}{3.2}
\newcommand{\VThreeFixedABAAMean}{32.1}
\newcommand{\VThreeFixedABAASd}{32.1}
\newcommand{\VThreeFixedABABMean}{54.3}
\newcommand{\VThreeFixedABABSd}{37.8}
\newcommand{\VThreeFixedABACMean}{48.6}
\newcommand{\VThreeFixedABACSd}{41.8}
\newcommand{\VThreeFixedACAAMean}{96.5}
\newcommand{\VThreeFixedACAASd}{2.2}
\newcommand{\VThreeFixedACABMean}{97.1}
\newcommand{\VThreeFixedACABSd}{1.7}
\newcommand{\VThreeFixedACACMean}{97.3}
\newcommand{\VThreeFixedACACSd}{2.3}
\newcommand{\VThreeFixedADAAMean}{76.0}
\newcommand{\VThreeFixedADAASd}{30.3}
\newcommand{\VThreeFixedADABMean}{95.4}
\newcommand{\VThreeFixedADABSd}{7.9}
\newcommand{\VThreeFixedADACMean}{96.1}
\newcommand{\VThreeFixedADACSd}{3.4}
\newcommand{\VThreeFixedAEABMean}{60.6}
\newcommand{\VThreeFixedAEABSd}{18.8}
\newcommand{\VThreeFixedAEACMean}{73.8}
\newcommand{\VThreeFixedAEACSd}{9.5}
\newcommand{\VThreeFixedAFABMean}{35.0}
\newcommand{\VThreeFixedAFABSd}{39.9}
\newcommand{\VThreeFixedAFACMean}{19.4}
\newcommand{\VThreeFixedAFACSd}{31.8}
\newcommand{\VThreeFixedAGABMean}{26.9}
\newcommand{\VThreeFixedAGABSd}{6.3}
\newcommand{\VThreeFixedAGACMean}{29.3}
\newcommand{\VThreeFixedAGACSd}{6.5}
\newcommand{\VThreeFixedAHABMean}{14.8}
\newcommand{\VThreeFixedAHABSd}{15.1}
\newcommand{\VThreeFixedAHACMean}{10.4}
\newcommand{\VThreeFixedAHACSd}{11.4}
\newcommand{\VThreeAucAAAAMean}{-84.1}

\newcommand{\VThreeAucAAABMean}{508.3}

\newcommand{\VThreeAucABAAMean}{-82.2}

\newcommand{\VThreeAucABABMean}{75.5}

\newcommand{\VThreeAucACAAMean}{69.4}

\newcommand{\VThreeAucACABMean}{875.4}

\newcommand{\VThreeAucADAAMean}{226.6}

\newcommand{\VThreeAucADABMean}{560.4}

\newcommand{\VThreeSharingRows}{180}
\newcommand{\VThreeSharingAAAAMean}{87.7}
\newcommand{\VThreeSharingAAAASd}{4.5}
\newcommand{\VThreeSharingAAABMean}{81.6}
\newcommand{\VThreeSharingAAABSd}{8.2}
\newcommand{\VThreeSharingAAACMean}{86.7}
\newcommand{\VThreeSharingAAACSd}{5.5}
\newcommand{\VThreeSharingABAAMean}{12.7}
\newcommand{\VThreeSharingABAASd}{10.6}
\newcommand{\VThreeSharingABABMean}{24.3}
\newcommand{\VThreeSharingABABSd}{27.3}
\newcommand{\VThreeSharingABACMean}{9.8}
\newcommand{\VThreeSharingABACSd}{11.2}
\newcommand{\VThreeSharingACAAMean}{33.0}
\newcommand{\VThreeSharingACAASd}{22.3}
\newcommand{\VThreeSharingACABMean}{17.0}
\newcommand{\VThreeSharingACABSd}{21.6}
\newcommand{\VThreeSharingACACMean}{39.0}
\newcommand{\VThreeSharingACACSd}{27.6}
\newcommand{\VThreeSharingADAAMean}{27.5}
\newcommand{\VThreeSharingADAASd}{32.6}
\newcommand{\VThreeSharingADABMean}{9.0}
\newcommand{\VThreeSharingADABSd}{22.3}
\newcommand{\VThreeSharingADACMean}{7.5}
\newcommand{\VThreeSharingADACSd}{22.0}
\newcommand{\VThreeSharingAEAAMean}{21.1}
\newcommand{\VThreeSharingAEAASd}{7.2}
\newcommand{\VThreeSharingAEABMean}{14.8}
\newcommand{\VThreeSharingAEABSd}{7.0}
\newcommand{\VThreeSharingAEACMean}{22.6}
\newcommand{\VThreeSharingAEACSd}{6.7}
\newcommand{\VThreeSharingAFAAMean}{6.8}
\newcommand{\VThreeSharingAFAASd}{9.2}
\newcommand{\VThreeSharingAFABMean}{7.3}
\newcommand{\VThreeSharingAFABSd}{7.7}
\newcommand{\VThreeSharingAFACMean}{9.5}
\newcommand{\VThreeSharingAFACSd}{10.6}
\newcommand{\VThreeSummaryAAMean}{-0.1}

\newcommand{\VThreeSummaryAAN}{16}
\newcommand{\VThreeSummaryABMean}{-151.3}

\newcommand{\VThreeSummaryABN}{8}
\newcommand{\VThreeSummaryACMean}{1136.0}

\newcommand{\VThreeSummaryACN}{8}
\newcommand{\VThreeSummaryADMean}{-0.1}

\newcommand{\VThreeSummaryADN}{16}
\newcommand{\VThreeSummaryAEMean}{2.8}

\newcommand{\VThreeSummaryAEN}{16}
\newcommand{\VThreeSummaryAFMean}{-2.0}

\newcommand{\VThreeSummaryAFN}{16}
\newcommand{\VThreeSummaryAGMean}{77.1}

\newcommand{\VThreeSummaryAGN}{8}
\newcommand{\VThreeSummaryAHMean}{-209.4}

\newcommand{\VThreeSummaryAHN}{8}
\newcommand{\VThreeSummaryAIMean}{-1.8}

\newcommand{\VThreeSummaryAIN}{16}
\newcommand{\VThreeSummaryAJMean}{5.3}

\newcommand{\VThreeSummaryAJN}{16}
\newcommand{\VThreeSummaryAKMean}{-16.8}

\newcommand{\VThreeSummaryAKN}{16}
\newcommand{\VThreeSummaryALMean}{-25.6}

\newcommand{\VThreeSummaryALN}{16}
\newcommand{\VThreeSummaryAMMean}{-10.2}

\newcommand{\VThreeSummaryAMN}{16}
\newcommand{\VThreeSummaryANMean}{-1.1}

\newcommand{\VThreeSummaryANN}{4}
\newcommand{\VThreeSummaryAOMean}{187.0}

\newcommand{\VThreeSummaryAON}{2}
\newcommand{\VThreeSummaryAPMean}{584.0}

\newcommand{\VThreeSummaryAPN}{2}
\newcommand{\VThreeSummaryAQMean}{-0.5}

\newcommand{\VThreeSummaryAQN}{4}
\newcommand{\VThreeSummaryARMean}{-16.7}

\newcommand{\VThreeSummaryARN}{4}
\newcommand{\VThreeShortAAMean}{13.1}

\newcommand{\VThreeShortABMean}{-15.6}

\newcommand{\VThreeShortACMean}{2.4}

\newcommand{\VThreeShortADMean}{-4.4}

\newcommand{\VThreeShortAEMean}{344.7}

\newcommand{\VThreeShortAFMean}{19.6}

\newcommand{\VThreeShortAGMean}{350.5}

\newcommand{\VThreeShortAHMean}{19.2}

\newcommand{\VThreeShortAIMean}{29.3}

\newcommand{\VThreeShortAJMean}{0.8}

\newcommand{\VThreeShortAKMean}{817.6}

\newcommand{\VThreeShortALMean}{0.8}

\newcommand{\VThreeShortAMMean}{2.8}

\newcommand{\VThreeShortANMean}{-2.9}

\newcommand{\VThreeShortAOMean}{-5.6}

\newcommand{\VThreeShortAPMean}{-2.9}

\newcommand{\VThreeShortAQMean}{0.1}

\newcommand{\VThreeShortARMean}{-0.6}

\newcommand{\VThreeShortASMean}{0.8}

\newcommand{\VThreeShortATMean}{-60.4}

\newcommand{\VThreeFullProjectionReacherAAMean}{0.1}

\newcommand{\VThreeFullProjectionReacherABMean}{-2.5}

\newcommand{\VThreeFullProjectionReacherACMean}{2.9}

\newcommand{\VThreeFullProjectionReacherADMean}{-0.9}

\newcommand{\VThreeFullProjectionReacherAEMean}{18.5}

\newcommand{\VThreeFullProjectionReacherAFMean}{34.5}

\newcommand{\VThreeFullProjectionReacherAGMean}{-11.1}

\newcommand{\VThreeFullProjectionReacherAHMean}{-8.8}

\newcommand{\VThreeFullProjectionReacherAIMean}{-4.0}

\newcommand{\VThreeFullProjectionReacherAJMean}{11.3}

\newcommand{\VThreeFullProjectionReacherAKMean}{6.6}

\newcommand{\VThreeFullProjectionReacherALMean}{-17.5}

\newcommand{\VThreeFullProjectionReacherAMMean}{0.1}

\newcommand{\VThreeFullProjectionReacherANMean}{-1.5}

\newcommand{\VThreeFullProjectionReacherAOMean}{6.1}

\newcommand{\VThreeFullProjectionReacherAPMean}{-1.6}

\newcommand{\VThreeFullProjectionReacherAQMean}{0.0}

\newcommand{\VThreeFullProjectionReacherARMean}{0.0}

\newcommand{\VThreeFullProjectionReacherASMean}{0.0}

\newcommand{\VThreeFullProjectionReacherATMean}{-1.0}

\newcommand{\VThreeFullProjectionReacherAUMean}{38.4}

\newcommand{\VThreeFullProjectionReacherAVMean}{14.1}

\newcommand{\VThreeFullProjectionReacherAWMean}{8.0}

\newcommand{\VThreeFullProjectionReacherAXMean}{-1.9}

\newcommand{\VThreeFullProjectionReacherAYMean}{-25.3}

\newcommand{\VThreeFullProjectionReacherAZMean}{-1.5}

\newcommand{\VThreeFullProjectionReacherBAMean}{7.3}

\newcommand{\VThreeFullProjectionReacherBBMean}{-4.6}

\newcommand{\VThreeFullProjectionReacherBCMean}{3.3}

\newcommand{\VThreeFullProjectionReacherBDMean}{-4.6}

\newcommand{\VThreeFullProjectionReacherBEMean}{6.4}

\newcommand{\VThreeFullProjectionReacherBFMean}{-15.0}

\newcommand{\VThreeFullProjectionMazeavailableAAMean}{-63.1}

\newcommand{\VThreeFullProjectionMazeavailableABMean}{21.3}

\newcommand{\VThreeFullProjectionMazeavailableACMean}{13.8}

\newcommand{\VThreeFullProjectionMazeavailableADMean}{8.1}

\newcommand{\VThreeFullProjectionMazeavailableAEMean}{-8.1}

\newcommand{\VThreeFullProjectionMazeavailableAFMean}{-0.6}

\newcommand{\VThreeFullProjectionMazeavailableAGMean}{5.0}

\newcommand{\VThreeFullProjectionMazeavailableAHMean}{8.8}

\newcommand{\VThreeFullProjectionMazeavailableAIMean}{-4.3}

\newcommand{\VThreeFullProjectionMazeavailableAJMean}{0.8}

\newcommand{\VThreeFullProjectionMazeavailableAKMean}{2.4}

\newcommand{\VThreeFullProjectionMazeavailableALMean}{-6.4}

\newcommand{\VThreeFullProjectionMazeavailableAMMean}{-0.6}

\newcommand{\VThreeFullProjectionMazeavailableANMean}{0.4}

\newcommand{\VThreeFullProjectionMazeavailableAOMean}{-7.0}

\newcommand{\VThreeFullProjectionMazeavailableAPMean}{-1.9}

\newcommand{\VThreeFullProjectionPushfallAAMean}{112.2}

\newcommand{\VThreeFullProjectionPushfallABMean}{174.9}

\newcommand{\VThreeFullProjectionPushfallACMean}{276.3}

\newcommand{\VThreeFullProjectionPushfallADMean}{421.1}

\newcommand{\VThreeFullProjectionPushfallAEMean}{-345.2}

\newcommand{\VThreeFullProjectionPushfallAFMean}{47.4}

\newcommand{\VThreeFullProjectionPushfallAGMean}{-1891.3}

\newcommand{\VThreeFullProjectionPushfallAHMean}{-2218.2}

\newcommand{\VThreeFullProjectionPushfallAIMean}{243.9}

\newcommand{\VThreeFullProjectionPushfallAJMean}{438.7}

\newcommand{\VThreeFullProjectionPushfallAKMean}{-905.7}

\newcommand{\VThreeFullProjectionPushfallALMean}{-144.9}

\newcommand{\VThreeFullProjectionPushfallAMMean}{164.0}

\newcommand{\VThreeFullProjectionPushfallANMean}{2043.5}

\newcommand{\VThreeFullProjectionPushfallAOMean}{650.9}

\newcommand{\VThreeFullProjectionPushfallAPMean}{-126.5}

\newcommand{\VThreeFullNextgapReacherAAMean}{2.0}

\newcommand{\VThreeFullNextgapReacherABMean}{0.3}

\newcommand{\VThreeFullNextgapReacherACMean}{0.9}

\newcommand{\VThreeFullNextgapReacherADMean}{2.1}

\newcommand{\VThreeFullNextgapReacherAEMean}{-5.0}

\newcommand{\VThreeFullNextgapReacherAFMean}{10.1}

\newcommand{\VThreeFullNextgapReacherAGMean}{26.9}

\newcommand{\VThreeFullNextgapReacherAHMean}{16.4}

\newcommand{\VThreeFullNextgapReacherAIMean}{-12.6}

\newcommand{\VThreeFullNextgapReacherAJMean}{12.5}

\newcommand{\VThreeFullNextgapReacherAKMean}{-3.9}

\newcommand{\VThreeFullNextgapReacherALMean}{-4.4}

\newcommand{\VThreeFullNextgapReacherAMMean}{0.6}

\newcommand{\VThreeFullNextgapReacherANMean}{1.3}

\newcommand{\VThreeFullNextgapReacherAOMean}{-3.0}

\newcommand{\VThreeFullNextgapReacherAPMean}{1.3}

\newcommand{\VThreeFullNextgapReacherAQMean}{0.4}

\newcommand{\VThreeFullNextgapReacherARMean}{-0.6}

\newcommand{\VThreeFullNextgapReacherASMean}{-0.6}

\newcommand{\VThreeFullNextgapReacherATMean}{1.0}

\newcommand{\VThreeFullNextgapReacherAUMean}{-0.8}

\newcommand{\VThreeFullNextgapReacherAVMean}{-1.3}

\newcommand{\VThreeFullNextgapReacherAWMean}{1.9}

\newcommand{\VThreeFullNextgapReacherAXMean}{1.5}

\newcommand{\VThreeFullNextgapReacherAYMean}{-13.3}

\newcommand{\VThreeFullNextgapReacherAZMean}{6.8}

\newcommand{\VThreeFullNextgapReacherBAMean}{3.8}

\newcommand{\VThreeFullNextgapReacherBBMean}{3.6}

\newcommand{\VThreeFullNextgapReacherBCMean}{-3.1}

\newcommand{\VThreeFullNextgapReacherBDMean}{-2.0}

\newcommand{\VThreeFullNextgapReacherBEMean}{-20.9}

\newcommand{\VThreeFullNextgapReacherBFMean}{21.3}

\newcommand{\VThreeFullNextgapMazeavailableAAMean}{15.6}

\newcommand{\VThreeFullNextgapMazeavailableABMean}{-8.8}

\newcommand{\VThreeFullNextgapMazeavailableACMean}{4.4}

\newcommand{\VThreeFullNextgapMazeavailableADMean}{-2.5}

\newcommand{\VThreeFullNextgapMazeavailableAEMean}{0.0}

\newcommand{\VThreeFullNextgapMazeavailableAFMean}{5.0}

\newcommand{\VThreeFullNextgapMazeavailableAGMean}{-6.3}

\newcommand{\VThreeFullNextgapMazeavailableAHMean}{2.5}

\newcommand{\VThreeFullNextgapMazeavailableAIMean}{-1.8}

\newcommand{\VThreeFullNextgapMazeavailableAJMean}{-1.0}

\newcommand{\VThreeFullNextgapMazeavailableAKMean}{1.3}

\newcommand{\VThreeFullNextgapMazeavailableALMean}{-3.1}

\newcommand{\VThreeFullNextgapMazeavailableAMMean}{-12.5}

\newcommand{\VThreeFullNextgapMazeavailableANMean}{3.1}

\newcommand{\VThreeFullNextgapMazeavailableAOMean}{-3.0}

\newcommand{\VThreeFullNextgapMazeavailableAPMean}{4.6}

\newcommand{\VThreeFullNextgapPushfallAAMean}{-1226.5}

\newcommand{\VThreeFullNextgapPushfallABMean}{-1053.1}

\newcommand{\VThreeFullNextgapPushfallACMean}{3.5}

\newcommand{\VThreeFullNextgapPushfallADMean}{-332.5}

\newcommand{\VThreeFullNextgapPushfallAEMean}{2189.9}

\newcommand{\VThreeFullNextgapPushfallAFMean}{268.7}

\newcommand{\VThreeFullNextgapPushfallAGMean}{3169.4}

\newcommand{\VThreeFullNextgapPushfallAHMean}{3872.1}

\newcommand{\VThreeFullNextgapPushfallAIMean}{360.6}

\newcommand{\VThreeFullNextgapPushfallAJMean}{15.1}

\newcommand{\VThreeFullNextgapPushfallAKMean}{871.8}

\newcommand{\VThreeFullNextgapPushfallALMean}{150.5}

\newcommand{\VThreeFullNextgapPushfallAMMean}{-529.9}

\newcommand{\VThreeFullNextgapPushfallANMean}{-34.4}

\newcommand{\VThreeFullNextgapPushfallAOMean}{149.2}

\newcommand{\VThreeFullNextgapPushfallAPMean}{2.8}

\newcommand{\VThreeFullQueryReacherAAMean}{-68.4}

\newcommand{\VThreeFullQueryReacherABMean}{-13.1}

\newcommand{\VThreeFullQueryReacherACMean}{-12.1}

\newcommand{\VThreeFullQueryReacherADMean}{-4.5}

\newcommand{\VThreeFullQueryReacherAEMean}{7.9}

\newcommand{\VThreeFullQueryReacherAFMean}{-39.6}

\newcommand{\VThreeFullQueryReacherAGMean}{-24.4}

\newcommand{\VThreeFullQueryReacherAHMean}{-11.4}

\newcommand{\VThreeFullQueryReacherAIMean}{-13.9}

\newcommand{\VThreeFullQueryReacherAJMean}{-39.3}

\newcommand{\VThreeFullQueryReacherAKMean}{-43.9}

\newcommand{\VThreeFullQueryReacherALMean}{-25.4}

\newcommand{\VThreeFullQueryReacherAMMean}{-1.9}

\newcommand{\VThreeFullQueryReacherANMean}{-0.4}

\newcommand{\VThreeFullQueryReacherAOMean}{-8.1}

\newcommand{\VThreeFullQueryReacherAPMean}{-2.6}

\newcommand{\VThreeFullQueryReacherAQMean}{-37.1}

\newcommand{\VThreeFullQueryReacherARMean}{-1.5}

\newcommand{\VThreeFullQueryReacherASMean}{-1.0}

\newcommand{\VThreeFullQueryReacherATMean}{-0.1}

\newcommand{\VThreeFullQueryReacherAUMean}{-6.9}

\newcommand{\VThreeFullQueryReacherAVMean}{-20.0}

\newcommand{\VThreeFullQueryReacherAWMean}{-2.4}

\newcommand{\VThreeFullQueryReacherAXMean}{-0.1}

\newcommand{\VThreeFullQueryReacherAYMean}{-57.0}

\newcommand{\VThreeFullQueryReacherAZMean}{-46.9}

\newcommand{\VThreeFullQueryReacherBAMean}{-46.5}

\newcommand{\VThreeFullQueryReacherBBMean}{0.6}

\newcommand{\VThreeFullQueryReacherBCMean}{-5.0}

\newcommand{\VThreeFullQueryReacherBDMean}{2.3}

\newcommand{\VThreeFullQueryReacherBEMean}{-21.1}

\newcommand{\VThreeFullQueryReacherBFMean}{-28.8}

\newcommand{\VThreeFullQueryMazeavailableAAMean}{-6.3}

\newcommand{\VThreeFullQueryMazeavailableABMean}{-65.6}

\newcommand{\VThreeFullQueryMazeavailableACMean}{-42.5}

\newcommand{\VThreeFullQueryMazeavailableADMean}{-21.9}

\newcommand{\VThreeFullQueryMazeavailableAEMean}{0.0}

\newcommand{\VThreeFullQueryMazeavailableAFMean}{-18.8}

\newcommand{\VThreeFullQueryMazeavailableAGMean}{-21.3}

\newcommand{\VThreeFullQueryMazeavailableAHMean}{-4.4}

\newcommand{\VThreeFullQueryMazeavailableAIMean}{-19.0}

\newcommand{\VThreeFullQueryMazeavailableAJMean}{-15.0}

\newcommand{\VThreeFullQueryMazeavailableAKMean}{-14.1}

\newcommand{\VThreeFullQueryMazeavailableALMean}{-4.8}

\newcommand{\VThreeFullQueryMazeavailableAMMean}{-17.3}

\newcommand{\VThreeFullQueryMazeavailableANMean}{-11.6}

\newcommand{\VThreeFullQueryMazeavailableAOMean}{-9.1}

\newcommand{\VThreeFullQueryMazeavailableAPMean}{3.1}

\newcommand{\VThreeHiroSparseMean}{20.1}
\newcommand{\VThreeHiroSparseLo}{2.5}
\newcommand{\VThreeHiroSparseHi}{42.1}
\newcommand{\VThreeHiroSparseLeft}{96.1}
\newcommand{\VThreeHiroSparseRight}{76.0}

\newcommand{\VThreeLongSparseLeft}{95.4}

\newcommand{\VThreeProjectionPositiveMean}{38.4}
\newcommand{\VThreeProjectionPositiveLo}{15.0}
\newcommand{\VThreeProjectionPositiveHi}{63.9}

\newcommand{\VThreeMazeRows}{768}
\newcommand{\VThreeLiteralRows}{32}

\newcommand{\VFourPushHiroSuccess}{0.0}

\newcommand{\VFourPushCorrSuccess}{14.0}

\newcommand{\VFourPushCorrLo}{0.0}
\newcommand{\VFourPushCorrHi}{32.4}

\newcommand{\VFourFallHiroSuccess}{9.6}

\newcommand{\VFourFallCorrSuccess}{32.8}

\newcommand{\VFourFallCorrDelta}{23.2}
\newcommand{\VFourFallCorrLo}{8.8}
\newcommand{\VFourFallCorrHi}{39.2}

\newcommand{\VFourNextFallMean}{32.8}
\newcommand{\VFourNextFallLo}{14.4}
\newcommand{\VFourNextFallHi}{47.6}

\newcommand{\VFourNextReacherMean}{26.9}
\newcommand{\VFourNextReacherLo}{12.0}
\newcommand{\VFourNextReacherHi}{40.8}

\newcommand{\VFourShortFallCorrMean}{19.2}
\newcommand{\VFourShortFallCorrLo}{8.0}
\newcommand{\VFourShortFallCorrHi}{31.6}

\newcommand{\VFourShortFallNoCorrMean}{19.6}
\newcommand{\VFourShortFallNoCorrLo}{1.2}
\newcommand{\VFourShortFallNoCorrHi}{43.2}

\newcommand{\VFourSparseCorrectionMean}{63.75}
\newcommand{\VFourSparseCorrectionLo}{49.38}
\newcommand{\VFourSparseCorrectionHi}{74.25}
\newcommand{\VFourSparseCorrectionLeft}{96.13}
\newcommand{\VFourSparseCorrectionRight}{32.38}
\newcommand{\VFourFullFallInteractionMean}{4611.5}
\newcommand{\VFourFullFallInteractionLo}{1681.1}
\newcommand{\VFourFullFallInteractionHi}{7276.2}
\newcommand{\VFourFullPushInteractionMean}{4414.8}
\newcommand{\VFourFullPushInteractionLo}{2517.6}
\newcommand{\VFourFullPushInteractionHi}{6176.2}

\newcommand{\VSevenDenseDirectAllMean}{70.60}
\newcommand{\VSevenDenseDirectAllSd}{48.07}
\newcommand{\VSevenDenseSharedAllMean}{60.22}
\newcommand{\VSevenDenseSharedAllSd}{36.09}
\newcommand{\VSevenDenseDirectWithinStateMean}{50.45}
\newcommand{\VSevenDenseDirectWithinStateSd}{35.64}
\newcommand{\VSevenDenseSharedWithinStateMean}{19.29}
\newcommand{\VSevenDenseSharedWithinStateSd}{18.79}
\newcommand{\VSevenDenseDirectActionMean}{66.30}
\newcommand{\VSevenDenseDirectActionSd}{38.61}
\newcommand{\VSevenDenseDirectBiasMean}{-14.21}
\newcommand{\VSevenDenseDirectBiasSd}{30.58}
\newcommand{\VSevenDenseDirectRegretMean}{22.87}
\newcommand{\VSevenDenseDirectRegretSd}{21.86}
\newcommand{\VSevenDenseSharedActionMean}{60.38}
\newcommand{\VSevenDenseSharedActionSd}{36.12}
\newcommand{\VSevenDenseSharedBiasMean}{-11.30}
\newcommand{\VSevenDenseSharedBiasSd}{17.37}
\newcommand{\VSevenDenseSharedRegretMean}{23.54}
\newcommand{\VSevenDenseSharedRegretSd}{23.28}
\newcommand{\VSevenDenseOutputTargetAllMean}{72.75}
\newcommand{\VSevenDenseOutputTargetAllSd}{48.04}
\newcommand{\VSevenDenseOutputTargetActionMean}{73.00}
\newcommand{\VSevenDenseOutputTargetActionSd}{48.09}
\newcommand{\VSevenDenseOutputTargetBiasMean}{30.72}
\newcommand{\VSevenDenseOutputTargetBiasSd}{12.15}
\newcommand{\VSevenDenseOutputTargetRegretMean}{19.67}
\newcommand{\VSevenDenseOutputTargetRegretSd}{17.42}
\newcommand{\VSevenDenseOutputOnlyAllMean}{73.25}
\newcommand{\VSevenDenseOutputOnlyAllSd}{47.82}
\newcommand{\VSevenDenseOutputOnlyActionMean}{73.51}
\newcommand{\VSevenDenseOutputOnlyActionSd}{47.88}
\newcommand{\VSevenDenseOutputOnlyBiasMean}{32.75}
\newcommand{\VSevenDenseOutputOnlyBiasSd}{12.38}
\newcommand{\VSevenDenseOutputOnlyRegretMean}{20.81}
\newcommand{\VSevenDenseOutputOnlyRegretSd}{19.25}
\newcommand{\VSevenDenseTargetOnlyAllMean}{72.86}
\newcommand{\VSevenDenseTargetOnlyAllSd}{48.11}
\newcommand{\VSevenDenseTargetOnlyActionMean}{73.10}
\newcommand{\VSevenDenseTargetOnlyActionSd}{48.17}
\newcommand{\VSevenDenseTargetOnlyBiasMean}{30.48}
\newcommand{\VSevenDenseTargetOnlyBiasSd}{12.30}
\newcommand{\VSevenDenseTargetOnlyRegretMean}{21.23}
\newcommand{\VSevenDenseTargetOnlyRegretSd}{19.42}
\newcommand{\VSevenDenseSignedAllMean}{73.35}
\newcommand{\VSevenDenseSignedAllSd}{48.08}
\newcommand{\VSevenDenseSignedActionMean}{73.60}
\newcommand{\VSevenDenseSignedActionSd}{48.15}
\newcommand{\VSevenDenseSignedBiasMean}{32.17}
\newcommand{\VSevenDenseSignedBiasSd}{12.25}
\newcommand{\VSevenDenseSignedRegretMean}{21.59}
\newcommand{\VSevenDenseSignedRegretSd}{20.04}
\newcommand{\VSevenDenseSeedASlotDifference}{-3.15}
\newcommand{\VSevenDenseSeedADedupDifference}{-3.13}
\newcommand{\VSevenDenseSeedA}{777}
\newcommand{\VSevenDenseSeedBSlotDifference}{-24.92}
\newcommand{\VSevenDenseSeedBDedupDifference}{-25.01}
\newcommand{\VSevenDenseSeedB}{778}
\newcommand{\VSevenDenseSeedCSlotDifference}{-3.08}
\newcommand{\VSevenDenseSeedCDedupDifference}{-3.00}
\newcommand{\VSevenDenseSeedC}{779}

\newcommand{\VSevenSparseDirectAllMean}{9.49}
\newcommand{\VSevenSparseDirectAllSd}{0.36}
\newcommand{\VSevenSparseSharedAllMean}{8.86}
\newcommand{\VSevenSparseSharedAllSd}{0.91}
\newcommand{\VSevenSparseDirectWithinStateMean}{8.50}
\newcommand{\VSevenSparseDirectWithinStateSd}{0.35}
\newcommand{\VSevenSparseSharedWithinStateMean}{1.13}
\newcommand{\VSevenSparseSharedWithinStateSd}{0.03}
\newcommand{\VSevenSparseDirectActionMean}{10.41}
\newcommand{\VSevenSparseDirectActionSd}{0.19}
\newcommand{\VSevenSparseDirectBiasMean}{1.77}
\newcommand{\VSevenSparseDirectBiasSd}{1.39}
\newcommand{\VSevenSparseDirectRegretMean}{1.29}
\newcommand{\VSevenSparseDirectRegretSd}{0.20}
\newcommand{\VSevenSparseSharedActionMean}{8.85}
\newcommand{\VSevenSparseSharedActionSd}{0.91}
\newcommand{\VSevenSparseSharedBiasMean}{-0.86}
\newcommand{\VSevenSparseSharedBiasSd}{1.65}
\newcommand{\VSevenSparseSharedRegretMean}{1.33}
\newcommand{\VSevenSparseSharedRegretSd}{0.34}
\newcommand{\VSevenSparseOutputTargetAllMean}{12.60}
\newcommand{\VSevenSparseOutputTargetAllSd}{0.35}
\newcommand{\VSevenSparseOutputTargetActionMean}{12.60}
\newcommand{\VSevenSparseOutputTargetActionSd}{0.36}
\newcommand{\VSevenSparseOutputTargetBiasMean}{8.18}
\newcommand{\VSevenSparseOutputTargetBiasSd}{0.38}
\newcommand{\VSevenSparseOutputTargetRegretMean}{1.29}
\newcommand{\VSevenSparseOutputTargetRegretSd}{0.26}
\newcommand{\VSevenSparseOutputOnlyAllMean}{12.60}
\newcommand{\VSevenSparseOutputOnlyAllSd}{0.34}
\newcommand{\VSevenSparseOutputOnlyActionMean}{12.60}
\newcommand{\VSevenSparseOutputOnlyActionSd}{0.36}
\newcommand{\VSevenSparseOutputOnlyBiasMean}{8.18}
\newcommand{\VSevenSparseOutputOnlyBiasSd}{0.39}
\newcommand{\VSevenSparseOutputOnlyRegretMean}{1.27}
\newcommand{\VSevenSparseOutputOnlyRegretSd}{0.28}
\newcommand{\VSevenSparseTargetOnlyAllMean}{12.28}
\newcommand{\VSevenSparseTargetOnlyAllSd}{0.36}
\newcommand{\VSevenSparseTargetOnlyActionMean}{12.27}
\newcommand{\VSevenSparseTargetOnlyActionSd}{0.38}
\newcommand{\VSevenSparseTargetOnlyBiasMean}{7.32}
\newcommand{\VSevenSparseTargetOnlyBiasSd}{0.45}
\newcommand{\VSevenSparseTargetOnlyRegretMean}{1.30}
\newcommand{\VSevenSparseTargetOnlyRegretSd}{0.24}
\newcommand{\VSevenSparseSignedAllMean}{12.52}
\newcommand{\VSevenSparseSignedAllSd}{0.36}
\newcommand{\VSevenSparseSignedActionMean}{12.52}
\newcommand{\VSevenSparseSignedActionSd}{0.37}
\newcommand{\VSevenSparseSignedBiasMean}{7.91}
\newcommand{\VSevenSparseSignedBiasSd}{0.43}
\newcommand{\VSevenSparseSignedRegretMean}{1.27}
\newcommand{\VSevenSparseSignedRegretSd}{0.24}
\newcommand{\VSevenSparseSeedASlotDifference}{-0.77}
\newcommand{\VSevenSparseSeedADedupDifference}{-0.77}
\newcommand{\VSevenSparseSeedA}{777}
\newcommand{\VSevenSparseSeedBSlotDifference}{-0.03}
\newcommand{\VSevenSparseSeedBDedupDifference}{-0.03}
\newcommand{\VSevenSparseSeedB}{778}
\newcommand{\VSevenSparseSeedCSlotDifference}{-1.11}
\newcommand{\VSevenSparseSeedCDedupDifference}{-1.12}
\newcommand{\VSevenSparseSeedC}{779}

\newcommand{\VSevenMdpTwoZeroCandidates}{2}
\newcommand{\VSevenMdpTwoZeroNoise}{0.0}
\newcommand{\VSevenMdpTwoZeroSignedSelectedBiasMean}{0.002}
\newcommand{\VSevenMdpTwoZeroSignedSelectedBiasSd}{0.017}
\newcommand{\VSevenMdpTwoZeroProjectedSelectedBiasMean}{-0.268}
\newcommand{\VSevenMdpTwoZeroProjectedSelectedBiasSd}{0.019}
\newcommand{\VSevenMdpTwoZeroSignedMaeMean}{0.019}
\newcommand{\VSevenMdpTwoZeroSignedMaeSd}{0.006}
\newcommand{\VSevenMdpTwoZeroProjectedMaeMean}{0.270}
\newcommand{\VSevenMdpTwoZeroProjectedMaeSd}{0.016}
\newcommand{\VSevenMdpTwoLowCandidates}{2}
\newcommand{\VSevenMdpTwoLowNoise}{0.3}
\newcommand{\VSevenMdpTwoLowSignedSelectedBiasMean}{0.006}
\newcommand{\VSevenMdpTwoLowSignedSelectedBiasSd}{0.028}
\newcommand{\VSevenMdpTwoLowProjectedSelectedBiasMean}{-0.466}
\newcommand{\VSevenMdpTwoLowProjectedSelectedBiasSd}{0.032}
\newcommand{\VSevenMdpTwoLowSignedMaeMean}{0.032}
\newcommand{\VSevenMdpTwoLowSignedMaeSd}{0.010}
\newcommand{\VSevenMdpTwoLowProjectedMaeMean}{0.464}
\newcommand{\VSevenMdpTwoLowProjectedMaeSd}{0.029}
\newcommand{\VSevenMdpTwoHighCandidates}{2}
\newcommand{\VSevenMdpTwoHighNoise}{1.0}
\newcommand{\VSevenMdpTwoHighSignedSelectedBiasMean}{0.059}
\newcommand{\VSevenMdpTwoHighSignedSelectedBiasSd}{0.071}
\newcommand{\VSevenMdpTwoHighProjectedSelectedBiasMean}{-1.241}
\newcommand{\VSevenMdpTwoHighProjectedSelectedBiasSd}{0.083}
\newcommand{\VSevenMdpTwoHighSignedMaeMean}{0.096}
\newcommand{\VSevenMdpTwoHighSignedMaeSd}{0.034}
\newcommand{\VSevenMdpTwoHighProjectedMaeMean}{1.245}
\newcommand{\VSevenMdpTwoHighProjectedMaeSd}{0.086}
\newcommand{\VSevenMdpFourZeroCandidates}{4}
\newcommand{\VSevenMdpFourZeroNoise}{0.0}
\newcommand{\VSevenMdpFourZeroSignedSelectedBiasMean}{0.004}
\newcommand{\VSevenMdpFourZeroSignedSelectedBiasSd}{0.015}
\newcommand{\VSevenMdpFourZeroProjectedSelectedBiasMean}{-0.247}
\newcommand{\VSevenMdpFourZeroProjectedSelectedBiasSd}{0.017}
\newcommand{\VSevenMdpFourZeroSignedMaeMean}{0.017}
\newcommand{\VSevenMdpFourZeroSignedMaeSd}{0.005}
\newcommand{\VSevenMdpFourZeroProjectedMaeMean}{0.246}
\newcommand{\VSevenMdpFourZeroProjectedMaeSd}{0.014}
\newcommand{\VSevenMdpFourLowCandidates}{4}
\newcommand{\VSevenMdpFourLowNoise}{0.3}
\newcommand{\VSevenMdpFourLowSignedSelectedBiasMean}{0.010}
\newcommand{\VSevenMdpFourLowSignedSelectedBiasSd}{0.022}
\newcommand{\VSevenMdpFourLowProjectedSelectedBiasMean}{-0.448}
\newcommand{\VSevenMdpFourLowProjectedSelectedBiasSd}{0.027}
\newcommand{\VSevenMdpFourLowSignedMaeMean}{0.031}
\newcommand{\VSevenMdpFourLowSignedMaeSd}{0.007}
\newcommand{\VSevenMdpFourLowProjectedMaeMean}{0.446}
\newcommand{\VSevenMdpFourLowProjectedMaeSd}{0.027}
\newcommand{\VSevenMdpFourHighCandidates}{4}
\newcommand{\VSevenMdpFourHighNoise}{1.0}
\newcommand{\VSevenMdpFourHighSignedSelectedBiasMean}{0.053}
\newcommand{\VSevenMdpFourHighSignedSelectedBiasSd}{0.074}
\newcommand{\VSevenMdpFourHighProjectedSelectedBiasMean}{-1.240}
\newcommand{\VSevenMdpFourHighProjectedSelectedBiasSd}{0.091}
\newcommand{\VSevenMdpFourHighSignedMaeMean}{0.096}
\newcommand{\VSevenMdpFourHighSignedMaeSd}{0.021}
\newcommand{\VSevenMdpFourHighProjectedMaeMean}{1.239}
\newcommand{\VSevenMdpFourHighProjectedMaeSd}{0.083}
\newcommand{\VSevenMdpEightZeroCandidates}{8}
\newcommand{\VSevenMdpEightZeroNoise}{0.0}
\newcommand{\VSevenMdpEightZeroSignedSelectedBiasMean}{0.003}
\newcommand{\VSevenMdpEightZeroSignedSelectedBiasSd}{0.014}
\newcommand{\VSevenMdpEightZeroProjectedSelectedBiasMean}{-0.247}
\newcommand{\VSevenMdpEightZeroProjectedSelectedBiasSd}{0.015}
\newcommand{\VSevenMdpEightZeroSignedMaeMean}{0.017}
\newcommand{\VSevenMdpEightZeroSignedMaeSd}{0.004}
\newcommand{\VSevenMdpEightZeroProjectedMaeMean}{0.248}
\newcommand{\VSevenMdpEightZeroProjectedMaeSd}{0.014}
\newcommand{\VSevenMdpEightLowCandidates}{8}
\newcommand{\VSevenMdpEightLowNoise}{0.3}
\newcommand{\VSevenMdpEightLowSignedSelectedBiasMean}{0.022}
\newcommand{\VSevenMdpEightLowSignedSelectedBiasSd}{0.027}
\newcommand{\VSevenMdpEightLowProjectedSelectedBiasMean}{-0.438}
\newcommand{\VSevenMdpEightLowProjectedSelectedBiasSd}{0.029}
\newcommand{\VSevenMdpEightLowSignedMaeMean}{0.033}
\newcommand{\VSevenMdpEightLowSignedMaeSd}{0.008}
\newcommand{\VSevenMdpEightLowProjectedMaeMean}{0.437}
\newcommand{\VSevenMdpEightLowProjectedMaeSd}{0.024}
\newcommand{\VSevenMdpEightHighCandidates}{8}
\newcommand{\VSevenMdpEightHighNoise}{1.0}
\newcommand{\VSevenMdpEightHighSignedSelectedBiasMean}{0.153}
\newcommand{\VSevenMdpEightHighSignedSelectedBiasSd}{0.069}
\newcommand{\VSevenMdpEightHighProjectedSelectedBiasMean}{-1.145}
\newcommand{\VSevenMdpEightHighProjectedSelectedBiasSd}{0.073}
\newcommand{\VSevenMdpEightHighSignedMaeMean}{0.135}
\newcommand{\VSevenMdpEightHighSignedMaeSd}{0.036}
\newcommand{\VSevenMdpEightHighProjectedMaeMean}{1.160}
\newcommand{\VSevenMdpEightHighProjectedMaeSd}{0.061}
\newcommand{\VSevenPolicySeeds}{3}
\newcommand{\VSevenUnits}{6}

\newcommand{\VSevenUpdates}{5000}

\newcommand{\VSevenTrainEpisodes}{16}
\newcommand{\VSevenCalEpisodes}{8}
\newcommand{\VSevenTestEpisodes}{8}
\newcommand{\VSevenEpisodes}{32}
\newcommand{\VSevenSnapshots}{960}
\newcommand{\VSevenGoalSlots}{8}
\newcommand{\VSevenActions}{3}
\newcommand{\VSevenBankRows}{23040}

\newcommand{\VSevenUniqueInputs}{22944}

\newcommand{\VSevenDuplicateGroups}{96}

\newcommand{\VSevenDirectParams}{3055340}
\newcommand{\VSevenSharedParams}{3057030}
\newcommand{\VSevenMAEImprovedUnits}{6}

\newcommand{\VSevenMDPNoiseSeeds}{30}

\newcommand{\VSevenMDPRuns}{1092}

\title{Hierarchical Time-aware Bootstrapping\\for Off-Policy Subgoal Value Learning}
\author{Bingyun Liu\thanks{Equal contribution.}\\
Institute of Automation\\
Chinese Academy of Sciences\\
\texttt{liubingyun2021@ia.ac.cn}
\And
Yuheng Jing\footnotemark[1]\kern0.4em\thanks{Corresponding author.}\\
Institute of Automation\\
Chinese Academy of Sciences\\
\texttt{jingyuheng2022@ia.ac.cn}}
\hypersetup{colorlinks=true,linkcolor=blue,citecolor=blue,urlcolor=blue,
pdftitle={Hierarchical Time-aware Bootstrapping for Off-Policy Subgoal Value Learning},
pdfauthor={Bingyun Liu; Yuheng Jing}}
\begin{document}
\maketitle
\begin{abstract}
\IfFileExists{sections/abstract.tex}{Off-policy hierarchical reinforcement learning must estimate the values of
high-level decisions while the low-level policy changes. HIRO adapts
replay data through subgoal relabeling, but after a label change, the value
update targets the relabeled subgoal instead of the subgoal the high-level
policy originally needed to update. We propose Hierarchical
Time-aware Bootstrapping (HTB), which evaluates specified subgoals under
the current low-level policy while retaining accumulated task rewards.
Remaining execution time distinguishes subgoal continuation from a new
high-level decision. Together with primitive-action conditioning, it
enables off-policy Bellman updates based on the stationary environment
transition law. HTB combines these one-step updates with multi-step
suffix returns and truncated relabeling, reducing dependence on
intermediate value estimates. A shared value component supports learning
across actions, while nonnegative residuals constrain upward corrections
relative to that component. At a fixed mixture weight of $.95$, HTB
achieves \VFourFallCorrSuccess{}\% AntFall success versus
\VFourFallHiroSuccess{}\% for matched local HIRO over five paired seeds
at \PFTargetMillions{}M environment steps. Ablations identify contributions
from recursive continuation and mixed supervision; fixed-policy tests
show more accurate predictions for actions whose returns were excluded
from fitting.
}{}
\end{abstract}
\section{Introduction}
Goal-conditioned hierarchical reinforcement learning (HRL) selects
subgoals with a high-level policy and executes them with a low-level
policy. As the latter changes, the same subgoal induces different
transitions and returns, making high-level replay nonstationary.

Hierarchical Actor-Critic (HAC) corrects replay with hindsight actions
under idealized lower-level execution. Its correction relies on a
restrictive assumption: the high-level reward must be independent of the
low-level execution path, as explicitly required by
\citet[Sec.~4.3]{levy2019hac}. The standard goal-reaching cost is $-1$
before success and $0$ at the goal. Endpoint hindsight preserves this
path-independent objective, but fails to preserve general accumulated
trajectory rewards. For example, two policies may reach the same spatial
subgoal with accumulated rewards of $1$ and $10$; substituting the
endpoint does not make their returns interchangeable. This mechanism is
therefore not a general correction for hierarchical nonstationarity.

HIRO addresses general trajectory rewards through likelihood-based
relabeling: it chooses a subgoal that explains the recorded primitive
actions under the current low-level policy \citep{nachum2018hiro}.
This also changes which subgoal receives value supervision.
When the label changes, that regression supplies no direct target for
the original high-level decision. Yet the high-level policy needs to
evaluate its recorded decisions and future proposals, including subgoals
that do not explain the historical trajectory.
We ask: \emph{how can replay update the value of a specified subgoal
under the current low-level policy?}

We introduce \emph{Hierarchical Time-aware Bootstrapping} (HTB).
Conditioning on the recorded primitive action exposes the stationary
environment transition law. Remaining execution time specifies whether
the current subgoal continues or a new high-level decision follows.
Together, they provide primitive-step Bellman updates for a chosen
subgoal while preserving the hierarchy's decision period.
HTB mixes these updates with multi-step suffix returns and truncated
relabeling, replacing intermediate predictions with observed rewards.
A shared subgoal value transfers updates across primitive actions;
action-value residuals supply action-specific corrections, with
nonnegativity constraining upward amplification.

Our contributions are the time-aware Bellman formulation, its mixed-backup
and shared-value implementation, and evidence from task success,
component ablations and action-value prediction under limited supervision.

\begin{figure}[t]
  \centering
  \includegraphics[width=\textwidth]{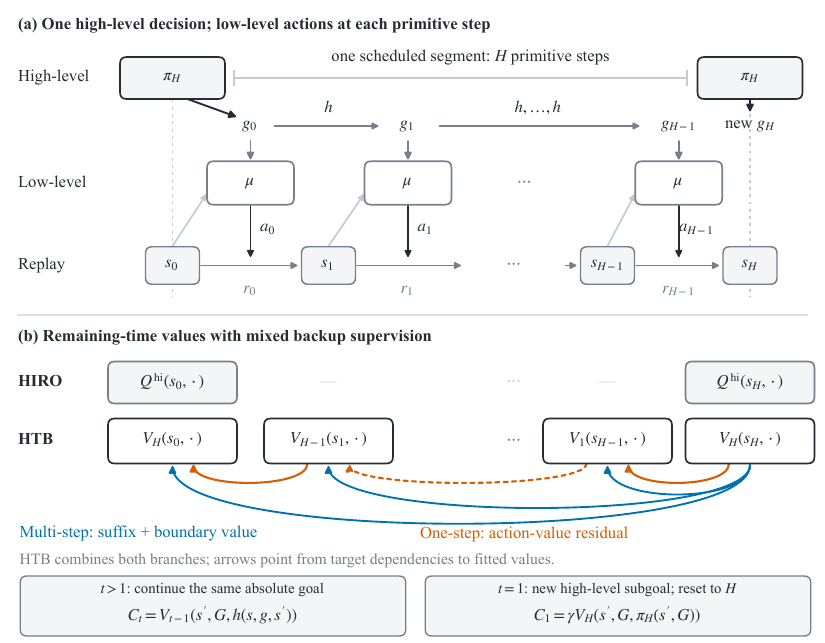}
  \caption{\textbf{Subgoal values can be learned at every primitive step.}
  The high-level policy acts every $H$ steps. HTB learns
  $V_{H-i}(s_i,G,g_i)$ throughout each segment using a mixture of its
  remaining return (blue) and a one-step backup (orange).
  The current subgoal continues until the next high-level decision.
  Arrows run from target dependencies to the value being updated;
  displayed continuations assume no termination.}
  \label{fig:architecture}
\end{figure}

\section{Subgoal evaluation under a changing low-level policy}
\label{sec:problem}
Let $s$ denote the environment state, $G$ the task goal, $g$ a subgoal,
and $a$ a primitive action. A high-level policy $\pi_H(s,G)$ selects a
subgoal every $H$ steps; the low-level policy $\mu(a\mid s,g)$ acts at
each step. For achieved-goal coordinates $\phi(s)$, the goal-transition
function $h(s,g,s')=g+\phi(s)-\phi(s')$ keeps the absolute destination
unchanged. Replay stores a segment
$\tau=(s_0,a_0,r_0,\ldots,s_L)$, its original subgoal $g^{\rm orig}$,
and its observed length $L\leq H$. A segment ends at the scheduled
boundary or episode termination, so $L<H$ indicates early termination.
Here $r$ is the scaled task reward,
not the low-level intrinsic reward.

HIRO chooses a relabeled subgoal $\widetilde g$ by approximately
maximizing the likelihood of the recorded action sequence under the
current low-level policy. Denoting its high-level critic by $Q^{\rm hi}$,
the resulting regression has the form
\begin{equation}
 \mathcal L_{\rm corr}
 =\mathbb E_{\tau\sim\mathcal D}
 \left[\ell\!\left(Q^{\rm hi}(s_0,G,\widetilde g),Y_\tau\right)\right],
 \label{eq:relabel-subtask-loss}
\end{equation}
where $\mathcal D$ is the replay distribution, $\ell$ is a regression
loss, and $Y_\tau$ is a stopped-gradient return target.
The supervised input is therefore the relabeled subgoal. When
$\widetilde g\neq g^{\rm orig}$, the original high-level decision receives
no direct target from this regression, even though parameter sharing may
change its prediction indirectly.

Our objective is to evaluate an explicitly chosen subgoal $g$ under the
current low-level policy, including the original replay subgoal or a
proposal from the current high-level policy. Its value covers the remaining
execution duration and subsequent high-level decisions. Because different
subgoals induce different future actions, we learn primitive-action values
from recorded transitions and evaluate them at the current policy's action.
Remaining time identifies whether the continuation executes the chosen
subgoal or returns control to the high-level policy. This construction
separates the subgoal receiving supervision from the subgoal selected by
likelihood-based correction.

\section{Hierarchical Time-aware Bootstrapping}
\label{sec:method}

\subsection{Remaining-time Bellman updates}
Let $t\in\{1,\ldots,H\}$ be the number of primitive steps until the next
scheduled high-level decision. For fixed policies, $V_t(s,G,g)$ is the
expected task return from executing subgoal $g$ for these $t$ steps and
following the hierarchy afterward. Define $Q_t(s,G,g,a)$ by fixing the
first primitive action to $a$. Thus $V_t$ is a subgoal value, while $Q_t$
also conditions on the primitive action. Both estimate the external task
return, distinct from the intrinsic-reward objective used to train the low-level policy.
At a decision boundary,
$V_H$ evaluates the high-level action.

Within a high-level segment, rewards are summed without discounting.
A factor $\gamma$ is applied at the next high-level decision, matching
the training objective. With terminal indicator $d$, the Bellman equations are
\begin{align}
 Q_t(s,G,g,a)
 &=\mathbb E\!\left[r+(1-d)C_t(s',G,h(s,g,s'))\mid s,a,G\right],
 \label{eq:primitive-bellman}\\
 C_t(s',G,g')
 &=\begin{cases}
 V_{t-1}(s',G,g'), & t>1,\\
 \gamma V_H(s',G,\pi_H(s',G)), & t=1,
 \end{cases}
 \qquad
 V_t=\mathbb E_{a\sim\mu(\cdot\mid s,g)}Q_t(s,G,g,a).
 \label{eq:countdown-bellman}
\end{align}
The two cases distinguish continuing the current subgoal from choosing
the next one (Figure~\ref{fig:architecture}). For example, with $H=10$,
a transition at step $i=4$ updates $V_6$, whose one-step continuation is
$V_5$. A value function that cannot distinguish these remaining durations
can assign incompatible targets to the same input \citep{pardo2018time}
(Appendix~\ref{app:time-aliasing}).

\begin{proposition}[Off-policy primitive-step evaluation]
\label{prop:conditional-replay}
Assume a stationary Markov environment with joint transition law
$P(s',r,d\mid s,a,G)$ independent of the commanded subgoal.
Suppose replay sampling preserves this conditional law, and the
subgoal to be evaluated is fixed or selected independently of the
transition outcome given $(s,a,G)$.
For any fixed bounded continuation, a replay transition has the
conditional Bellman expectation in Equation~\eqref{eq:primitive-bellman},
independently of the data-collection policy.
\end{proposition}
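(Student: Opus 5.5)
We condition on the replayed pair and reduce the claim to a change of measure in which the behavior policy never enters. Let a replay transition be a tuple $(s,G,a,r,s',d)$ together with the subgoal $g$ to be evaluated and a remaining time $t$. Fix a bounded measurable continuation $C_t(s',G,g')$; for example, the current estimate of $V_{t-1}$, or $\gamma V_H$ composed with the current $\pi_H$. It is held fixed during the update, so it is a deterministic function. The replay target is
\[
Y=r+(1-d)\,C_t\bigl(s',G,h(s,g,s')\bigr).
\]
The claim is that $\mathbb E[Y\mid s,a,G,g]$ equals the right-hand side of Equation~\eqref{eq:primitive-bellman}. This conditional expectation must not depend on the policy $\mu_{\rm beh}$ that generated $a$, nor on the subgoal that was in force when the transition was collected.

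First, we write the joint replay law of $(s,G,a,g^{\rm rec},g,r,s',d)$ as a product of factors. The first factor is a marginal $\rho(s,G)$ induced by the replay buffer. The second is a behavior factor $\beta(a,g^{\rm rec}\mid s,G)$, which may depend on stale high- and low-level policies and on the buffer history. The third is the selection law $\kappa(g\mid s,a,G,g^{\rm rec})$ for the evaluated subgoal. The last is the environment kernel $P(s',r,d\mid s,a,G)$. This factorization rests on three assumptions.
\begin{itemize}
\item Stationarity and subgoal-independence of $P$ mean that, once $(s,a,G)$ is fixed, neither $g^{\rm rec}$ nor any policy parameter enters the outcome law.
\item The replay-preservation assumption states that sampling from $\mathcal D$ does not reweight outcomes given $(s,a,G)$. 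Replay may therefore alter $\rho$ and $\beta$ but not the last factor.
\item Independence of the selected subgoal from the outcome states that $\kappa$ does not look at $(s',r,d)$. This holds when $g$ is fixed, is $g^{\rm orig}$, or is a current-policy proposal at $s$.
\end{itemize}

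Next, we condition on $(s,a,G,g)$. Bayes' rule gives the conditional law of $(s',r,d)$ as proportional to $\sum_{g^{\rm rec}}\beta\,\kappa\,P$, where the sum becomes an integral in the continuous case. Because $P$ does not depend on $g^{\rm rec}$, it factors out of the sum. The remaining mass cancels with the normalizer, which leaves exactly $P(s',r,d\mid s,a,G)$, provided $(s,a,G,g)$ has positive replay density. Off that support the conditional expectation is arbitrary and the statement is vacuous, so we state it almost surely. Then $\mathbb E[Y\mid s,a,G,g]=\int \bigl(r+(1-d)C_t(s',G,h(s,g,s'))\bigr)\,dP$. Here $h$ is deterministic given $(s,g,s')$, and $g$ is already conditioned on. Boundedness of $C_t$, together with integrability of $r$, makes the integral finite, so Fubini-type manipulations are legitimate. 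The result is the right-hand side of Equation~\eqref{eq:primitive-bellman}, with no dependence on $\beta$.

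The main obstacle is the conditioning step in the second paragraph. We must justify that whatever the behavior policy and buffer do, the only way they could bias $Y$ is by correlating with the outcome given $(s,a,G)$, and that the assumptions exclude exactly this. Two failure modes are worth naming: selecting $g$ by hindsight from $s'$, and prioritized sampling keyed on $r$. The first approach is to state the measure-theoretic version with regular conditional distributions. We then verify the conditional-independence statement $(s',r,d)\perp (g^{\rm rec},g,\text{policy params})\mid (s,a,G)$. The whole proof reduces to this statement followed by linearity.
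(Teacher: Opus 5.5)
Your proposal is correct and takes essentially the same approach as the paper. Both arguments condition on $(s,a,G)$ and use the Markov property with a subgoal-independent kernel, replay preservation and outcome-independent query selection to identify the conditional outcome law as $P(\cdot\mid s,a,G)$, then substitute it into the expectation of the one-step backup to obtain $\mathcal B_tF$; your explicit factorization and Bayes cancellation simply spell out the conditioning step the paper states in one sentence.
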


The same recorded transition can therefore support evaluation of a
specified subgoal different from its collection subgoal. Current-policy
evaluation uses $Q_t$ on the current policy's action support. The conditional
transition kernel remains stationary while the continuation tracks the
improving hierarchy. Appendix~\ref{app:theory} develops the signed residual
recursion and the sampling and coverage conditions behind this construction.

\subsection{Multi-step returns with truncated goal relabeling}
The implementation uses a deterministic low-level actor, denoted $\mu(s,g)$.
A one-step recursion repeatedly bootstraps through intermediate value
estimates. A multi-step target replaces these intermediate estimates
with observed rewards and bootstraps once at the next high-level boundary.
For a suffix starting at step $i$, let $t=H-i$ and define
\begin{equation}
 y^{\mathrm L}_{t,i}=\sum_{k=i}^{L-1}r_k
 +(1-d_{L-1})\gamma\bar V_H(s_L,G,\widehat g_H(s_L)).
 \label{eq:long-target}
\end{equation}
The implementation uses twin value networks $V_{t,j}$, $j\in\{1,2\}$.
Bars denote target networks, $\bar V_t=\min_j\bar V_{t,j}$, and
$\widehat g_H$ is the target high-level action with target-policy smoothing
\citep{fujimoto2018td3}.

Because the suffix was collected by an earlier low-level policy,
we adapt its subgoal using HIRO-style likelihood correction.
The likelihood must use the same suffix as the return:
\begin{equation}
 \widetilde g_i=\arg\min_{g\in\mathcal C_i}
 \sum_{k=i}^{L-1}
 \left\|a_k-\mu\!\left(s_k,g+\phi(s_i)-\phi(s_k)\right)\right\|_2^2,
 \label{eq:suffix-correction-main}
\end{equation}
where $\mathcal C_i$ contains the transported replay subgoal, the
achieved displacement, and Gaussian candidates around that displacement.
This is \emph{truncated goal relabeling}: actions before $i$ are excluded.
Figures abbreviate this operation as suffix correction.
Each suffix thus selects a goal for its own action window; the full
segment defines a different likelihood objective.
The multi-step loss fits $V_t(s_i,G,\widetilde g_i)$ to
$y^{\mathrm L}_{t,i}$, while the one-step update retains its explicitly
specified subgoal.

The observed reward sum removes the learned intermediate continuations
from this target. Its remaining estimation errors include the boundary
value and approximate likelihood correction; Appendix~\ref{app:multistep-error}
separates these sources.
Early termination removes the boundary bootstrap. Remaining time is
scheduled time $H-i$, even when $L<H$; only observed steps contribute.

\subsection{Shared subgoal values and action-value residuals}
Primitive-action conditioning introduces an additional input dimension.
When the low-level policy changes, the new action may have little direct
training data. Inspired by dueling value decomposition
\citep{wang2016dueling}, we share a subgoal value across primitive actions:
\begin{align}
 \widehat Q_{t,j}(s,G,g,a)&=V_{t,j}(s,G,g)-A_t(s,G,g,a),
 \label{eq:query-reconstruction-main}\\
 \Delta\widehat Q_{t,j}(s,G,g,a)&=\Delta V_{t,j}(s,G,g)
 \quad\text{for every }a\text{ when }A_t\text{ is fixed}.
 \label{eq:shared-value-update}
\end{align}
The network $A_t$ estimates an \emph{action-value residual}. For fixed
policies, its exact signed form is
$A_t^{\mu,\pi}=V_t^{\mu,\pi}-Q_t^{\mu,\pi}$, the negative of the
conventional primitive-action advantage. The learned $A_t$ is a
nonnegative approximation. The shared $V_t(s,G,g)$ acts as an
action-independent value offset (bias), while $A_t$ supplies action-specific
corrections. Updating this offset contributes the same shift to every
action value, including actions not supervised by that update
(Equation~\eqref{eq:shared-value-update}; Figure~\ref{fig:backups}c).

For a specified subgoal $g_i$, put
$g_i^+=h(s_i,g_i,s_{i+1})$. HTB corrects the next-step continuation
using the next residual:
\begin{align}
 \bar C_{t,i}
 &=\begin{cases}
 \bar V_{t-1}(s_{i+1},G,g_i^+)
 -\bar A_{t-1}(s_{i+1},G,g_i^+,\mu(s_{i+1},g_i^+)), & t>1,\\
 \gamma\bar V_H(s_{i+1},G,\widehat g_H(s_{i+1})), & t=1,
 \end{cases}
 \label{eq:recursive-continuation}\\
 y^A_{t,i}
 &=\left[\bar V_t(s_i,G,g_i)-r_i-(1-d_i)\bar C_{t,i}\right]_+,
 \label{eq:gap-target}
\end{align}
where $[x]_+=\max(x,0)$. At fixed $(s_i,G,g_i,t)$, the one-step
value target averages over the low-level policy's primitive action:
\begin{align}
 y^{\mathrm S}_{t,i}
 &=\mathbb E_{a\sim\mu(\cdot\mid s_i,g_i)}
 \!\left[\bar V_t(s_i,G,g_i)-\bar A_t(s_i,G,g_i,a)\right]\nonumber\\
 &=\bar V_t(s_i,G,g_i)-\bar A_t(s_i,G,g_i,\mu(s_i,g_i))
 \quad\text{for deterministic }\mu.
 \label{eq:short-target-expectation}
\end{align}
With target-policy smoothing, the implementation uses the sampled target
\begin{equation}
 \widehat y^{\mathrm S}_{t,i}
 =\bar V_t(s_i,G,g_i)-\bar A_t(s_i,G,g_i,\widehat a_i).
 \label{eq:short-target}
\end{equation}
Here $\widehat a_i$ adds clipped smoothing noise to the current low-level
action and clips to the action bounds. It targets the smoothed policy;
Equation~\eqref{eq:short-target-expectation} gives the noise-free case.
The residual network is fitted to $y^A_{t,i}$ at the \emph{recorded}
action $a_i$. It is then evaluated at the \emph{current-policy}
action to construct $\widehat y^{\mathrm S}_{t,i}$, the target for
$V_{t,j}(s_i,G,g_i)$.
This distinction is what allows recorded transitions to evaluate a
specified subgoal under the current policy.
All targets are stopped-gradient; the residual loss updates $A$ and
the value losses update $V$. The next-state action in $\bar C_{t,i}$
is unsmoothed.

\paragraph{Nonnegative residuals.}
The practical algorithm constrains both the residual output and target
to be nonnegative. For a scalar reference $q$ and Bellman target $b$,
exactly fitting the projected residual gives
$q-[q-b]_+=\min(q,b)$.
The reconstructed value therefore cannot exceed its current reference.
This one-sided correction filters upward target amplification relative
to the learned reference, including amplification introduced by
maximization. The cap is reference-relative: its effect on overestimation
and underestimation depends on the accuracy of that reference.
Appendix~\ref{app:theory} characterizes the projection and its signed counterpart.

\begin{figure}[t]
  \centering
  \includegraphics[width=\textwidth]{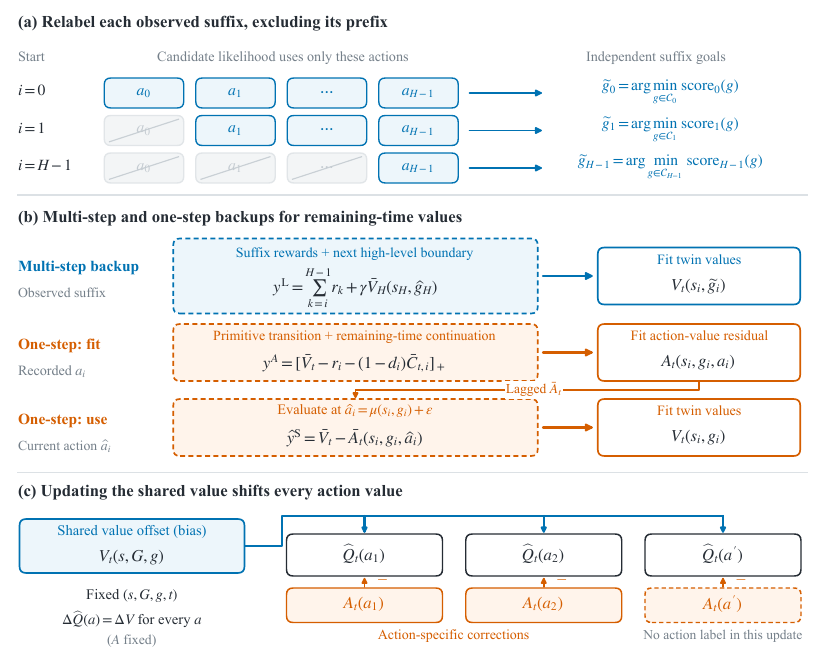}
  \caption{\textbf{Multi-step and one-step updates can supervise different subgoals.}
  (a) Each suffix is relabeled using only its recorded actions.
  (b) The multi-step target trains the relabeled subgoal; the one-step
  estimator fits the recorded action and evaluates the current-policy
  action for the specified subgoal. (c) Updating $V$ changes the shared
  value offset in every primitive-action value, including actions without
  direct supervision; for fixed $A$, $\Delta\widehat Q(a)=\Delta V$.
  Repeated task-goal
  inputs and action clipping are omitted; the diagram shows a nonterminal segment.}
  \label{fig:backups}
\end{figure}

\subsection{Combined objective and ablations}
Using smooth-$L_1$ loss $\ell$ and $\lambda\in[0,1]$, the value objective is
\begin{align}
 \mathcal L_V
 &=\lambda\,\mathbb E_{\mathcal D}\sum_{j=1}^2
 \ell\!\left(V_{t,j}(s_i,G,\widetilde g_i),y^{\mathrm L}_{t,i}\right)
 \nonumber\\
 &\quad +(1-\lambda)\,\mathbb E_{\mathcal D}\sum_{j=1}^2
 \ell\!\left(V_{t,j}(s_i,G,g_i),\widehat y^{\mathrm S}_{t,i}\right).
 \label{eq:mixed-loss}
\end{align}
We average valid samples within each remaining-time branch and sum
over branches. The weight $\lambda$ controls multi-step learning;
$\lambda=1$ removes one-step value supervision.
For $\lambda<1$, the one-step term directly updates the specified
subgoal whenever its prediction differs from the target, even if
relabeling selects another subgoal for the multi-step loss
(Appendix~\ref{app:specified-updates}).

Table~\ref{tab:method-variants} defines the ablations.
The default uses the transported replay subgoal in the one-step loss.
The current-policy-subgoal variant uses a stopped-gradient proposal
$\pi_t(s_i,G)$. Disabling relabeling changes only the subgoal used
in the multi-step loss. Only $\pi_H$ executes during interaction;
shorter-duration actors provide proposals for the corresponding replay
updates. Algorithm~\ref{alg:replay-update} and implementation details
appear in Appendix~\ref{app:implementation}.

\begin{table}[t]
\centering
\caption{\textbf{Controlled variants of the HTB update.}
All variants use remaining-time values and the same $V-A$ architecture.
The last two variants use the nonrecursive residual estimator.}
\label{tab:method-variants}
\small
\begin{tabular}{llll}
\toprule
Variant & One-step subgoal & Next residual & Output / target\\
\midrule
HTB & Replay & Included & Nonnegative / clipped\\
No next residual & Replay & Omitted & Nonnegative / clipped\\
Signed residual & Replay & Omitted & Signed / signed\\
Policy subgoal & Current proposal & Omitted & Nonnegative / clipped\\
\bottomrule
\end{tabular}
\end{table}

\IfFileExists{sections/results.tex}{\section{Experiments}
\label{sec:experiments}
The experiments test whether specified-subgoal value updates improve
hierarchical control, which backup components produce the gains, and how
the shared value representation supports action-conditioned estimation.
Push and Fall combine ant locomotion with object manipulation.
Reacher, Maze, and MazeW provide goal-reaching tasks with dense and
sparse rewards. Push/Fall comparisons use five paired seeds and
\PFTargetMillions{} million environment steps; Reacher/Maze comparisons
use eight paired seeds and one million steps.
Success is measured at the final episode step for Push/Fall, and averaged
over the last twenty evaluations for Reacher/Maze.
The main comparisons use a fixed multi-step weight $\lambda=.95$ and
nominal $95\%$ paired-bootstrap intervals. Highlighted component settings
were selected retrospectively; the appendix includes the full mixture
and relabeling sweeps, with both effect directions.
Appendices~\ref{app:training-protocol},
\ref{app:empirical}, and~\ref{app:vthree-evidence} give protocols,
complete results, and statistical procedures.

\subsection{Task success at a fixed backup configuration}
\label{sec:performance}
HTB improves Fall success at the same interaction budget as the local
HIRO implementation (Figure~\ref{fig:vthree-learning}).
With truncated relabeling, it reaches
\VFourFallCorrSuccess{}\% success, compared with
\VFourFallHiroSuccess{}\% for HIRO.
The paired improvement is \VFourFallCorrDelta{} percentage points
[\VFourFallCorrLo{}, \VFourFallCorrHi{}].
On Push, the respective means are \VFourPushCorrSuccess{}\% and
\VFourPushHiroSuccess{}\%; the interval
[\VFourPushCorrLo{}, \VFourPushCorrHi{}] includes zero.
The return curves and integrated-return comparisons are reported
separately in Figure~\ref{fig:vfour-return} and Table~\ref{tab:vthree-auc}.

\begin{figure}[t]
\centering
\includegraphics[width=\textwidth]{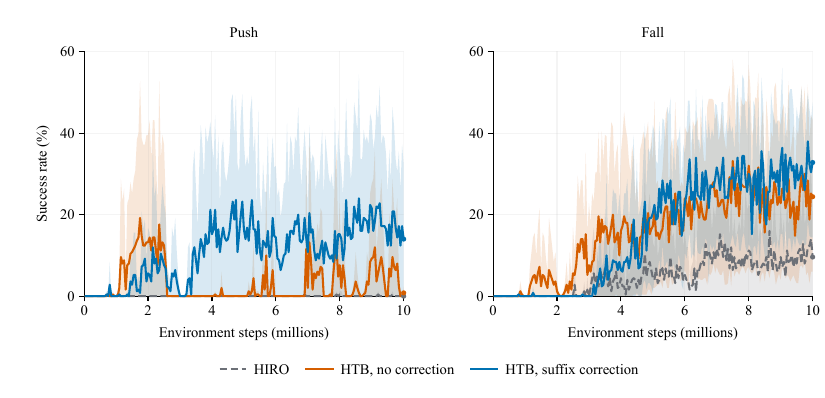}
\caption{\textbf{HTB improves success on Fall at a fixed configuration.}
Success over training, averaged across five paired seeds.
Both HTB conditions use $\lambda=.95$; HIRO uses its original
off-policy correction. Shading denotes pointwise $95\%$ empirical
bootstrap intervals. No temporal smoothing is applied.}
\label{fig:vthree-learning}
\end{figure}

The fixed-setting comparison also extends to Reacher and Maze
(Table~\ref{tab:vthree-fixed}).
On Reacher-v1 with sparse rewards, with relabeling disabled in all
compared algorithms, HTB achieves \VThreeHiroSparseLeft{}\% success
versus \VThreeHiroSparseRight{}\% for the archived HIRO baseline,
a difference of \VThreeHiroSparseMean{} points
[\VThreeHiroSparseLo{}, \VThreeHiroSparseHi{}].
The multi-step-only baseline reaches \VThreeLongSparseLeft{}\% on this
task, so high success is also attainable with multi-step-only supervision.
These fixed-setting results evaluate complete algorithms; the matched
controls below assess residual recursion and mixed supervision.

\subsection{Residual recursion and mixed backups improve success}
\label{sec:vthree-components}
To measure the contribution of recursive continuation, the first control
removes only the next-step residual from
Equation~\eqref{eq:recursive-continuation}.
It retains the same subgoal, value decomposition, nonnegative constraint,
relabeling condition, and mixture weight.
Restoring residual recursion improves success by
\VFourNextFallMean{} points [\VFourNextFallLo{}, \VFourNextFallHi{}]
on Fall and \VFourNextReacherMean{} points
[\VFourNextReacherLo{}, \VFourNextReacherHi{}] on Reacher-v0 dense
(Figure~\ref{fig:vfour-components}a).

The second control compares the mixed objective at $\lambda=.95$
with multi-step-only learning at $\lambda=1$, keeping the remaining-time
architecture and relabeling condition fixed.
The mixture improves Fall success by
\VFourShortFallCorrMean{} points
[\VFourShortFallCorrLo{}, \VFourShortFallCorrHi{}] with relabeling
and \VFourShortFallNoCorrMean{} points
[\VFourShortFallNoCorrLo{}, \VFourShortFallNoCorrHi{}] without it
(Figure~\ref{fig:vfour-components}b).
The intervention changes both objective weights and therefore measures
their combined effect. Complete controls, including tasks without
improvement, appear in
Tables~\ref{tab:vthree-short} and~\ref{tab:vthree-components-all}.

\begin{figure}[t]
\centering
\includegraphics[width=\textwidth]{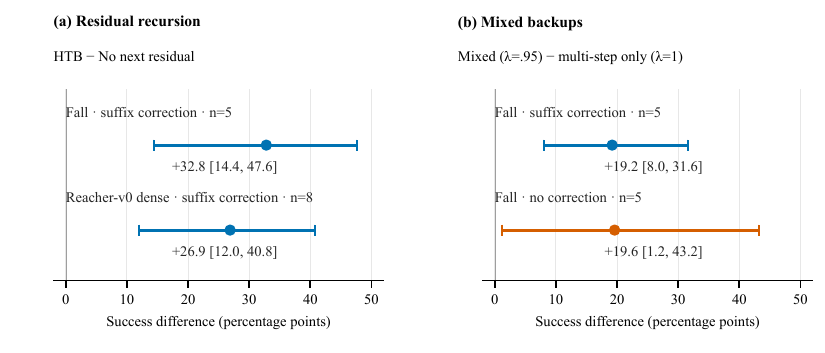}
\caption{\textbf{Residual recursion and mixed backups contribute to task success.}
(a) Full HTB minus the variant without the next-step residual,
both at $\lambda=.95$. (b) Mixed backups at $.95$ minus multi-step-only
learning at $1$. Points are paired mean success differences; bars are
nominal $95\%$ bootstrap intervals.}
\label{fig:vfour-components}
\end{figure}

The nonnegative residual provides a separate, one-sided constraint.
In the nonrecursive ablation, imposing nonnegative outputs and
targets increases Reacher-v1 dense success with relabeling at
$\lambda=.6$ by \VThreeProjectionPositiveMean{} points
[\VThreeProjectionPositiveLo{}, \VThreeProjectionPositiveHi{}].
The complete task grid shows both positive and negative effects
(Tables~\ref{tab:vthree-projection-reacher}--\ref{tab:vthree-projection-pushfall}).
The factorial and finite-MDP diagnostics in
Appendix~\ref{app:vseven-diagnostics} clarify this constraint's role:
lower estimates can reduce optimism while increasing underestimation
and total estimation error.

\subsection{Why separate subgoal supervision from behavior correction?}
\label{sec:routing-measurement}
\label{sec:gradient-measurement}
Figure~\ref{fig:gradients} tests two properties of the combined update:
which subgoal receives a target, and whether the two losses make
compatible parameter updates. We analyze \DiagSamples{} replay
transitions per trained policy instance, across \DiagN{} instances.

\paragraph{(a) Relabeling redirects supervision away from the original subgoal.}
Among the \MechCorrectionCheckpoints{} correction-trained instances,
only \MechRetentionMean{}\% of valid multi-step inputs retain the
subgoal recorded in replay. The other targets supervise different
subgoals. HTB's one-step loss instead uses its explicitly designated
subgoal; the hollow squares show this input rule, not measured optimizer
counts. Thus the one-step branch supplies direct supervision that
likelihood correction alone does not preserve.

\paragraph{(b) Preserving the subgoal does not ensure compatible updates.}
For \MechMatchedGradientPolicies{} policy instances, we compare the
unweighted multi-step and one-step losses $\mathcal L_L,\mathcal L_S$
on the same replay minibatch. Both gradients use the same online
parameters $\theta_V$, covering the twin values and all remaining-time heads:
\begin{equation}
 c_{L,S}=\frac{\langle\nabla_{\theta_V}\mathcal L_L,
                         \nabla_{\theta_V}\mathcal L_S\rangle}
 {\|\nabla_{\theta_V}\mathcal L_L\|_2
  \|\nabla_{\theta_V}\mathcal L_S\|_2}.
 \label{eq:value-gradient-cosine}
\end{equation}
Positive values indicate aligned descent directions; zero indicates
orthogonality and negative values indicate conflict.
For Reacher-v1 sparse HTB at $\lambda=.95$, the mean cosine is
\DiagSparseCosR{} with correction and \DiagSparseCosN{} without it.
The losses therefore align more strongly in the no-correction condition
for this setting. Panel (a) motivates explicit subgoal supervision;
panel (b) shows why its compatibility with the multi-step loss is a
separate design question. Task-success effects are tested by the
controlled training comparisons.

\begin{figure}[t]
\centering
\includegraphics[width=\textwidth]{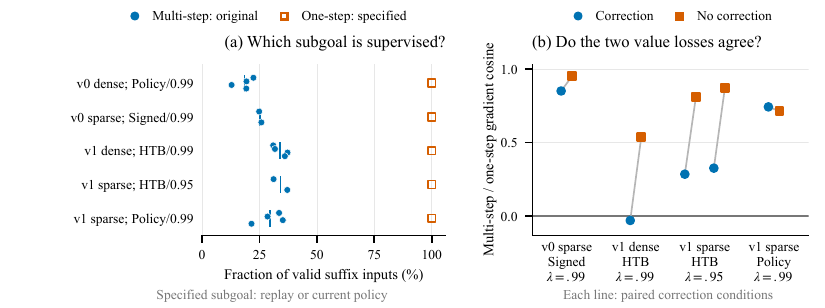}
\caption{\textbf{Direct subgoal supervision and loss compatibility are distinct.}
(a) Circles measure retention of the original subgoal after multi-step
correction; hollow squares mark the designated one-step input rule.
(b) Each line connects correction/no-correction measurements from one
paired repeat, grouped by task, variant and $\lambda$. Each point is
the multi-step--one-step gradient cosine within that model
(Equation~\eqref{eq:value-gradient-cosine}), averaged over two minibatches.
\emph{Signed} and \emph{Policy} are the nonrecursive signed-residual
and current-policy-subgoal variants.}
\label{fig:gradients}
\end{figure}

The performance ablations show that the effect of relabeling depends
on the backup mixture.
On Reacher-v1 sparse at $\lambda=.95$, disabling relabeling raises
HTB success from \VFourSparseCorrectionRight{}\% to
\VFourSparseCorrectionLeft{}\%, a paired improvement of
\VFourSparseCorrectionMean{} points
[\VFourSparseCorrectionLo{}, \VFourSparseCorrectionHi{}].
Across the interior Reacher grid, \ReacherWins{} of
\ReacherConfigs{} configurations have higher mean success without
relabeling. In Push/Fall, the benefit of correction reverses as $\lambda$
changes (Figures~\ref{fig:pushfall}--\ref{fig:reacher}).
This interaction supports HTB's separation of multi-step behavior
correction from one-step supervision at the specified subgoal: their
weights and correction settings control distinct learning effects.

\subsection{Can shared values estimate actions without direct supervision?}
\label{sec:shared-estimation}
We test whether fitting a few actions at a state--subgoal input improves
predictions for another action whose return was excluded from fitting.
The high- and low-level policies are frozen. From states sampled along
their trajectories, we execute a candidate primitive action for one
step, then follow the fixed hierarchy until the episode ends.
The accumulated task reward gives a reference for $Q_t(s,G,g,a)$,
with the training reward scale and boundary discount and no critic bootstrap.

We compare a direct-$Q$ critic with a shared $V-A$ critic. The latter
uses an unconstrained residual to isolate value sharing. Parameter
counts are nearly equal; batches and supervision are identical.
At each $(s,G,g,t)$,
the policy action and a Gaussian-perturbed action supply training returns.
Both critics also receive the same return from following the policy,
supervising $V$ or direct-$Q$ at the policy action.
A third, uniformly sampled action supplies a test return that never
enters a loss. Table~\ref{tab:vseven-representation} reports two tests:
all three action types on separate evaluation trajectories, and the
excluded actions at the training inputs.

After \VSevenUpdates{} updates, sharing lowers mean absolute error
(MAE) in both tests for all \VSevenMAEImprovedUnits{} policy instances.
For actions excluded from fitting, dense-task MAE decreases from
\VSevenDenseDirectWithinStateMean{} to \VSevenDenseSharedWithinStateMean{},
and sparse-task MAE from \VSevenSparseDirectWithinStateMean{} to
\VSevenSparseSharedWithinStateMean{}.
This supports the intended benefit of the shared value: learning from
other actions improves estimates beyond the directly supervised choices.

\begin{table}[t]
\centering
\caption{\textbf{Shared values improve estimates for actions without direct labels.}
MAE against complete rollout returns (mean $\pm$ SD across
\VSevenPolicySeeds{} policy seeds; lower is better).
Left: all three action types on trajectories excluded from fitting and
calibration. Right: uniformly sampled actions at training inputs
$(s,G,g,t)$, whose return labels enter no loss.}
\label{tab:vseven-representation}
\small
\begin{tabular}{lrrrr}
\toprule
 & \multicolumn{2}{c}{Separate test trajectories} & \multicolumn{2}{c}{Actions excluded from fitting}\\
Task & Direct-$Q$ & Shared $V-A$ & Direct-$Q$ & Shared $V-A$\\
\midrule
Reacher-v0 dense & $\VSevenDenseDirectAllMean{}\!\pm\!\VSevenDenseDirectAllSd{}$ & $\VSevenDenseSharedAllMean{}\!\pm\!\VSevenDenseSharedAllSd{}$ & $\VSevenDenseDirectWithinStateMean{}\!\pm\!\VSevenDenseDirectWithinStateSd{}$ & $\VSevenDenseSharedWithinStateMean{}\!\pm\!\VSevenDenseSharedWithinStateSd{}$ \\
Reacher-v1 sparse & $\VSevenSparseDirectAllMean{}\!\pm\!\VSevenSparseDirectAllSd{}$ & $\VSevenSparseSharedAllMean{}\!\pm\!\VSevenSparseSharedAllSd{}$ & $\VSevenSparseDirectWithinStateMean{}\!\pm\!\VSevenSparseDirectWithinStateSd{}$ & $\VSevenSparseSharedWithinStateMean{}\!\pm\!\VSevenSparseSharedWithinStateSd{}$ \\
\bottomrule
\end{tabular}
\end{table}

The test-trajectory result persists after exact-input deduplication.
Subgoal-selection regret varies across tasks, separating this estimation
benefit from decision quality. Appendix~\ref{app:vseven-diagnostics}
provides the full protocols, learning curves and sensitivity analyses.
}{}
\section{Related work}
\label{sec:related}
HIRO corrects recorded behavior \citep{nachum2018hiro}, while HAC uses
path-independent goal-reaching rewards under idealized lower-level
execution \citep{levy2019hac}. Other relabeling methods address preferences
and demonstrations \citep{singh2024piper,singh2025pear}.
Intra-option learning reuses primitive experience across extended actions
\citep{sutton1998intra,sutton1999options}; goal-conditioned values,
hindsight goals and subgoal generation provide complementary mechanisms
\citep{schaul2015uvfa,andrychowicz2017her,zhang2020hrac,kim2021higl}.
Time-aware values and timed subgoals motivate temporal conditioning
\citep{pardo2018time,gurtler2021hits}; HTB uses the next high-level
decision as its continuation boundary.
Dueling networks and normalized advantages share values across actions
\citep{wang2016dueling,gu2016naf}; HTB adds a separately supervised
action-value residual. Twin critics and smoothing follow TD3
\citep{fujimoto2018td3}; the loss mixture differs from Retrace's
importance-weighted return operator \citep{munos2016retrace}.

\section{Discussion and conclusion}
HTB combines specified-subgoal evaluation, time-aware backups and shared
action values while retaining task rewards. Its conditional identities
require Markov states and action coverage; joint actor--critic convergence
remains open. Sparse Maze and MazeW motivate better update-goal selection.
The task and estimation results support its central capability: replay
correction need not determine which subgoal receives value supervision.

\label{lastmain}
\clearpage
\section*{AI use statement}
We used generative AI tools for readability editing of the manuscript. We have not used generative AI tools for proposing or refining the hypotheses, designing or critiquing the methodology or experiments, implementing the methods, interpreting the results, formulating the mathematical claims, or writing the proofs; generating synthetic datasets and qualitative data analysis are not applicable to this work. We have reviewed all AI-assisted work: every number in the paper was read back from the released derived data files by an author, the figure-generation code was checked against those files, and every theorem statement and proof was written and verified by the authors. We take responsibility for the final content of this work, including text, claims, and artifacts produced with the aid of generative AI.

\section*{Ethics statement}
This work is a methodological study of reinforcement learning algorithms.
It involves no human subjects, personal data, or real-world deployment.
We believe that no additional ethical issues arise beyond the general
question of how empirical claims in this field are supported, which this
paper addresses directly.

\section*{Reproducibility statement}
Appendix~\ref{app:theory} gives the assumptions and proofs of the conditional
Bellman results. Appendix~\ref{app:implementation} specifies the backup and
termination conventions. The empirical appendices record sample counts,
measurement protocols, paired comparisons, and analysis limitations.
Appendix~\ref{app:vseven-diagnostics} specifies the frozen-policy
and finite-MDP diagnostics. Validation scripts regenerate the statistics
from retained seed-level results. 

\bibliography{references}

@inproceedings{nachum2018hiro,
  author = {Nachum, Ofir and Gu, Shixiang and Lee, Honglak and Levine, Sergey},
  title = {Data-Efficient Hierarchical Reinforcement Learning},
  booktitle = {Advances in Neural Information Processing Systems},
  volume = {31},
  pages = {3303--3313},
  publisher = {Curran Associates, Inc.},
  year = {2018},
  url = {https://proceedings.neurips.cc/paper/2018/hash/e6384711491713d29bc63fc5eeb5ba4f-Abstract.html}
}

@inproceedings{levy2019hac,
  author = {Levy, Andrew and Konidaris, George and Platt, Robert and Saenko, Kate},
  title = {Learning Multi-Level Hierarchies with Hindsight},
  booktitle = {International Conference on Learning Representations},
  year = {2019},
  url = {https://openreview.net/forum?id=ryzECoAcY7}
}

@inproceedings{gurtler2021hits,
  author = {G{\"u}rtler, Nico and B{\"u}chler, Dieter and Martius, Georg},
  title = {Hierarchical Reinforcement Learning with Timed Subgoals},
  booktitle = {Advances in Neural Information Processing Systems},
  volume = {34},
  pages = {21732--21743},
  publisher = {Curran Associates, Inc.},
  year = {2021},
  url = {https://proceedings.neurips.cc/paper/2021/hash/b59c21a078fde074a6750e91ed19fb21-Abstract.html}
}

@inproceedings{zhang2020hrac,
  author = {Zhang, Tianren and Guo, Shangqi and Tan, Tian and Hu, Xiaolin and Chen, Feng},
  title = {Generating Adjacency-Constrained Subgoals in Hierarchical Reinforcement Learning},
  booktitle = {Advances in Neural Information Processing Systems},
  volume = {33},
  pages = {21579--21590},
  publisher = {Curran Associates, Inc.},
  year = {2020},
  url = {https://proceedings.neurips.cc/paper/2020/hash/f5f3b8d720f34ebebceb7765e447268b-Abstract.html}
}

@inproceedings{kim2021higl,
  author = {Kim, Junsu and Seo, Younggyo and Shin, Jinwoo},
  title = {Landmark-Guided Subgoal Generation in Hierarchical Reinforcement Learning},
  booktitle = {Advances in Neural Information Processing Systems},
  volume = {34},
  pages = {28336--28349},
  publisher = {Curran Associates, Inc.},
  year = {2021},
  url = {https://proceedings.neurips.cc/paper/2021/hash/ee39e503b6bedf0c98c388b7e8589aca-Abstract.html}
}

@inproceedings{schaul2015uvfa,
  author = {Schaul, Tom and Horgan, Daniel and Gregor, Karol and Silver, David},
  title = {Universal Value Function Approximators},
  booktitle = {Proceedings of the 32nd International Conference on Machine Learning},
  series = {Proceedings of Machine Learning Research},
  volume = {37},
  pages = {1312--1320},
  publisher = {PMLR},
  year = {2015},
  url = {https://proceedings.mlr.press/v37/schaul15.html}
}

@inproceedings{sutton1998intra,
  author = {Sutton, Richard S. and Precup, Doina and Singh, Satinder},
  title = {Intra-Option Learning about Temporally Abstract Actions},
  booktitle = {Proceedings of the Fifteenth International Conference on Machine Learning},
  pages = {556--564},
  year = {1998},
  url = {https://icml.cc/Conferences/1998/papers/paper162.html}
}

@article{sutton1999options,
  author = {Sutton, Richard S. and Precup, Doina and Singh, Satinder},
  title = {Between {MDPs} and semi-{MDPs}: A Framework for Temporal Abstraction in Reinforcement Learning},
  journal = {Artificial Intelligence},
  volume = {112},
  pages = {181--211},
  year = {1999},
  doi = {10.1016/S0004-3702(99)00052-1},
  url = {https://doi.org/10.1016/S0004-3702(99)00052-1}
}

@inproceedings{fujimoto2018td3,
  author = {Fujimoto, Scott and van Hoof, Herke and Meger, David},
  title = {Addressing Function Approximation Error in Actor-Critic Methods},
  booktitle = {Proceedings of the 35th International Conference on Machine Learning},
  series = {Proceedings of Machine Learning Research},
  volume = {80},
  pages = {1587--1596},
  publisher = {PMLR},
  year = {2018},
  url = {https://proceedings.mlr.press/v80/fujimoto18a.html}
}

@inproceedings{agarwal2021precipice,
  author = {Agarwal, Rishabh and Schwarzer, Max and Castro, Pablo Samuel and Courville, Aaron and Bellemare, Marc G.},
  title = {Deep Reinforcement Learning at the Edge of the Statistical Precipice},
  booktitle = {Advances in Neural Information Processing Systems},
  volume = {34},
  pages = {29304--29320},
  publisher = {Curran Associates, Inc.},
  year = {2021},
  url = {https://proceedings.neurips.cc/paper/2021/hash/f514cec81cb148559cf475e7426eed5e-Abstract.html}
}

@inproceedings{andrychowicz2017her,
  author = {Andrychowicz, Marcin and Wolski, Filip and Ray, Alex and Schneider, Jonas and Fong, Rachel and Welinder, Peter and McGrew, Bob and Tobin, Josh and Abbeel, Pieter and Zaremba, Wojciech},
  title = {Hindsight Experience Replay},
  booktitle = {Advances in Neural Information Processing Systems},
  volume = {30},
  pages = {5049--5059},
  publisher = {Curran Associates, Inc.},
  year = {2017},
  url = {https://proceedings.neurips.cc/paper/2017/hash/453fadbd8a1a3af50a9df4df899537b5-Abstract.html}
}

@inproceedings{munos2016retrace,
  author = {Munos, R{\'e}mi and Stepleton, Thomas and Harutyunyan, Anna and Bellemare, Marc G.},
  title = {Safe and Efficient Off-Policy Reinforcement Learning},
  booktitle = {Advances in Neural Information Processing Systems},
  volume = {29},
  pages = {1054--1062},
  publisher = {Curran Associates, Inc.},
  year = {2016},
  url = {https://proceedings.neurips.cc/paper/2016/hash/c3992e9a68c5ae12bd18488bc579b30d-Abstract.html}
}

@inproceedings{singh2024piper,
  author = {Singh, Utsav and Suttle, Wesley A. and Sadler, Brian M. and Namboodiri, Vinay P. and Bedi, Amrit Singh},
  title = {{PIPER}: Primitive-Informed Preference-based Hierarchical Reinforcement Learning via Hindsight Relabeling},
  booktitle = {Proceedings of the 41st International Conference on Machine Learning},
  series = {Proceedings of Machine Learning Research},
  volume = {235},
  pages = {45681--45695},
  publisher = {PMLR},
  year = {2024},
  url = {https://proceedings.mlr.press/v235/singh24e.html}
}

@inproceedings{singh2025pear,
  author = {Singh, Utsav and Namboodiri, Vinay Purushothaman},
  title = {{PEAR}: Primitive Enabled Adaptive Relabeling for Boosting Hierarchical Reinforcement Learning},
  booktitle = {International Conference on Learning Representations},
  year = {2025},
  url = {https://proceedings.iclr.cc/paper_files/paper/2025/hash/a42fa12a321b3594317b1a60a6dbccae-Abstract-Conference.html}
}

@inproceedings{wang2016dueling,
  title = {Dueling Network Architectures for Deep Reinforcement Learning},
  author = {Wang, Ziyu and Schaul, Tom and Hessel, Matteo and van Hasselt, Hado and Lanctot, Marc and de Freitas, Nando},
  booktitle = {Proceedings of the 33rd International Conference on Machine Learning},
  series = {Proceedings of Machine Learning Research},
  volume = {48}, pages = {1995--2003}, year = {2016}, publisher = {PMLR},
  url = {https://proceedings.mlr.press/v48/wangf16.html}
}

@inproceedings{pardo2018time,
  title = {Time Limits in Reinforcement Learning},
  author = {Pardo, Fabio and Tavakoli, Arash and Levdik, Vitaly and Kormushev, Petar},
  booktitle = {Proceedings of the 35th International Conference on Machine Learning},
  series = {Proceedings of Machine Learning Research},
  volume = {80}, pages = {4045--4054}, year = {2018}, publisher = {PMLR},
  url = {https://proceedings.mlr.press/v80/pardo18a.html}
}

@inproceedings{gu2016naf,
  title = {Continuous Deep Q-Learning with Model-based Acceleration},
  author = {Gu, Shixiang and Lillicrap, Timothy and Sutskever, Ilya and Levine, Sergey},
  booktitle = {Proceedings of the 33rd International Conference on Machine Learning},
  series = {Proceedings of Machine Learning Research},
  volume = {48}, pages = {2829--2838}, year = {2016}, publisher = {PMLR},
  url = {https://proceedings.mlr.press/v48/gu16.html}
}

@inproceedings{vezhnevets2017fun,
  author = {Vezhnevets, Alexander Sasha and Osindero, Simon and Schaul, Tom and Heess, Nicolas and Jaderberg, Max and Silver, David and Kavukcuoglu, Koray},
  title = {{FeUdal} Networks for Hierarchical Reinforcement Learning},
  booktitle = {Proceedings of the 34th International Conference on Machine Learning},
  series = {Proceedings of Machine Learning Research},
  volume = {70},
  pages = {3540--3549},
  publisher = {PMLR},
  year = {2017},
  url = {https://proceedings.mlr.press/v70/vezhnevets17a.html}
}

@inproceedings{eysenbach2019diayn,
  author = {Eysenbach, Benjamin and Gupta, Abhishek and Ibarz, Julian and Levine, Sergey},
  title = {Diversity Is All You Need: Learning Skills without a Reward Function},
  booktitle = {International Conference on Learning Representations},
  year = {2019},
  url = {https://openreview.net/forum?id=SJx63jRqFm}
}

@inproceedings{sharma2020dads,
  author = {Sharma, Archit and Gu, Shixiang and Levine, Sergey and Kumar, Vikash and Hausman, Karol},
  title = {Dynamics-Aware Unsupervised Discovery of Skills},
  booktitle = {International Conference on Learning Representations},
  year = {2020},
  url = {https://openreview.net/forum?id=HJgLZR4KvH}
}

@inproceedings{bacon2017optioncritic,
  author = {Bacon, Pierre-Luc and Harb, Jean and Precup, Doina},
  title = {The Option-Critic Architecture},
  booktitle = {Proceedings of the AAAI Conference on Artificial Intelligence},
  volume = {31},
  number = {1},
  pages = {1726--1734},
  year = {2017},
  doi = {10.1609/aaai.v31i1.10916},
  url = {https://ojs.aaai.org/index.php/AAAI/article/view/10916}
}

@inproceedings{nachum2019representation,
  author = {Nachum, Ofir and Gu, Shixiang and Lee, Honglak and Levine, Sergey},
  title = {Near-Optimal Representation Learning for Hierarchical Reinforcement Learning},
  booktitle = {International Conference on Learning Representations},
  year = {2019},
  url = {https://openreview.net/forum?id=H1emus0qF7}
}

@inproceedings{park2023hiql,
  author = {Park, Seohong and Ghosh, Dibya and Eysenbach, Benjamin and Levine, Sergey},
  title = {{HIQL}: Offline Goal-Conditioned {RL} with Latent States as Actions},
  booktitle = {Advances in Neural Information Processing Systems},
  volume = {36},
  pages = {34866--34891},
  publisher = {Curran Associates, Inc.},
  year = {2023},
  url = {https://proceedings.neurips.cc/paper_files/paper/2023/hash/6d7c4a0727e089ed6cdd3151cbe8d8ba-Abstract-Conference.html}
}

@inproceedings{hafner2022director,
  author = {Hafner, Danijar and Lee, Kuang-Huei and Fischer, Ian and Abbeel, Pieter},
  title = {Deep Hierarchical Planning from Pixels},
  booktitle = {Advances in Neural Information Processing Systems},
  volume = {35},
  pages = {26091--26104},
  publisher = {Curran Associates, Inc.},
  year = {2022},
  url = {https://proceedings.neurips.cc/paper_files/paper/2022/hash/a766f56d2da42cae20b5652970ec04ef-Abstract-Conference.html}
}
\bibliographystyle{iclr2027_conference}
\clearpage
\appendix
\section{Remaining-time Bellman construction and its scope}
\label{app:theory}

\subsection{Fixed-policy value definition}

Fix a stationary low-level policy $\mu$ and high-level policy $\pi_H$, a
high-level decision interval $H$, a task goal $G$, and $0\leq\gamma<1$. The environment is Markov
in $s$; if termination depends on a clock, that clock is included in the
state for the statement below. The joint law of next state, scaled task
reward, and terminal indicator is $P(ds',dr,dd\mid s,a,G)$. This law does
not depend on the high-level query $g$. Assume bounded rewards. Relative
subgoals are transported by $h(s,g,s')=g+\phi(s)-\phi(s')$.

Starting from $(s,G,g,t)$, the low-level policy acts under the transported subgoal until
termination or the scheduled boundary after $t$ primitive steps. At a
nonterminal boundary, the high-level policy chooses a new subgoal using $\pi_H$ and
resets the countdown to $H$. If $b(k)$ counts boundaries crossed before
primitive step $k$, define
\begin{equation}
 V_t^{\mu,\pi}(s,G,g)
 =\mathbb E\!\left[\sum_{k=0}^{T-1}\gamma^{b(k)}r_k
 \;\middle|\;s_0=s,G,g_0=g,t_0=t\right],
 \label{eq:boundary-return}
\end{equation}
where $T$ is the terminal time, with an infinite sum for continuing tasks.
The bound $H\|r\|_\infty/(1-\gamma)$ ensures existence. This definition
evaluates the specified low-level policy, including during learning.
Define $Q_t^{\mu,\pi}(s,G,g,a)$ by fixing the first primitive action to $a$
and following the same hierarchy afterward. Superscripts are omitted when
the policies are understood.

\begin{proposition}[Remaining-time Bellman identity]
\label{prop:countdown}
The values in Equation~\eqref{eq:boundary-return} satisfy
Equations~\eqref{eq:primitive-bellman}--\eqref{eq:countdown-bellman}.
Unrolling from a countdown $t$ reaches the next high-level decision boundary with an
undiscounted sum of within-segment rewards and a single factor $\gamma$
on the continuation. On a trajectory that terminates earlier, the
continuation is zero.
\end{proposition}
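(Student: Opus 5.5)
The proof is a first-step decomposition of the return in Equation~\eqref{eq:boundary-return}, followed by the Markov property. The process is made Markov on the augmented state $z=(s,G,g,t)$. Given $z$ and the first action $a$, the environment draws $(s',r,d)\sim P(\cdot\mid s,a,G)$. This law does not depend on $g$, and if termination depends on a clock, that clock is part of $s$. The next augmented state is then fixed by a deterministic schedule. If $t>1$, it is $z'=(s',G,h(s,g,s'),t-1)$. If $t=1$, a boundary is crossed, the new subgoal is $\pi_H(s',G)$, and the countdown resets to $H$. Because $\mu$ and $\pi_H$ are stationary, the law of the future trajectory given $z'$ equals the law of a fresh trajectory started at $z'$. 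Boundedness of rewards, through the stated bound $H\|r\|_\infty/(1-\gamma)$, makes every expectation finite. It also justifies exchanging sums and conditional expectations by dominated convergence, including the infinite-horizon case.

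\textbf{Key steps.}
\begin{enumerate}
\item Split the return as $r_0+\sum_{k\ge1}\gamma^{b(k)}r_k$. Since $t_0=t\ge1$, no boundary is crossed before step $0$, so $b(0)=0$.
\item Relate the boundary counts. Let $b'$ count boundaries of the shifted trajectory. If $t>1$, then $b(k)=b'(k-1)$, because the first boundary in the original trajectory occurs after $t$ steps, i.e. after $t-1$ shifted steps. If $t=1$, the boundary after step $0$ is crossed, so $b(k)=1+b'(k-1)$ for $k\ge1$. This extra crossing produces exactly one factor $\gamma$.
\item Handle termination. If $d=1$ then $T=1$, the tail is empty, and the continuation is zero. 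This gives the factor $(1-d)$.
\item Condition on $(s',r,d)$ and apply the Markov property. For $t>1$, the tail's conditional expectation is $V_{t-1}(s',G,h(s,g,s'))$. For $t=1$, it is $\gamma V_H(s',G,\pi_H(s',G))$. Taking the expectation over $P$ gives Equation~\eqref{eq:primitive-bellman} with $C_t$ as in Equation~\eqref{eq:countdown-bellman}.
\item Obtain $V_t=\mathbb E_{a\sim\mu}Q_t$ by conditioning on the first action drawn from $\mu(\cdot\mid s,g)$.
\end{enumerate}

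\textbf{Unrolling claim.} Apply the identity $t$ times with countdowns $t,t-1,\ldots,1$. At each intermediate step the continuation is undiscounted, so the rewards of the current segment add without discount. Only the final step, with countdown $1$, contributes the single factor $\gamma$ on $V_H$ at the next decision. If $d=1$ occurs at some intermediate step, the $(1-d)$ factor zeroes everything after it. This formal unrolling can be checked directly against the definition of $b(k)$, which is constant on the segment and increments at its end.

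\textbf{Main obstacle.} The delicate step is the Markov/stationarity argument, i.e. showing that the tail expectation depends only on the augmented state $z'$. This needs two facts. First, transporting the subgoal through $h$ keeps it a deterministic function of the history. Second, the boundary schedule is captured entirely by the countdown $t$. I would state this as a short lemma on the augmented chain and then invoke the strong-Markov-free (one-step) shift property. The boundary indexing in step 2 also needs care, but it is routine.
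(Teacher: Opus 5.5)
Your proposal is correct and follows essentially the same route as the paper: condition on the first action and transition, split into the cases $d=1$, $d=0$ with $t>1$, and $d=0$ with $t=1$, identify the tail by the Markov property as $V_{t-1}(s',G,h(s,g,s'))$ or $\gamma V_H(s',G,\pi_H(s',G))$, integrate the first action under $\mu$, and repeat until the first boundary or termination. Your boundary-count identities $b(k)=b'(k-1)$ and $b(k)=1+b'(k-1)$, the augmented-state shift argument, and the dominated-convergence remark make explicit what the paper's shorter proof leaves implicit.
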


\begin{proof}
Condition on the first action and the transition $(s',r,d)$. If $d=1$, the
return is $r$. If $d=0$ and $t>1$, no high-level decision boundary has been crossed, so
the remaining return is $V_{t-1}(s',G,h(s,g,s'))$ without a discount factor.
If $d=0$ and $t=1$, a boundary is crossed and the remaining return is
$\gamma V_H(s',G,\pi_H(s',G))$. Taking the conditional expectation gives
Equation~\eqref{eq:primitive-bellman}; integrating the first action under
$\mu$ gives Equation~\eqref{eq:countdown-bellman}. Repeat the same argument
until the first boundary or terminal transition. All intermediate
continuations have multiplier one. Only a nonterminal boundary contributes
$\gamma$, proving the unrolled form.
\end{proof}

The identity gives the implementation's boundary-discounted objective:
within-segment rewards have unit weights, and each subsequent high-level
segment contributes another factor $\gamma$. The long replay target in
Equation~\eqref{eq:long-target} has this unrolled form. A replay suffix
follows its historical low-level policy, so equality to the current-policy
expectation requires on-policy sampling or a valid off-policy estimator.
Likelihood-based candidate selection provides an approximate correction.

\subsection{Remaining time distinguishes subgoal values}
\label{app:time-aliasing}

\begin{proposition}[Remaining-time aliasing]
\label{prop:time-aliasing}
Suppose the retained input $x=(s,G,g,a)$ does not distinguish remaining
times $t$ and $u$. If $Q_t(x)\ne Q_u(x)$, every predictor $q(x)$ satisfies
$\max\{|q(x)-Q_t(x)|,|q(x)-Q_u(x)|\}\ge |Q_t(x)-Q_u(x)|/2$.
\end{proposition}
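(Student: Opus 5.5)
The argument is a one-line triangle inequality, so I will spell out what ``does not distinguish'' means and then bound the two errors. Because the retained input $x=(s,G,g,a)$ omits the remaining time, any predictor is a function of $x$ alone. So when the same $x$ arises with countdown $t$ and with countdown $u$, the predictor returns one value $q(x)$ in both cases. The targets it is compared against are $Q_t(x)$ and $Q_u(x)$, defined through Equation~\eqref{eq:boundary-return} and characterized by Proposition~\ref{prop:countdown}. The only point needing care is that the claim concerns a single scalar $q(x)$ measured against two different numbers.

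The key step is to write
\[
|Q_t(x)-Q_u(x)| = |(Q_t(x)-q(x)) + (q(x)-Q_u(x))|
\le |q(x)-Q_t(x)| + |q(x)-Q_u(x)|.
\]
The right-hand side is at most $2\max\{|q(x)-Q_t(x)|,|q(x)-Q_u(x)|\}$. Dividing by two gives the stated bound. This holds for every real $q(x)$, so it holds for every predictor regardless of its parameterization.

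I would add that the bound is tight: choosing $q(x)=(Q_t(x)+Q_u(x))/2$ attains equality. The hypothesis $Q_t(x)\neq Q_u(x)$ is what makes the bound strictly positive.

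There is no real obstacle here. The one thing to state explicitly is the formalization of aliasing: a single prediction must serve both remaining times. I would also remark that Proposition~\ref{prop:countdown} makes $Q_t\neq Q_u$ generic. For example, $t=1$ applies the factor $\gamma$ and a new high-level decision after one step, whereas $t>1$ continues the current subgoal undiscounted. This is what motivates conditioning on $t$.
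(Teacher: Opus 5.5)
Your proof is correct and matches the paper's argument: the triangle inequality $|Q_t(x)-Q_u(x)|\le|Q_t(x)-q(x)|+|q(x)-Q_u(x)|$ forces the larger error to be at least half the gap. The paper likewise notes that the midpoint $q(x)=(Q_t(x)+Q_u(x))/2$ attains the bound.
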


\begin{proof}[Proof of Proposition~\ref{prop:time-aliasing}]
For the same retained input $x$, the triangle inequality gives
$|Q_t(x)-Q_u(x)|\le |Q_t(x)-q(x)|+|q(x)-Q_u(x)|$.
The larger of the two errors is at least half their required total.
The midpoint $q(x)=(Q_t(x)+Q_u(x))/2$ attains the bound for this pair.
\end{proof}

A stationary Markov example realizes a nonzero gap. Let the physical
environment have one self-loop state $s$, two primitive actions
$a_1,a_0$, and rewards $r(s,a_1,G)=1$, $r(s,a_0,G)=0$ for fixed $G$.
There is no termination and $\phi(s)=0$, so relative-goal transport
leaves each goal unchanged. Use goals $g_1=1$ and $g_0=0$, with
deterministic low-level policy $\mu(s,g_1)=a_1$ and $\mu(s,g_0)=a_0$.
The stationary boundary high-level policy always chooses $g_0$.
For any $H\ge2$ and $0\le\gamma<1$, the boundary value under $g_0$ is
zero: $V_H(s,G,g_0)=\gamma V_H(s,G,g_0)$.

Start a queried goal $g_1$ with remaining time $t$ and first action $a_1$.
It yields exactly $t$ unit rewards before the high-level policy switches to $g_0$;
all subsequent rewards vanish. Hence
\begin{equation}
 V_t(s,G,g_1)=Q_t(s,G,g_1,a_1)=t,\qquad
 Q_1(s,G,g_1,a_1)=1,\quad Q_2(s,G,g_1,a_1)=2.
 \label{eq:aliasing-example}
\end{equation}
The physical state, task goal, queried subgoal, and first action are
identical, yet the values differ. Their distinction is the time at
which control returns to the high-level policy.

The representational requirement is that remaining time be recoverable
from the value input. An elapsed clock combined with a known schedule
can also satisfy it. The lower bound applies when two distinct phases
are aliased; it establishes the need to distinguish their boundary
semantics, independently of the neural encoding used to do so.

\subsection{Primitive-action conditioning permits off-policy primitive-step updates}

For any bounded fixed continuation family $F$, define
\begin{align}
 b^F_t(s,G,g,a,s',r,d)
 &=r+(1-d)C^F_t(s',G,h(s,g,s')),\\
 (\mathcal B_tF)(s,G,g,a)
 &=\int b^F_t(s,G,g,a,s',r,d)\,
 P(ds',dr,dd\mid s,a,G),
 \label{eq:conditional-operator}
\end{align}
where $C^F_t$ replaces $V$ by $F$ in
Equation~\eqref{eq:countdown-bellman}, with the high-level policy held fixed.
The continuation may depend on the query even though the physical transition
law does not.

\begin{proof}[Proof of Proposition~\ref{prop:conditional-replay}]
Let $\beta$ be any data-collection low-level policy, including a low-level policy from an earlier
training iteration. The sampling and query-selection assumptions preserve
the outcome law after conditioning on selection and on the query. Under
the Markov assumption, conditioning on $(s,a,G)$
removes the dependence of $(s',r,d)$ on $\beta$ and on the collection goal.
Its conditional law is $P(\cdot\mid s,a,G)$. Substituting this law into the
conditional expectation of $b^F_t$ gives $\mathcal B_tF$. This equality
holds for every fixed query $g$, including a query different from the
collection goal. It does not change the distribution of the sampled action
$a$. To integrate under a new low-level policy $\mu$, the conditional function must be
identified on actions to which $\mu$ assigns probability. That step needs
coverage or a justified generalization assumption.
\end{proof}

The invariant quantity is the conditional one-step sampling law; the
continuation value tracks the policies being evaluated. Replay-goal and
frozen high-level-policy proposal queries use pre-outcome information
and satisfy the query-selection condition. Goals selected by likelihood
correction from a realized suffix can depend on its outcomes, so their
multi-step estimator uses the separate approximate correction described
above. Evaluating a current low-level policy additionally requires coverage
of its actions or a justified generalization assumption, as in the proof.

\subsection{Shared value updates and action-dependent estimation errors}
\label{app:shared-value}

For exact fixed-policy values, put
$A_t^{\mu,\pi}=V_t^{\mu,\pi}-Q_t^{\mu,\pi}$. For one estimated
value head $V_{t,j}$ and residual network $A_t$, define
$e_V=V_{t,j}-V_t^{\mu,\pi}$, $e_A=A_t-A_t^{\mu,\pi}$, and
$e_Q=(V_{t,j}-A_t)-Q_t^{\mu,\pi}$, all at the same appropriate inputs. Then
\begin{equation}
 e_Q(s,G,g,a)=e_V(s,G,g)-e_A(s,G,g,a),\quad
 |e_Q(s,G,g,a)|\le |e_V(s,G,g)|+|e_A(s,G,g,a)|.
 \label{eq:shared-error-decomposition}
\end{equation}
Substitution of $Q_t^{\mu,\pi}=V_t^{\mu,\pi}-A_t^{\mu,\pi}$ proves both statements.
Holding $A_t$ fixed, a change in $V_{t,j}$ produces the same additive
change in every reconstructed action value at $(s,G,g,t)$, including
actions without direct supervision, as stated
in Equation~\eqref{eq:shared-value-update}. When both networks change,
$\Delta\widehat Q_{t,j}(a)=\Delta V_{t,j}-\Delta A_t(a)$.
Here the shared bias is an action-independent \emph{value offset}, not
a particular neural-layer bias parameter.

The decomposition exposes two contributions to action-value accuracy:
a shared value error and an action-specific residual error. Updating $V$
transfers its common component across actions, while accuracy at actions
absent from the data still depends on generalization of $A$. Thus improved
$V$ alone supplies no uniform action-value or high-level ranking guarantee.
A direct neural action-value network may also change unsampled predictions;
the distinction here is the explicit shared component.
The implementation assigns separate parameters to $V$ and $A$ and detaches
every target. The residual-regression loss therefore has zero gradient
with respect to online $V$; the long/short value losses train that component.

\subsection{A signed residual representation}

Let $q_t(s,G,g)$ be a fixed reference value and define
\begin{equation}
 A^F_t(s,G,g,a)
 =q_t(s,G,g)-(\mathcal B_tF)(s,G,g,a).
 \label{eq:signed-residual}
\end{equation}
Then the following reconstruction holds for any query and low-level policy:
\begin{equation}
 q_t(s,G,g)-\mathbb E_{a\sim\mu(\cdot\mid s,g)}A^F_t(s,G,g,a)
 =\mathbb E_{a\sim\mu(\cdot\mid s,g)}
 (\mathcal B_tF)(s,G,g,a).
 \label{eq:residual-reconstruction}
\end{equation}
For a deterministic low-level policy the action expectation is a single evaluation at
$\mu(s,g)$. Equation~\eqref{eq:residual-reconstruction} follows by
substitution, so it holds independently of how the query was selected.
In a population least-squares regression with sufficient coverage and a
realizable function class, the target $q_t-b^F_t$ identifies $A^F_t$.
Together with Proposition~\ref{prop:conditional-replay}, this gives an
explicit route from replay transitions to the queried low-level policy return.

The identification statement above applies to realizable population
least-squares regression. Practical HTB uses smooth-$L_1$ regression,
finite neural networks, target networks, and a nonnegative residual
except in the signed-residual ablation. Nonnegativity is a stabilizing
constraint on this estimator: the exact $V_t-Q_t$ can be negative when an
individual action outperforms the low-level policy's value. The signed-residual
ablation retains the signed target and output to test this constraint.

For a deterministic low-level policy with exact policy values, the signed residual at
its own action satisfies $A_t^{\mu,\pi}(s,G,g,\mu(s,g))=0$. Consequently, the extra
next-state signed-residual term used to motivate HTB leaves the exact
fixed-policy Bellman identity unchanged. Away from this limit, HTB changes
the fitted recursion by replacing the next reference critic with its
residual-corrected value. We assess that estimator choice empirically.

\begin{proposition}[Action-conditioned residual recursion]
\label{prop:residual-recursion}
Fix a deterministic low-level policy $\mu$, a bounded boundary value $B(s,G)$, and
bounded reference functions $q_t(s,G,g)$ for $t=1,\ldots,H$. Define
$Q^{\mu,B}_t$ as the conditional return that executes its first action $a$,
then follows $\mu$ until the next scheduled boundary or termination, and
receives continuation $\gamma B$ at a nonterminal boundary. There is a
unique family of exact signed residuals, denoted here by
$A_t=q_t-Q^{\mu,B}_t$, satisfying
\begin{align}
 A_1(s,G,g,a)
 &=q_1(s,G,g)-\mathbb E[r+(1-d)\gamma B(s',G)\mid s,a,G],\\
 A_t(s,G,g,a)
 &=q_t(s,G,g)-\mathbb E\!\left[
 r+(1-d)\left\{q_{t-1}(s',G,g')\right.\right.\nonumber\\
 &\hspace{5em}\left.\left.
 -A_{t-1}(s',G,g',\mu(s',g'))\right\}\mid s,a,G\right],
 \qquad t>1,
 \label{eq:residual-recursion}
\end{align}
where $g'=h(s,g,s')$. The reconstructed value
$q_t(s,G,g)-A_t(s,G,g,\mu(s,g))$ equals the value of the specified subgoal
under $\mu$ and boundary $B$. This holds for every query on which the
conditional expectations are identified.
\end{proposition}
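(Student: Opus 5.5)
The plan is to prove existence, uniqueness and reconstruction together by induction on the remaining time $t$. The recursion in Equation~\eqref{eq:residual-recursion} is triangular: $A_1$ is given explicitly, and $A_t$ depends only on $A_{t-1}$ evaluated at the policy action. There is no fixed-point argument over $t$.

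The first step defines the candidate family by forward recursion. Set $A_1$ by the first display; it is bounded because $q_1$, $r$ and $B$ are bounded. For $t>1$, define $A_t$ by the second display using the already constructed $A_{t-1}$. Boundedness propagates inductively with the bound $\|A_t\|_\infty\le \|q_t\|_\infty+\|q_{t-1}\|_\infty+\|A_{t-1}\|_\infty+\|r\|_\infty$. So every conditional expectation is well defined for measurable bounded integrands, wherever the law $P(\cdot\mid s,a,G)$ is identified. This construction also gives uniqueness: any family satisfying both displays must agree with it at $t=1$, and inductively at every $t$, because the right-hand side for $t$ is determined by $A_{t-1}$.

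The second step proves $A_t=q_t-Q^{\mu,B}_t$ by induction. First, $Q^{\mu,B}$ satisfies the remaining-time Bellman equations of Proposition~\ref{prop:countdown}, with the boundary continuation replaced by $\gamma B$. The argument is the same conditioning on $(s',r,d)$ as in that proof. Concretely, $Q^{\mu,B}_1=\mathbb E[r+(1-d)\gamma B(s')]$ and $Q^{\mu,B}_t=\mathbb E[r+(1-d)Q^{\mu,B}_{t-1}(s',G,g',\mu(s',g'))]$, where the second equation uses that a deterministic $\mu$ makes the subgoal value equal to $Q$ at the policy action. The base case is then immediate.

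For the inductive step, assume $A_{t-1}=q_{t-1}-Q^{\mu,B}_{t-1}$. Then $q_{t-1}(s',G,g')-A_{t-1}(s',G,g',\mu(s',g'))=Q^{\mu,B}_{t-1}(s',G,g',\mu(s',g'))$: the reference $q_{t-1}$ cancels pointwise. Substituting this into the recursion gives $A_t=q_t-Q^{\mu,B}_t$. Evaluating at $a=\mu(s,g)$ then yields the reconstruction $q_t-A_t(\cdot,\mu(s,g))=Q^{\mu,B}_t(s,G,g,\mu(s,g))=V^{\mu,B}_t(s,G,g)$, the value of the specified subgoal.

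The main obstacle is the scope qualifier "for every query on which the conditional expectations are identified." Two points need care. First, the induction at $(s,G,g,t)$ uses $A_{t-1}$ at next-state inputs $(s',G,g',\mu(s',g'))$. Identification must therefore hold on the closure of queries reachable under $\mu$ from the given query, not only at the query itself. Second, the query may differ from the collection goal. For this I would invoke Proposition~\ref{prop:conditional-replay}: the transition law is query-independent, so the conditional expectations are the same integrals against $P(\cdot\mid s,a,G)$ regardless of $g$ or of the data-collection policy. I would therefore state the induction on this reachable set, with measurability of $g'=h(s,g,s')$ and of $\mu$ assumed, and note that the argument never uses how the query was selected.
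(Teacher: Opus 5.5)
Your proposal is correct and follows the paper's own proof. Both argue by induction on $t$: the base case is read off from the definition of $Q^{\mu,B}_1$, and in the inductive step the pointwise cancellation $q_{t-1}-A_{t-1}(\cdot,\mu(s',g'))=Q^{\mu,B}_{t-1}(\cdot,\mu(s',g'))$ gives $A_t=q_t-Q^{\mu,B}_t$ and fixes $A_t$ uniquely from $A_{t-1}$. Your explicit boundedness bound and your remark that identification is needed on the next-state queries reachable under $\mu$, not only at the query itself, are sound refinements of the paper's brief appeal to Proposition~\ref{prop:conditional-replay}.
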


\begin{proof}
For $t=1$, the definition of $Q^{\mu,B}_1$ gives the first equation and
determines $A_1$ uniquely. Suppose the statement holds at $t-1$.
Then
\begin{equation}
 q_{t-1}(s',G,g')-A_{t-1}(s',G,g',\mu(s',g'))
 =Q^{\mu,B}_{t-1}(s',G,g',\mu(s',g')).
\end{equation}
The right side is the continuation return of the low-level policy after the first
transition. Substitution into Equation~\eqref{eq:residual-recursion} gives
$A_t=q_t-Q^{\mu,B}_t$ and uniquely determines $A_t$ from $A_{t-1}$.
Induction proves the family and its reconstruction at the current low-level policy's
action. Proposition~\ref{prop:conditional-replay} applies to each conditional
expectation under its coverage assumptions.
\end{proof}

The proposition characterizes the signed conditional recursion.
Practical HTB instantiates it with frozen target references and a
target-high-level decision boundary value during each update. Its projected
targets and delayed residual estimates give an approximate, constrained
version of that recursion. The no-next-residual and signed-residual variants
use $q_{t-1}$ directly as their next-step continuation, omitting the
residual correction inside that continuation.

\subsection{Nonnegative projection gives a one-sided correction}

The nonnegative output guarantees $\widehat Q_{t,j}=V_{t,j}-A_t\le V_{t,j}$
pointwise, regardless of the quality of the learned reference. A stronger
scalar identity holds when the projected residual target is fitted exactly.

\begin{proposition}[One-sided residual correction]
\label{prop:projection}
For real values $q,b$, let $p(q,b)=q-[q-b]_+$. Then
\begin{equation}
 p(q,b)=\min(q,b),\qquad p(q,b)\leq q,\qquad
 |p(q,b)-p(q,b')|\leq|b-b'|.
 \label{eq:projection-properties}
\end{equation}
If an estimated residual $A$ approximates $[q-b]_+$ to error at most
$\epsilon$, then $|q-A-p(q,b)|\leq\epsilon$.
\end{proposition}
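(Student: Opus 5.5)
The plan is a direct case analysis on the sign of $q-b$, followed by a Lipschitz argument and a one-line error transfer. First, to get $p(q,b)=\min(q,b)$, split into two cases. If $q\le b$, then $[q-b]_+=0$, so $p(q,b)=q=\min(q,b)$. If $q>b$, then $[q-b]_+=q-b$, so $p(q,b)=b=\min(q,b)$. The bound $p(q,b)\le q$ follows at once, either from $\min(q,b)\le q$ or from $[x]_+\ge 0$. This mirrors the remark after the statement that the nonnegative output alone gives $\widehat Q_{t,j}\le V_{t,j}$.

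For the Lipschitz estimate, with $q$ held fixed, I will use the identity $p(q,b)=\min(q,b)$ just proved. I will then show that $b\mapsto\min(q,b)$ is $1$-Lipschitz. One route is to write $\min(q,b)=\tfrac12\bigl(q+b-|q-b|\bigr)$ and apply the reverse triangle inequality $\bigl||q-b|-|q-b'|\bigr|\le|b-b'|$. That yields $|\min(q,b)-\min(q,b')|\le\tfrac12\bigl(|b-b'|+|b-b'|\bigr)=|b-b'|$. An equivalent route is to note that $x\mapsto[x]_+$ is $1$-Lipschitz, since it is the composition of $1$-Lipschitz maps, $[x]_+=\tfrac12(x+|x|)$. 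That gives $|[q-b]_+-[q-b']_+|\le|b-b'|$, and the $q$ terms cancel in the difference $p(q,b)-p(q,b')$. I will present the second route because it uses the definition of $p$ directly.

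For the approximation statement, I will write
\[
q-A-p(q,b)=q-A-\bigl(q-[q-b]_+\bigr)=[q-b]_+-A .
\]
Its absolute value is at most $\epsilon$ by the hypothesis $|A-[q-b]_+|\le\epsilon$.

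None of these steps is a genuine obstacle. The only place that needs care is the Lipschitz claim, where I have to handle the kink of $[\cdot]_+$ at $0$ without splitting into four sign cases. Using $[x]_+=\tfrac12(x+|x|)$ together with the reverse triangle inequality handles this uniformly.
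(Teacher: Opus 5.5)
Your proposal is correct and follows essentially the paper's proof: the same two-case evaluation giving $p(q,b)=\min(q,b)$, nonexpansiveness in $b$ (the paper invokes it for $\min(q,\cdot)$ and you derive it for $[\cdot]_+$, which is equivalent because the $q$ terms cancel), and the same one-line error transfer $|q-A-p(q,b)|=|A-[q-b]_+|\le\epsilon$. One small wording point: $\tfrac12(x+|x|)$ is an average of $1$-Lipschitz maps rather than a composition, but the reverse-triangle computation you give is exactly right.
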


\begin{proof}
If $q\geq b$, subtraction of $q-b$ gives $b$; if $q<b$, the clipped
residual is zero and the result is $q$. The minimum with a fixed scalar is
nonexpansive in its other argument. Finally,
$|q-A-p(q,b)|=|A-[q-b]_+|\leq\epsilon$.
\end{proof}

The correction is nonexpansive with its reference held fixed. This
local bound leaves the coupled actor--critic iteration, which also updates
the reference, outside its scope. Projection before taking expectations differs from
projecting a conditional mean: for a random backup $B$,
$q-\mathbb E[(q-B)_+]=\mathbb E[\min(q,B)]$, which can be strictly below
$\min(q,\mathbb E[B])$. The practical target therefore has a conservative
component whose cost and benefit depend on the task and on $\lambda$.

\paragraph{Scope for maximization amplification.}
If noisy candidate maximization raises a scalar backup $b$ above $q$,
the projected reconstruction removes that upward increment. The remaining
error depends on the reference: $q=3$, $b=4$, and true value $1$ still give
the overestimate $\min(q,b)=3$, while a low reference can suppress a
legitimate improvement. The implementation's high-level policy maximizes
$V_{t,1}$, and its low-level policy uses the intrinsic-reward critic for
policy optimization. The local cap
therefore characterizes the correction, not the global bias of either
policy-optimization process.

\subsection{Intermediate error sources in primitive and suffix backups}
\label{app:multistep-error}

Consider a fixed low-level policy up to the next scheduled boundary and a fixed
boundary continuation $B$. Let $\widehat B$ satisfy
$\|\widehat B-B\|_\infty\le\epsilon_B$.
Suppose an approximate primitive recursion introduces an additive
operator residual $\eta_j$ at remaining time $j$, with
$\|\eta_j\|_\infty\le\epsilon_j$.
For $j>1$ the correct within-segment expectation operator is
nonexpansive in its continuation; at $j=1$ the boundary continuation
has coefficient $\gamma$ and a terminal mask. Consequently,
\begin{equation}
 \|\widehat V_t-V_t\|_\infty
 \le \sum_{j=1}^{t}\epsilon_j+\gamma\epsilon_B.
 \label{eq:primitive-error-budget}
\end{equation}
The base error is at most $\epsilon_1+\gamma\epsilon_B$.
Each earlier step adds at most $\epsilon_j$ because within-segment
discounting is one. Induction proves Equation~\eqref{eq:primitive-error-budget}.

For a suffix sampled under that same low-level policy, the observed reward sum
contains none of these learned intermediate continuations.
Replacing only $B$ by $\widehat B$ in its direct target gives
\begin{equation}
 \left|\mathbb E[\widehat Y^{\mathrm L}-Y^{\mathrm L}
       \mid s,G,g,t]\right|\le\gamma\epsilon_B.
 \label{eq:multistep-error-budget}
\end{equation}
The reward terms cancel pathwise, leaving the masked boundary error.
Under the stated operator-error bounds and common sampling policy,
Equations~\eqref{eq:primitive-error-budget}--\eqref{eq:multistep-error-budget}
isolate the dependence on learned intermediate continuations. Actual
estimator errors additionally depend on reward variance, fitting error,
and any mismatch between historical and current trajectory distributions;
the $V-A$ architecture alone supplies no bound on these terms. HTB's
truncated relabeling adapts historical suffixes approximately, while its
mixture retains the primitive route for an explicitly specified subgoal.

\subsection{Direct value updates for specified subgoals}
\label{app:specified-updates}

\begin{proposition}[Query routing]
\label{prop:query-routing}
Fix a replay suffix, detached query $g$, detached short target $\widehat y^{\mathrm S}$,
and $\lambda<1$. For each critic, the short term in
Equation~\eqref{eq:mixed-loss} supplies the output derivative
\begin{equation}
 (1-\lambda)\,\ell'_1\!\left(V_{t,j}(s,G,g),\widehat y^{\mathrm S}\right)
 \label{eq:query-gradient}
\end{equation}
at the queried input $(s,G,g)$, independently of the long-path goal selected
by correction. If its residual is nonzero, this derivative is nonzero for
the smooth-$L_1$ loss.
\end{proposition}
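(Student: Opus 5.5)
The plan is to differentiate the mixed objective in Equation~\eqref{eq:mixed-loss} term by term with respect to the scalar output $V_{t,j}(s,G,g)$ of each critic $j$, treating the replay suffix as fixed. The first step is to record which quantities are detached. The long target $y^{\mathrm L}_{t,i}$, the short target $\widehat y^{\mathrm S}_{t,i}$, the relabeled goal $\widetilde g_i$ from Equation~\eqref{eq:suffix-correction-main}, and the query $g$ are all stopped-gradient, and the target networks $\bar V,\bar A$ are frozen. So the only online dependence on $V_{t,j}$ is through the two explicit arguments $V_{t,j}(s,G,\widetilde g_i)$ and $V_{t,j}(s,G,g)$. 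The objective is a sum over $j$ and over remaining-time branches, with per-branch averaging. For a single sample and a single critic, the short term therefore contributes $(1-\lambda)\,\ell(V_{t,j}(s,G,g),\widehat y^{\mathrm S})$, up to the positive batch-averaging constant, which I would absorb into the statement's normalization.

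Next, I would apply the chain rule to obtain the output derivative $(1-\lambda)\,\ell'_1(V_{t,j}(s,G,g),\widehat y^{\mathrm S})$ at the queried input. Here $\ell'_1$ denotes the derivative in the first argument. This expression involves only $g$, the detached target and the current output, and $\widetilde g_i$ appears nowhere in it. That is the sense of ``independently of the long-path goal.'' The long term is a separate additive derivative at input $(s,G,\widetilde g_i)$. When $\widetilde g_i=g$ the two output derivatives add at the same input; when they differ they sit at different inputs. In either case the short-term contribution is unchanged, because the derivative of a sum is the sum of the derivatives.

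Finally, I would handle nonvanishing. The smooth-$L_1$ loss with threshold $\beta>0$ has derivative $\ell'_1(v,y)=(v-y)/\beta$ when $|v-y|<\beta$ and $\operatorname{sign}(v-y)$ otherwise. This is zero only when $v=y$. So if the residual $V_{t,j}(s,G,g)-\widehat y^{\mathrm S}$ is nonzero and $\lambda<1$, the product is nonzero.

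The only delicate point is bookkeeping rather than analysis. The claim concerns the output derivative, not the parameter gradient, so parameter sharing with the long term could still cancel the effect in parameter space. I would state explicitly that the proposition makes no claim about this, consistent with the gradient-cosine discussion in Section~\ref{sec:gradient-measurement}.
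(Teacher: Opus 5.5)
Your proposal is correct and matches the paper's proof. Both differentiate the short term with its detached query and target held fixed, note that the corrected goal $\widetilde g_i$ enters only the long term, and conclude from the fact that the smooth-$L_1$ derivative vanishes only at zero residual, while your remarks on the batch-averaging constant and parameter-space cancellation agree with the paper's discussion after the proposition.
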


\begin{proof}
Differentiate the short term while holding its query and target fixed.
The chain rule gives Equation~\eqref{eq:query-gradient}. The corrected
goal is an argument only of the long term and cannot replace the short
term's input. The smooth-$L_1$ derivative vanishes only at zero residual.
\end{proof}

The proposition identifies which input receives a direct loss derivative.
The resulting parameter update combines both losses, whose gradients can
cancel, and parameter sharing can change predictions at other inputs.
A high-level policy proposal can serve as $g$ because the implementation
detaches it before forming the residual and critic losses. At $\lambda=1$,
the short derivative is zero, so this supervision mechanism is absent.

\section{Implementation of the remaining-time updates}
\label{app:implementation}

\begin{algorithm}[t]
\caption{Learning from a high-level replay segment}
\label{alg:replay-update}
\small
\begin{algorithmic}[1]
\State Sample a segment $\tau$ of observed length $L\leq H$.
\For{each observed step $i=0,\ldots,L-1$}
  \State Set $t=H-i$ and choose query $g_i$ (replay or current high-level policy).
  \State Set $\widetilde g_i$ by suffix correction, or keep the replay goal.
  \State Form detached $y^A_{t,i}$, $\widehat y^{\mathrm S}_{t,i}$ and $y^{\mathrm L}_{t,i}$;
  \Statex \hspace{\algorithmicindent} using masked returns and excluding unobserved padding.
\EndFor
\State Fit $A$ at recorded actions using $y^A$.
\State Fit twin critics with $\lambda\mathcal L_{\rm L}+(1-\lambda)\mathcal L_{\rm S}$.
\State Improve the time-indexed actors; Polyak-update target networks.
\end{algorithmic}
\end{algorithm}

\subsection{Networks, losses, and update order}

The Markov-state construction is implemented with each benchmark's
observation protocol. Push/Fall observations contain ant position and
velocity together with elapsed time and omit the movable object's state.
Because hidden object configurations can depend on collection history,
the resulting observation-based update is an approximation to
Proposition~\ref{prop:conditional-replay}; its conditional-law guarantee
requires a sufficient Markov state. All compared Push/Fall conditions use
the same observation protocol.

For $H=10$, each horizon has its own actor, twin values $V_{t,j}$, and
residual network, with width-300 ReLU hidden layers. Their inputs are
$(s,G)$, $(s,G,g)$, and $(s,G,g,a)$, respectively. Parameters are
stored in a batched horizon axis, but no trainable weights are shared across
horizons. The residual's final activation is ReLU except in the signed-residual
variant, which uses a linear output. The low-level policy uses the same observation preprocessing in
candidate correction and residual queries as it uses to select actions.

For each sampled high-level segment, we construct all observed suffixes.
Training proceeds in the following order:
\begin{enumerate}
\item Select each suffix's long-path goal, and compute target high-level policy
boundary values using the minimum of the two target critics.
\item Choose the short-path query, transport it by one primitive step, and
form the one-step residual target. Apply the terminal mask after selecting
the boundary or within-segment continuation.
\item Update the online residual network at the recorded primitive action
using smooth-$L_1$ loss.
\item Compute the long and short critic losses. The short target uses the
\emph{target} residual network, including in the iteration immediately after
the online residual update. Minimize their $\lambda$-weighted combination.
\item Update each high-level actor to maximize its first online critic and then
Polyak-update the target actors, critics, and residual networks.
\end{enumerate}
Within a horizon, we average loss over valid samples; we sum these averages
over active horizons. The rollout calls only the $H$-step actor. Actors for
shorter remaining durations are trained at replay suffix states and provide query
goals for the current-policy-subgoal variant; they do not resample the high-level policy's goal during a segment.

\subsection{Exact distinctions between variants}

All variants share the long target in Equation~\eqref{eq:long-target} and
short target in Equation~\eqref{eq:short-target}. For all nonrecursive variants,
replace the $t>1$ branch in Equation~\eqref{eq:recursive-continuation} by
$\bar V_{t-1}(s_{i+1},G,g_i^+)$. For the signed-residual variant, remove the positive-part
operation from Equation~\eqref{eq:gap-target} and use an unconstrained
residual output. For the current-policy-subgoal variant, set
\begin{equation}
 g_i=\operatorname{stopgrad}(\pi_t(s_i,G)),\qquad
 g_i^+=g_i+\phi(s_i)-\phi(s_{i+1}).
\end{equation}
The same detached query enters the residual and short critic paths. These
losses do not create an extra actor objective through the query. In all replay-query variants,
$g_i=g_0^{\mathrm{orig}}+\phi(s_0)-\phi(s_i)$.

The full HTB next-state residual is evaluated at the current low-level policy's deterministic
action with no smoothing noise. The short critic target evaluates the
target residual at the current low-level policy's action plus clipped Gaussian
smoothing noise, with the resulting action clipped to its bounds. Boundary
targets use the target high-level policy with its own clipped smoothing noise and
action bounds. Neither expression uses a target low-level actor.

When $\lambda=1$, the short critic term is skipped. The residual network
continues training but does not affect actor or critic targets. A seeded
implementation test verifies equality of actor and critic updates with the
independent remaining-time baseline. This is an implementation regression test,
not a claim that separately collected HIRO and countdown runs are identical.

\subsection{Suffix correction and early termination}

For each valid suffix starting at $i$, the candidate set contains its
transported original subgoal, the observed displacement
$\phi(s_L)-\phi(s_i)$, and $K$ Gaussian candidates centered on that
displacement. The Push/Fall protocol uses $K=8$, for ten total
candidates; the Reacher/Maze implementation defaults to $K=10$, for twelve
total candidates. Random candidates are clipped to the subgoal bounds. The
selection rule is
\begin{equation}
 \widetilde g_i=\arg\min_{g\in\mathcal C_i}
 \sum_{k=i}^{L-1}
 \left\|a_k-\mu\!\left(s_k,g+\phi(s_i)-\phi(s_k)\right)\right\|_2^2.
 \label{eq:suffix-selection}
\end{equation}
Each horizon independently samples and scores its candidates on its observed
suffix. Disabling correction returns the transported replay subgoal for
the long path; it leaves the short-path construction unchanged. Since
candidate selection consumes random numbers, paired seeds across correction
settings do not imply identical realized replay or exploration trajectories.

The countdown is scheduled time to a high-level policy decision, not observed time
to termination. If a segment terminates after $L<H$ primitive actions, only
branches $t=H,H-1,\ldots,H-L+1$ are valid. The last observed transition
has $d_i=1$, so its residual target contains only $r_i$ after the reference
value. All suffix long targets end at that same terminal state and have
zero continuation. Unobserved padding is excluded from losses and from
candidate likelihoods. If termination coincides with $t=1$, the terminal
mask overrides boundary bootstrapping. These conventions preserve the
semantics of a high-level policy that planned an $H$-step segment before termination.

\section{Training and evaluation protocol}
\label{app:training-protocol}

The Push/Fall experiments use a shared environment protocol and PyTorch
implementation. All compared methods use the same environment semantics
and optimizer/update conventions. This comparison does not imply an exact
reproduction of the original HIRO training software.
Table~\ref{tab:training-protocol} reports the common training settings.

\begin{table}[h]
\centering\small
\caption{\textbf{Push/Fall training settings.} Values are shared by HIRO and
the remaining-time variants unless the component exists only in the latter.}
\label{tab:training-protocol}
\begin{tabular}{ll}
\toprule
Setting & Value\\
\midrule
Environment steps & $10{,}000{,}000$\\
High-level period / update interval & $10$ / $10$ primitive steps\\
Batch size (both levels) & $128$\\
Replay capacity (both levels) & $200{,}000$ transitions each\\
Actor learning rate (both levels) & $10^{-4}$\\
Critic learning rate (both levels) & $10^{-3}$\\
Action-residual learning rate & $10^{-3}$\\
High-level boundary / low-level discount & $0.99$ / $0.95$\\
High-level / low-level reward scale & $0.1$ / $1.0$\\
Polyak coefficient & $0.005$\\
Target action noise standard deviation / clip & $0.2$ / $0.5$\\
Training exploration standard deviation & $1.0$\\
Random / total correction candidates & $8$ / $10$\\
Evaluation interval / episodes & $50{,}000$ steps / $50$\\
Episode duration & $500$ primitive steps\\
Warmup episodes & $0$\\
\bottomrule
\end{tabular}
\end{table}

Training uses Adam and smooth-$L_1$ critic losses. Updates are executed in
episode batches: the low-level policy receives one gradient iteration per collected
primitive step, and the high-level policy receives one per high-level policy update interval.
The actor and target networks update at every corresponding gradient
iteration. The high-level policy critic and action-residual optimizers use weight
decay $10^{-4}$.

Evaluation calls the high-level policy and low-level policy without exploration noise and
without parameter updates. Push/Fall episodes always run for the fixed
duration: reaching the task goal does not end evaluation early. Success is
measured on the final step, and return sums the environment rewards over
the episode. The environment uses dense distance reward, relative
subgoals, and no stochastic position perturbation.

The Reacher/Maze evaluation instead stops an episode at its first
successful step and reports success and steps to termination.
The comparisons use one million environment steps and eight paired seeds.
We do not pool the
Reacher/Maze success measure with Push/Fall terminal success. The
Reacher tail statistic averages the retained evaluation points within the
reported terminal training window before averaging over seeds.

\IfFileExists{sections/empirical_appendix.tex}{\section{Empirical protocol and complete paired results}
\label{app:empirical}

\paragraph{Metrics and analysis units.}
Push and Fall use the curve entries \texttt{eval0/avg\_ep\_rew} and
\texttt{eval0/perc\_env\_goal\_achieved} at exactly \PFTargetMillions{} million environment
steps. Each configuration is evaluated with \PFSeeds{} seeds
(777--781) at this evaluation point.
The comparison includes \PFConfigs{} logical configurations including
the two HIRO references, and \PFPairs{} correction/no-correction pairs.
The method variants have identical seed-level returns and success at
$\lambda=1$; removing the repeated method labels leaves \PFUniquePairs{}
distinct comparison rows. The figure shows this endpoint once per environment,
and its uncertainty calculation counts each distinct seed outcome once.

For Reacher, the metric is \texttt{eval0/perc\_env\_goal\_achieved}, averaged
over the last twenty evaluation entries ending at exactly one million
steps. Each eligible comparison contains eight paired seeds (777--784).
The primary interior-$\lambda$ set contains \ReacherConfigs{} configurations
and \ReacherPairedEntries{} configuration--seed pairs. It excludes the endpoints
because $\lambda=1$ removes short-path critic supervision and $\lambda=0$
removes long-path critic supervision. The latter may still execute correction
and consume random numbers, so endpoint differences need not measure a pure
change in the weighted critic objective. The mechanism analysis focuses on
the interior grid.

\paragraph{Uncertainty in figures.}
We report variation across seeds and configuration choices explicitly, in line
with the evaluation concerns discussed by \citet{agarwal2021precipice}.
The plotted intervals are percentile intervals with nominal coverage $95\%$.
For each Push/Fall configuration, we enumerate all \PFExactResamples{}
with-replacement resamples of its five paired seed differences and take the
$2.5$th and $97.5$th percentiles using linear interpolation. Enumeration
removes Monte Carlo variation from bootstrap computation; finite-seed
coverage remains nominal. For the Reacher correction-effect strata in
Figure~\ref{fig:reacher}, a fixed generator seed resamples
configurations and then paired seed differences within each chosen configuration
for ten thousand replicates. Configuration means receive equal weight.
These intervals describe this finite grid and have no simultaneous multiple-test
coverage. The full sweep characterizes variation across settings.

\paragraph{Analysis-set sensitivity.}
The full grid contains 88 logical comparisons. A single-setting exclusion
analysis removes Reacher-v1 sparse current-policy-subgoal/$\lambda=0$,
leaving 87 comparisons. This setting has eight paired seeds and is included
in the full-grid analysis.
Separately, HTB and its no-next-residual variant at $\lambda=1$ have identical paired seed values in every
reward--environment stratum, so collapsing those repeated endpoint rows gives
84 configurations. Table~\ref{tab:reacher-sensitivity} compares the full,
single-setting-exclusion, deduplicated, and interior analysis sets. The correction-effect
aggregate uses the interior grid, which excludes both endpoints.

\begin{table}[t]
\centering
\caption{\textbf{Reacher conclusions persist across analysis sets.}
Gains are mean success-rate differences in percentage points; wins count
positive configuration means.}
\label{tab:reacher-sensitivity}
\small
\begin{tabular}{lrrr}
\toprule
Analysis set & Settings & Wins & Mean gain\\
\midrule
Interior mixing weights & 64 & 60 & +26.82 \\
Full grid, deduplicated & 84 & 68 & +20.88 \\
Full logical grid & 88 & 71 & +20.49 \\
Single-setting exclusion, logical grid & 87 & 71 & +20.74 \\
Single-setting exclusion, deduplicated & 83 & 68 & +21.14 \\
\bottomrule
\end{tabular}
\end{table}

\paragraph{Reference setting and benchmark interpretation.}
Table~\ref{tab:fixed-reference} gives the same fixed HTB/$\lambda=.95$ setting in
both environments and correction modes. These are comparisons against this
repository's HIRO implementation under the local paper-environment protocol
specified in Appendix~\ref{app:training-protocol}.
Both sides of each Push/Fall comparison use the same five seed identifiers;
additional HIRO training seeds remain outside these paired analyses.

An additional within-Fall interaction for the no-next-residual variant is the difference between
the no-correction effects at $\lambda=.6$ and $.9$. Its empirical bootstrap
uses the same resampled seed indices in all four arms, enumerating the
five-seed resamples. This exploratory contrast has a nominal interval without
simultaneous multiple-test coverage.

\paragraph{Routing and gradient diagnostics.}
Subgoal-retention diagnostics evaluate each fitted policy instance separately.
Each analysis samples four batches of 128 high-level transitions,
without replacement inside each batch; batches can overlap. Valid suffix
queries use the implementation's residual subgoal at their own start state.
The retention indicator is
$\mathbb{1}[\|g^{\mathrm{orig}}-\widetilde g\|_2\leq10^{-6}]$.
We first compute retention within each trained model, then average over the
correction-enabled models. Applying correction to no-correction models
is a counterfactual diagnostic; their actual long-path update retains the original
goal. The diagnostic measures subgoal retention on reconstructed valid
suffixes; value generalization is evaluated separately in
Appendix~\ref{app:vseven-diagnostics}.

For gradient diagnostics, the unweighted multi-step loss $\mathcal L_L$
and one-step loss $\mathcal L_S$ use the same sampled replay minibatch
and valid suffix mask. Their Huber losses sum the two online value
critics, with the implementation's common remaining-time reduction.
Targets are detached, and each loss is differentiated separately with
respect to the same $\theta_V$: all parameters of both online value
critics across the ten independent remaining-time heads.
Neither the residual network nor either actor is part of this gradient
vector; target-network parameters are excluded as well.
The long target uses the recorded suffix reward and target-critic boundary
bootstrap; the short target uses the current target critic and residual
network. Target-high-level policy smoothing is sampled according to the diagnostic's
fixed random seed. The offline short-path query uses the current low-level policy's
unsmoothed action, while training adds action smoothing. The cosines therefore
describe this reconstructed-loss diagnostic, not the exact stochastic
training gradients. Equation~\eqref{eq:value-gradient-cosine} concatenates
these parameters before computing a cosine. Positive scalar mixture
weights do not change that cosine. Each point compares the two objectives
\emph{within one fitted model}; connecting correction and no-correction
points compares their alignments, not gradients from different models.
The horizon-specific diagnostic slices the independent horizon heads and
counts negative cosines over correlated minibatch--head combinations.
Each plotted value averages two minibatches. Figure~\ref{fig:gradients}b
contains \MechMatchedGradientPairs{} pairs matched by environment, reward,
method, mixture weight, seed, and training budget. Pair eligibility does
not depend on cosine or success. Gradient diagnostics cover twelve
policy instances, including two outside this matched display;
the other sixteen instances provide subgoal-retention measurements only.

\paragraph{Exploratory association and fixed-query drift.}
Across the same \MechMatchedGradientPairs{} equal-budget pairs, the
Spearman correlation between correction retention and the no-correction
success advantage is \MechMatchedRetentionCorrelation{}.
This exploratory association describes the selected settings and does
not identify a causal effect. We also hold the
later model's probe states and three goal sets fixed and evaluate earlier
and later critics on those inputs. Table~\ref{tab:qdrift} reports absolute
changes. The statistic measures aggregate prediction drift from all learning
between the two training stages. It neither isolates low-level-policy changes nor measures
value-estimation or Bellman error.

\begin{table}[t]
\centering
\caption{\textbf{Value changes at fixed subgoal inputs.}
Mean absolute critic change across earlier and later models. R0/R1 denote
Reacher-v0/v1; d/s denote dense/sparse. Values use each run's reward scale.}
\label{tab:qdrift}
\small
\begin{tabular}{lrrlrrr}
\toprule
Setting & Seed & Correction & Steps (millions) & Original & Corrected & Current\\
\midrule
Rv0/s/Signed & 781 & yes & 0.95--1.00 & 0.316 & 0.315 & 0.322 \\
Rv0/s/Signed & 784 & yes & 0.70--0.75 & 0.333 & 0.327 & 0.336 \\
Rv0/s/Signed & 781 & no & 0.90--1.00 & 0.214 & 0.205 & 0.199 \\
Rv0/s/Signed & 784 & no & 0.90--1.00 & 0.950 & 0.945 & 0.989 \\
Rv1/d/HTB & 783 & yes & 0.95--1.00 & 3.366 & 3.324 & 3.394 \\
Rv1/s/HTB & 779 & yes & 0.95--1.00 & 2.667 & 0.789 & 1.080 \\
Rv1/s/HTB & 781 & yes & 0.95--1.00 & 1.133 & 0.976 & 0.988 \\
Rv1/s/Policy & 781 & yes & 0.95--1.00 & 0.558 & 0.558 & 0.548 \\
Rv1/d/HTB & 783 & no & 0.90--1.00 & 10.097 & 10.041 & 9.999 \\
Rv1/s/HTB & 779 & no & 0.90--1.00 & 0.309 & 0.308 & 0.321 \\
Rv1/s/HTB & 781 & no & 0.90--1.00 & 0.520 & 0.519 & 0.539 \\
Rv1/s/Policy & 781 & no & 0.90--1.00 & 0.525 & 0.521 & 0.536 \\
\bottomrule
\end{tabular}
\end{table}

\begin{table}[p]
\centering
\caption{\textbf{Complete Push/Fall correction ablation.}
Differences are no correction minus correction. Success differences are in
percentage points. Entries average the paired seed differences.
Repeated $\lambda=1$ rows preserve the full grid and share the same seed outcomes.}
\label{tab:full-pushfall}
\small
\setlength{\tabcolsep}{3pt}
\begin{tabular}{llrrr}
\toprule
Env. & Rule & $\lambda$ & $\Delta$ return & $\Delta$ success\\
\midrule
Fall & No next residual & 0 & -971.5 & +0.0 \\
Fall & No next residual & 0.6 & +5938.8 & +0.0 \\
Fall & No next residual & 0.9 & -767.3 & -9.2 \\
Fall & No next residual & 0.95 & +2850.9 & +14.4 \\
Fall & No next residual & 0.99 & +4225.6 & +13.2 \\
Fall & No next residual & 1 & -309.2 & -8.8 \\
Fall & HTB & 0 & +699.5 & +0.0 \\
Fall & HTB & 0.6 & +2522.4 & +0.0 \\
Fall & HTB & 0.9 & -2089.1 & -25.2 \\
Fall & HTB & 0.95 & -315.0 & -8.4 \\
Fall & HTB & 0.99 & +21.0 & -20.8 \\
Fall & HTB & 1 & -309.2 & -8.8 \\
Fall & Signed residual & 0 & +1037.8 & +0.0 \\
Fall & Signed residual & 0.6 & +5481.4 & +4.4 \\
Fall & Signed residual & 0.9 & -894.8 & -5.2 \\
Fall & Signed residual & 0.95 & +683.3 & -10.0 \\
Fall & Signed residual & 0.99 & +1586.2 & -2.0 \\
Fall & Signed residual & 1 & -309.2 & -8.8 \\
Push & No next residual & 0 & -1057.7 & -0.8 \\
Push & No next residual & 0.6 & +3458.8 & +0.0 \\
Push & No next residual & 0.9 & -115.0 & -9.6 \\
Push & No next residual & 0.95 & -1155.4 & -16.8 \\
Push & No next residual & 0.99 & -144.9 & +0.0 \\
Push & No next residual & 1 & +355.6 & -13.2 \\
Push & HTB & 0 & -113.6 & +0.0 \\
Push & HTB & 0.6 & +4349.2 & +0.0 \\
Push & HTB & 0.9 & -65.6 & +5.2 \\
Push & HTB & 0.95 & -432.7 & -13.2 \\
Push & HTB & 0.99 & +2.8 & +6.4 \\
Push & HTB & 1 & +355.6 & -13.2 \\
Push & Signed residual & 0 & +381.6 & +0.0 \\
Push & Signed residual & 0.6 & +3378.9 & +0.0 \\
Push & Signed residual & 0.9 & +1489.8 & -1.2 \\
Push & Signed residual & 0.95 & +401.3 & -2.8 \\
Push & Signed residual & 0.99 & -126.6 & -16.8 \\
Push & Signed residual & 1 & +355.6 & -13.2 \\
\bottomrule
\end{tabular}
\end{table}

Table~\ref{tab:full-pushfall} lists every logical pair in the comparison
grid, including the endpoint controls and comparisons favoring correction.

\paragraph{Current-policy queries and additional task coverage.}
Table~\ref{tab:mfour-examples} shows two current-policy-subgoal/$\lambda=.99$ configurations, where
the short branch queries the current high-level policy's goal. These selected
illustrations characterize the variant at two settings; its aggregate effect
uses the full grid. The full
Reacher aggregate includes all current-policy-subgoal interior weights. Under the same
endpoint/window rule, the fixed HTB/$\lambda=.95$ no-correction setting on
AntMaze-v1 achieves \MazeFixedDense{}\% dense and \MazeFixedSparse{}\% sparse
success, each over eight seeds. The Maze experiments compare algorithm
variants without correction; they do not constitute paired correction ablations.
The complete Maze/MazeW grid appears in the eight-task comparison, including
the MazeW settings with low success. Specified-subgoal supervision addresses
value updates; selecting useful subgoals and exploring remain complementary
requirements.

\begin{table}[t]
\centering
\caption{\textbf{Current-policy goal queries can also benefit from disabling
correction.} Eight-seed mean final-window success percentages for two current-policy-subgoal/.99
examples. These are illustrative configurations within the full grid.}
\label{tab:mfour-examples}
\begin{tabular}{lrrr}
\toprule
Task & Correction & No correction & Difference (pp)\\
\midrule
Reacher-v0 / dense & 60.00 & 88.13 & +28.12 \\
Reacher-v1 / sparse & 26.00 & 81.63 & +55.62 \\
\bottomrule
\end{tabular}
\end{table}

\paragraph{Horizon-loss scales.}
The loss-scale analysis covers \HorizonLogIdentities{} distinct run identities
with at least 99,000 high-level updates. It averages
losses within each run over early, middle, and late thirds of that run's update
range, then aggregates across seeds. The count measures the number of analyzed runs;
repeated endpoint configurations share the same underlying seed outcomes.
Figure~\ref{fig:horizon-loss}
shows the late-phase signed-residual/$\lambda=.6$ profiles: correction-enabled runs have much
larger long- and short-path critic losses in these settings. The number of seeds differs
between conditions. These profiles describe loss scale, while the cosine
diagnostic measures update direction and the rollout-return comparison
measures critic accuracy. Scalar loss magnitudes alone provide neither
of those measurements nor a causal account of task performance.

\begin{figure}[!htbp]
\centering
\includegraphics[width=\textwidth]{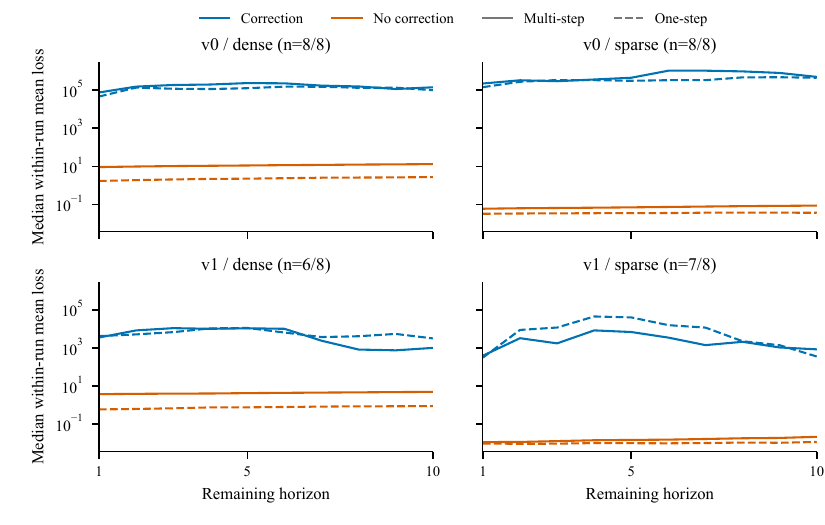}
\caption{\textbf{Late-phase signed-residual/.6 loss scales differ between correction modes.}
Each curve is the median across seed-level phase-mean losses; the vertical axis
is logarithmic. Panel counts give the number of correction/no-correction seeds.
The late phase is the final third of each run's high-level updates.
The profiles measure scalar loss magnitude; gradient alignment is evaluated
in Figure~\ref{fig:gradients}.}
\label{fig:horizon-loss}
\end{figure}
}{}
\section{Component controls, complete task coverage, and learning curves}
\label{app:vthree-evidence}

\paragraph{Task coverage and comparison units.}
The Maze comparison contains \VThreeMazeRows{} runs: two environments,
two rewards, four methods, six mixture weights, and eight seeds. Every curve
reaches exactly one million environment steps and uses the same success metric.
Table~\ref{tab:vthree-fixed} uses all eight seeds in each fixed setting.
The HIRO-without-correction controls comprise \VThreeLiteralRows{}
Reacher runs with the same seed identifiers and tail-window metric. They
use the distinct HIRO high-level policy implementation. The remaining-time
$\lambda=1$ endpoint is reported as a separate baseline. Matched budgets
and seeds support a comparison of complete algorithms, including their
implementation differences. Maze HIRO results are not included because
they do not meet the eight-seed comparison criterion.

\begin{table}[t]
\centering
\caption{\textbf{Fixed-configuration success across eight goal-reaching tasks.}
Success (\%), mean $\pm$ seed SD over eight seeds, with correction disabled.
Multi-step only is the remaining-time baseline at $\lambda=1$; HTB uses $\lambda=.95$.
HIRO uses the local high-level-policy implementation. Dashes mark
conditions without an eight-seed HIRO comparison, not zero success.}
\label{tab:vthree-fixed}
\small
\setlength{\tabcolsep}{5pt}
\begin{tabular}{llrrr}
\toprule
Environment & Reward & HIRO & Multi-step only & HTB/.95\\
\midrule
Reacher-v0 & dense & $\VThreeFixedAAAAMean{}\!\pm\!\VThreeFixedAAAASd{}$ & $\VThreeFixedAAABMean{}\!\pm\!\VThreeFixedAAABSd{}$ & $\VThreeFixedAAACMean{}\!\pm\!\VThreeFixedAAACSd{}$ \\
Reacher-v0 & sparse & $\VThreeFixedABAAMean{}\!\pm\!\VThreeFixedABAASd{}$ & $\VThreeFixedABABMean{}\!\pm\!\VThreeFixedABABSd{}$ & $\VThreeFixedABACMean{}\!\pm\!\VThreeFixedABACSd{}$ \\
Reacher-v1 & dense & $\VThreeFixedACAAMean{}\!\pm\!\VThreeFixedACAASd{}$ & $\VThreeFixedACABMean{}\!\pm\!\VThreeFixedACABSd{}$ & $\VThreeFixedACACMean{}\!\pm\!\VThreeFixedACACSd{}$ \\
Reacher-v1 & sparse & $\VThreeFixedADAAMean{}\!\pm\!\VThreeFixedADAASd{}$ & $\VThreeFixedADABMean{}\!\pm\!\VThreeFixedADABSd{}$ & $\VThreeFixedADACMean{}\!\pm\!\VThreeFixedADACSd{}$ \\
Maze-v1 & dense & --- & $\VThreeFixedAEABMean{}\!\pm\!\VThreeFixedAEABSd{}$ & $\VThreeFixedAEACMean{}\!\pm\!\VThreeFixedAEACSd{}$ \\
Maze-v1 & sparse & --- & $\VThreeFixedAFABMean{}\!\pm\!\VThreeFixedAFABSd{}$ & $\VThreeFixedAFACMean{}\!\pm\!\VThreeFixedAFACSd{}$ \\
MazeW-v2 & dense & --- & $\VThreeFixedAGABMean{}\!\pm\!\VThreeFixedAGABSd{}$ & $\VThreeFixedAGACMean{}\!\pm\!\VThreeFixedAGACSd{}$ \\
MazeW-v2 & sparse & --- & $\VThreeFixedAHABMean{}\!\pm\!\VThreeFixedAHABSd{}$ & $\VThreeFixedAHACMean{}\!\pm\!\VThreeFixedAHACSd{}$ \\
\bottomrule
\end{tabular}
\end{table}

\paragraph{Component averages and figure uncertainty.}
Table~\ref{tab:vthree-components-all} reports mean paired differences,
giving equal weight to each configuration and each seed.
For Figure~\ref{fig:vthree-components}, we first average paired differences
across the fixed evaluated configurations within each seed, then resample
the seed columns jointly. This preserves the
dependence created by reused seed identifiers and controls. A second interval
jointly resamples configuration rows and seed columns using 20,000 draws
with generator seed 20260923. This interval describes sensitivity to the
evaluated grid. Every configuration has equal
weight. The interior grid is $\lambda\in\{.6,.9,.95,.99\}$; the short-branch
contrast uses only the fixed $.95$ setting against $1$. Repeated $\lambda=1$
method aliases are excluded from component aggregates.

\begin{figure}[t]
\centering
\includegraphics[width=\textwidth]{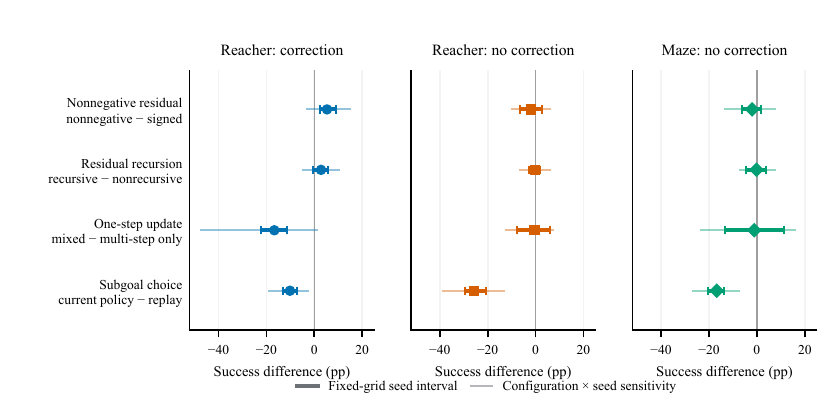}
\caption{\textbf{Component effects depend on the task and query distribution.}
Positive differences favor the left-hand condition. Thick bars resample
eight shared seed blocks on the fixed grid; thin bars also resample
configurations. Residual constraints, recursion, and query contrasts average all sixteen
task--weight settings per panel; the short-path contrast averages four
tasks. Nonnegative and query contrasts use the nonrecursive base.
Intervals are nominal $95\%$.}
\label{fig:vthree-components}
\end{figure}

\begin{table}[p]
\centering
\caption{\textbf{Interior-grid component averages.}
Differences are left minus right. Reacher/Maze use success percentage points;
Push/Fall use return. Correction is on/off. $K$ counts configurations.
Each entry is the mean paired difference across the evaluated configurations.
No family or metric is pooled with another.}
\label{tab:vthree-components-all}
\small
\setlength{\tabcolsep}{4pt}
\begin{tabular}{lllrr}
\toprule
Contrast & Tasks & Corr. & $K$ & Mean difference\\
\midrule
Recursive $-$ nonrecursive & Maze & off & \VThreeSummaryAAN{} & \VThreeSummaryAAMean{} \\
Recursive $-$ nonrecursive & Push/Fall & off & \VThreeSummaryABN{} & \VThreeSummaryABMean{} \\
Recursive $-$ nonrecursive & Push/Fall & on & \VThreeSummaryACN{} & \VThreeSummaryACMean{} \\
Recursive $-$ nonrecursive & Reacher & off & \VThreeSummaryADN{} & \VThreeSummaryADMean{} \\
Recursive $-$ nonrecursive & Reacher & on & \VThreeSummaryAEN{} & \VThreeSummaryAEMean{} \\
Nonnegative $-$ signed & Maze & off & \VThreeSummaryAFN{} & \VThreeSummaryAFMean{} \\
Nonnegative $-$ signed & Push/Fall & off & \VThreeSummaryAGN{} & \VThreeSummaryAGMean{} \\
Nonnegative $-$ signed & Push/Fall & on & \VThreeSummaryAHN{} & \VThreeSummaryAHMean{} \\
Nonnegative $-$ signed & Reacher & off & \VThreeSummaryAIN{} & \VThreeSummaryAIMean{} \\
Nonnegative $-$ signed & Reacher & on & \VThreeSummaryAJN{} & \VThreeSummaryAJMean{} \\
Policy $-$ replay & Maze & off & \VThreeSummaryAKN{} & \VThreeSummaryAKMean{} \\
Policy $-$ replay & Reacher & off & \VThreeSummaryALN{} & \VThreeSummaryALMean{} \\
Policy $-$ replay & Reacher & on & \VThreeSummaryAMN{} & \VThreeSummaryAMMean{} \\
Combined $-$ long-only & Maze & off & \VThreeSummaryANN{} & \VThreeSummaryANMean{} \\
Combined $-$ long-only & Push/Fall & off & \VThreeSummaryAON{} & \VThreeSummaryAOMean{} \\
Combined $-$ long-only & Push/Fall & on & \VThreeSummaryAPN{} & \VThreeSummaryAPMean{} \\
Combined $-$ long-only & Reacher & off & \VThreeSummaryAQN{} & \VThreeSummaryAQMean{} \\
Combined $-$ long-only & Reacher & on & \VThreeSummaryARN{} & \VThreeSummaryARMean{} \\
\bottomrule
\end{tabular}
\end{table}

\paragraph{What the component controls isolate.}
The No next residual versus Signed residual contrast changes the nonnegative
residual output and projected residual target together. It measures their
joint effect on task performance, with both favorable and unfavorable
cases included. Signed value-estimation error is measured separately by
the rollout-return diagnostic.
HTB versus No next residual changes the next-step residual in the implemented recursion.
Current-policy subgoal versus No next residual changes the query from replay to the current high-level policy.
All retain the same value/action-value residual factorization. The $.95$--$1$ contrast
keeps the remaining-time architecture but changes both objective weights.
The original grid therefore tests recursion, query selection, and the
backup mixture within a common factorized architecture.
Appendix~\ref{app:vseven-diagnostics} separately tests direct-$Q$ versus
shared-$V-A$ learning and the output-constraint-by-target-projection
factorial comparison under fixed policies.
Tables~\ref{tab:vthree-projection-reacher},
\ref{tab:vthree-projection-maze}, and~\ref{tab:vthree-projection-pushfall}
give every interior projection contrast; Tables~\ref{tab:vthree-nextgap-reacher},
\ref{tab:vthree-nextgap-maze}, and~\ref{tab:vthree-nextgap-pushfall}
give the recursion contrasts. Tables~\ref{tab:vthree-query-reacher}
and~\ref{tab:vthree-query-maze} give the complete query-distribution controls.

\paragraph{Fixed short-branch control.}
Table~\ref{tab:vthree-short} reports every task and mode at the fixed default.
Fall mean success increases under both modes.
Reacher effects vary across tasks, and some
correction-enabled tasks have lower success with mixed backups.
Eligibility is independent of the effect direction, so the complete table
locates both the benefit on Fall and the task dependence of the mixture.

\begin{table}[p]
\centering
\caption{\textbf{The short branch has task-dependent effects.}
Fixed HTB/$\lambda=.95$ minus Multi-step only at $\lambda=1$.
Success differences use percentage points; returns use the environment's
unscaled reward. All entries pair the same seeds within a task and mode.}
\label{tab:vthree-short}
\small
\setlength{\tabcolsep}{4pt}
\begin{tabular}{llllr}
\toprule
Environment & Reward & Correction & Metric & Mean difference\\
\midrule
Maze-v1 & dense & off & Success & \VThreeShortAAMean{} \\
Maze-v1 & sparse & off & Success & \VThreeShortABMean{} \\
MazeW-v2 & dense & off & Success & \VThreeShortACMean{} \\
MazeW-v2 & sparse & off & Success & \VThreeShortADMean{} \\
Fall & dense & off & Return & \VThreeShortAEMean{} \\
Fall & dense & off & Success & \VThreeShortAFMean{} \\
Fall & dense & on & Return & \VThreeShortAGMean{} \\
Fall & dense & on & Success & \VThreeShortAHMean{} \\
Push & dense & off & Return & \VThreeShortAIMean{} \\
Push & dense & off & Success & \VThreeShortAJMean{} \\
Push & dense & on & Return & \VThreeShortAKMean{} \\
Push & dense & on & Success & \VThreeShortALMean{} \\
Reacher-v0 & dense & off & Success & \VThreeShortAMMean{} \\
Reacher-v0 & dense & on & Success & \VThreeShortANMean{} \\
Reacher-v0 & sparse & off & Success & \VThreeShortAOMean{} \\
Reacher-v0 & sparse & on & Success & \VThreeShortAPMean{} \\
Reacher-v1 & dense & off & Success & \VThreeShortAQMean{} \\
Reacher-v1 & dense & on & Success & \VThreeShortARMean{} \\
Reacher-v1 & sparse & off & Success & \VThreeShortASMean{} \\
Reacher-v1 & sparse & on & Success & \VThreeShortATMean{} \\
\bottomrule
\end{tabular}
\end{table}

\par\noindent\begin{minipage}{\textwidth}
\paragraph{Learning curves and integrated return.}
The fixed-setting success curves in Figure~\ref{fig:vthree-learning}
and return curves in Figure~\ref{fig:vfour-return} show 200 equally spaced
evaluations from 50,000 to ten million environment steps for each of
the thirty runs.
The same five seeds are used for each arm. Bands enumerate the empirical
bootstrap of seed curves at each time point, giving pointwise coverage;
simultaneous coverage is not assessed. All evaluation points are shown without a moving average.
Table~\ref{tab:vthree-auc} compares the time-normalized integral over
$[50{,}000,10{,}000{,}000]$ with the final return. This distinguishes
sample-efficiency behavior from the endpoint comparison.
\end{minipage}\par

\begin{figure}[t]
\centering
\includegraphics[width=\textwidth]{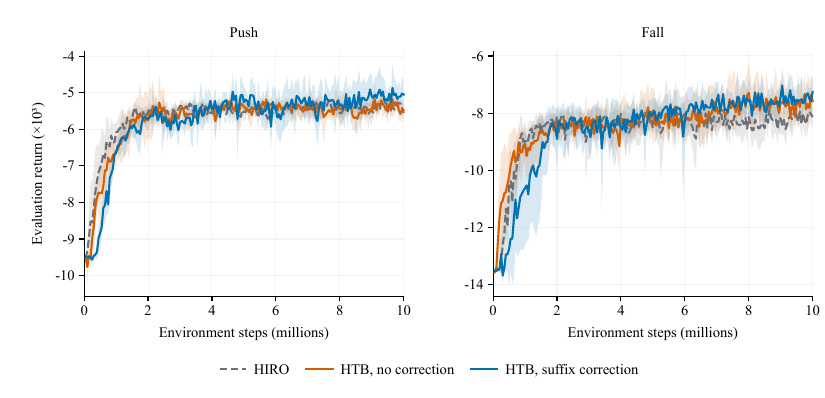}
\caption{\textbf{Final return and early learning can favor different settings.}
Actual returns at all 200 evaluations from $0.05$M to $10$M steps.
Both HTB conditions use $\lambda=.95$. Lines average five matched seeds;
bands are pointwise $95\%$ empirical bootstrap intervals.
No temporal smoothing is applied.}
\label{fig:vfour-return}
\end{figure}

\begin{table}[t]
\centering
\caption{\textbf{Integrated return and final return at the fixed configuration.}
HTB/.95 minus HIRO, five paired seeds. AUC is the trapezoidal return integral
divided by the duration of its observed step window. Each cell gives the
mean paired difference.}
\label{tab:vthree-auc}
\small
\setlength{\tabcolsep}{4pt}
\begin{tabular}{llrr}
\toprule
Task & HTB correction & Normalized AUC difference & Final-return difference\\
\midrule
Push & on & \VThreeAucAAAAMean{} & \VThreeAucAAABMean{} \\
Push & off & \VThreeAucABAAMean{} & \VThreeAucABABMean{} \\
Fall & on & \VThreeAucACAAMean{} & \VThreeAucACABMean{} \\
Fall & off & \VThreeAucADAAMean{} & \VThreeAucADABMean{} \\
\bottomrule
\end{tabular}
\end{table}

\begin{table}[t]
\centering
\caption{\textbf{Fixed HTB settings against the Push/Fall HIRO reference.}
Five-seed means at ten million steps; both HTB modes use $\lambda=.95$.
The HIRO rows use paper-style correction.}
\label{tab:fixed-reference}
\small
\begin{tabular}{llrr}
\toprule
Environment & Setting & Return & Success (\%)\\
\midrule
Push & HIRO & -5553.6 & 0.0 \\
Push & HTB, suffix correction & -5045.3 & 14.0 \\
Push & HTB, no correction & -5478.1 & 0.8 \\
Fall & HIRO & -8137.8 & 9.6 \\
Fall & HTB, suffix correction & -7262.4 & 32.8 \\
Fall & HTB, no correction & -7577.4 & 24.4 \\
\bottomrule
\end{tabular}
\end{table}

\paragraph{Parameter sharing across remaining-time horizons.}
Table~\ref{tab:vthree-sharing} compares \VThreeSharingRows{} runs
of the multi-step-only baseline. Independent horizon actors/critics,
a shared time-conditioned critic, and shared time-conditioned actor/critic
networks all retain suffix expansion and correction. The ten-seed comparison
changes parameter sharing across horizons, whereas the $V-A$ comparison
tests sharing across actions at a fixed remaining time. The eight overlapping
independent-network Reacher seed values are the same observations as the
Multi-step only reference. All curves reach one million environment steps.

\begin{table}[t]
\centering
\caption{\textbf{Horizon-sharing effects vary across goal-reaching tasks.}
Multi-step only algorithms with correction, final-window success (\%), mean $\pm$ SD
over ten seeds. The architecture intervention differs from separating $V$
and $A$ within an action-conditioned value.}
\label{tab:vthree-sharing}
\small
\setlength{\tabcolsep}{4pt}
\begin{tabular}{llrrr}
\toprule
Task & Reward & Independent & Shared critic & Shared actor/critic\\
\midrule
Reacher-v0 & dense & $\VThreeSharingAAAAMean{}\!\pm\!\VThreeSharingAAAASd{}$ & $\VThreeSharingAAABMean{}\!\pm\!\VThreeSharingAAABSd{}$ & $\VThreeSharingAAACMean{}\!\pm\!\VThreeSharingAAACSd{}$ \\
Reacher-v0 & sparse & $\VThreeSharingABAAMean{}\!\pm\!\VThreeSharingABAASd{}$ & $\VThreeSharingABABMean{}\!\pm\!\VThreeSharingABABSd{}$ & $\VThreeSharingABACMean{}\!\pm\!\VThreeSharingABACSd{}$ \\
Maze-v1 & dense & $\VThreeSharingACAAMean{}\!\pm\!\VThreeSharingACAASd{}$ & $\VThreeSharingACABMean{}\!\pm\!\VThreeSharingACABSd{}$ & $\VThreeSharingACACMean{}\!\pm\!\VThreeSharingACACSd{}$ \\
Maze-v1 & sparse & $\VThreeSharingADAAMean{}\!\pm\!\VThreeSharingADAASd{}$ & $\VThreeSharingADABMean{}\!\pm\!\VThreeSharingADABSd{}$ & $\VThreeSharingADACMean{}\!\pm\!\VThreeSharingADACSd{}$ \\
MazeW-v2 & dense & $\VThreeSharingAEAAMean{}\!\pm\!\VThreeSharingAEAASd{}$ & $\VThreeSharingAEABMean{}\!\pm\!\VThreeSharingAEABSd{}$ & $\VThreeSharingAEACMean{}\!\pm\!\VThreeSharingAEACSd{}$ \\
MazeW-v2 & sparse & $\VThreeSharingAFAAMean{}\!\pm\!\VThreeSharingAFAASd{}$ & $\VThreeSharingAFABMean{}\!\pm\!\VThreeSharingAFABSd{}$ & $\VThreeSharingAFACMean{}\!\pm\!\VThreeSharingAFACSd{}$ \\
\bottomrule
\end{tabular}
\end{table}

\paragraph{Full correction sweeps.}
Figure~\ref{fig:pushfall} displays every Push/Fall correction contrast and
Figure~\ref{fig:reacher} gives Reacher strata. Both effect directions are
represented throughout. The complete endpoint
table and Reacher endpoint sensitivities are retained in
Appendix~\ref{app:empirical}.
For full HTB on Push, the paired change in the no-correction-minus-correction
return effect between $\lambda=.6$ and $.9$ is
\VFourFullPushInteractionMean{}
[\VFourFullPushInteractionLo{}, \VFourFullPushInteractionHi{}].
Each interaction pairs the same five seeds across all four conditions.
The corresponding Fall interaction is \VFourFullFallInteractionMean{}
[\VFourFullFallInteractionLo{}, \VFourFullFallInteractionHi{}].

\begin{figure}[p]
\centering
\includegraphics[width=\textwidth]{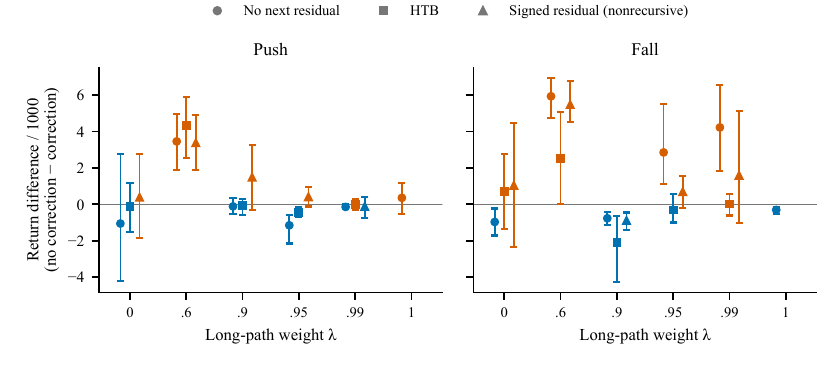}
\caption{\textbf{The benefit of correction reverses with the mixture.}
Positive final-return differences favor disabling correction.
Points average \PFSeeds{} paired seeds; bars enumerate
\PFExactResamples{} empirical bootstrap resamples. Symbols identify
methods; identical $\lambda=1$ results are shown once per environment.}
\label{fig:pushfall}
\end{figure}

\begin{figure}[p]
\centering
\includegraphics[width=\textwidth]{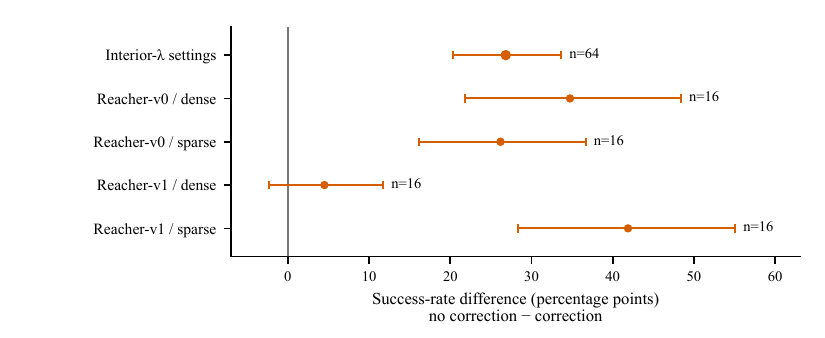}
\caption{\textbf{The correction effect depends on the task.}
Mean final-window success differences on the interior-$\lambda$ grid.
Intervals resample configurations and their \ReacherSeeds{} paired seeds;
$n$ counts configurations. Each configuration has equal weight. This
correction-effect bootstrap differs from the shared-seed-block
component sensitivity defined in this appendix.}
\label{fig:reacher}
\end{figure}

\begin{table}[p]
\centering
\caption{\textbf{Complete Reacher nonnegative-residual contrasts.}
No next residual minus Signed residual; success difference in percentage points,
eight paired seeds. Both conditions use the nonrecursive base.
Correction is on/off. All interior weights and both effect directions appear.}
\label{tab:vthree-projection-reacher}
\small
\begin{tabular}{llllr}
\toprule
Task & Reward & Corr. & $\lambda$ & Mean difference\\
\midrule
Reacher-v0 & dense & off & $0.6$ & \VThreeFullProjectionReacherAAMean{} \\
Reacher-v0 & dense & off & $0.9$ & \VThreeFullProjectionReacherABMean{} \\
Reacher-v0 & dense & off & $0.95$ & \VThreeFullProjectionReacherACMean{} \\
Reacher-v0 & dense & off & $0.99$ & \VThreeFullProjectionReacherADMean{} \\
Reacher-v0 & dense & on & $0.6$ & \VThreeFullProjectionReacherAEMean{} \\
Reacher-v0 & dense & on & $0.9$ & \VThreeFullProjectionReacherAFMean{} \\
Reacher-v0 & dense & on & $0.95$ & \VThreeFullProjectionReacherAGMean{} \\
Reacher-v0 & dense & on & $0.99$ & \VThreeFullProjectionReacherAHMean{} \\
Reacher-v0 & sparse & off & $0.6$ & \VThreeFullProjectionReacherAIMean{} \\
Reacher-v0 & sparse & off & $0.9$ & \VThreeFullProjectionReacherAJMean{} \\
Reacher-v0 & sparse & off & $0.95$ & \VThreeFullProjectionReacherAKMean{} \\
Reacher-v0 & sparse & off & $0.99$ & \VThreeFullProjectionReacherALMean{} \\
Reacher-v0 & sparse & on & $0.6$ & \VThreeFullProjectionReacherAMMean{} \\
Reacher-v0 & sparse & on & $0.9$ & \VThreeFullProjectionReacherANMean{} \\
Reacher-v0 & sparse & on & $0.95$ & \VThreeFullProjectionReacherAOMean{} \\
Reacher-v0 & sparse & on & $0.99$ & \VThreeFullProjectionReacherAPMean{} \\
Reacher-v1 & dense & off & $0.6$ & \VThreeFullProjectionReacherAQMean{} \\
Reacher-v1 & dense & off & $0.9$ & \VThreeFullProjectionReacherARMean{} \\
Reacher-v1 & dense & off & $0.95$ & \VThreeFullProjectionReacherASMean{} \\
Reacher-v1 & dense & off & $0.99$ & \VThreeFullProjectionReacherATMean{} \\
Reacher-v1 & dense & on & $0.6$ & \VThreeFullProjectionReacherAUMean{} \\
Reacher-v1 & dense & on & $0.9$ & \VThreeFullProjectionReacherAVMean{} \\
Reacher-v1 & dense & on & $0.95$ & \VThreeFullProjectionReacherAWMean{} \\
Reacher-v1 & dense & on & $0.99$ & \VThreeFullProjectionReacherAXMean{} \\
Reacher-v1 & sparse & off & $0.6$ & \VThreeFullProjectionReacherAYMean{} \\
Reacher-v1 & sparse & off & $0.9$ & \VThreeFullProjectionReacherAZMean{} \\
Reacher-v1 & sparse & off & $0.95$ & \VThreeFullProjectionReacherBAMean{} \\
Reacher-v1 & sparse & off & $0.99$ & \VThreeFullProjectionReacherBBMean{} \\
Reacher-v1 & sparse & on & $0.6$ & \VThreeFullProjectionReacherBCMean{} \\
Reacher-v1 & sparse & on & $0.9$ & \VThreeFullProjectionReacherBDMean{} \\
Reacher-v1 & sparse & on & $0.95$ & \VThreeFullProjectionReacherBEMean{} \\
Reacher-v1 & sparse & on & $0.99$ & \VThreeFullProjectionReacherBFMean{} \\
\bottomrule
\end{tabular}
\end{table}

\begin{table}[p]
\centering
\caption{\textbf{Complete Maze nonnegative-residual contrasts.}
No next residual minus Signed residual on the nonrecursive base; success difference
in percentage points, eight paired seeds.}
\label{tab:vthree-projection-maze}
\small
\begin{tabular}{llllr}
\toprule
Task & Reward & Corr. & $\lambda$ & Mean difference\\
\midrule
Maze-v1 & dense & off & $0.6$ & \VThreeFullProjectionMazeavailableAAMean{} \\
Maze-v1 & dense & off & $0.9$ & \VThreeFullProjectionMazeavailableABMean{} \\
Maze-v1 & dense & off & $0.95$ & \VThreeFullProjectionMazeavailableACMean{} \\
Maze-v1 & dense & off & $0.99$ & \VThreeFullProjectionMazeavailableADMean{} \\
Maze-v1 & sparse & off & $0.6$ & \VThreeFullProjectionMazeavailableAEMean{} \\
Maze-v1 & sparse & off & $0.9$ & \VThreeFullProjectionMazeavailableAFMean{} \\
Maze-v1 & sparse & off & $0.95$ & \VThreeFullProjectionMazeavailableAGMean{} \\
Maze-v1 & sparse & off & $0.99$ & \VThreeFullProjectionMazeavailableAHMean{} \\
MazeW-v2 & dense & off & $0.6$ & \VThreeFullProjectionMazeavailableAIMean{} \\
MazeW-v2 & dense & off & $0.9$ & \VThreeFullProjectionMazeavailableAJMean{} \\
MazeW-v2 & dense & off & $0.95$ & \VThreeFullProjectionMazeavailableAKMean{} \\
MazeW-v2 & dense & off & $0.99$ & \VThreeFullProjectionMazeavailableALMean{} \\
MazeW-v2 & sparse & off & $0.6$ & \VThreeFullProjectionMazeavailableAMMean{} \\
MazeW-v2 & sparse & off & $0.9$ & \VThreeFullProjectionMazeavailableANMean{} \\
MazeW-v2 & sparse & off & $0.95$ & \VThreeFullProjectionMazeavailableAOMean{} \\
MazeW-v2 & sparse & off & $0.99$ & \VThreeFullProjectionMazeavailableAPMean{} \\
\bottomrule
\end{tabular}
\end{table}

\begin{table}[p]
\centering
\caption{\textbf{Complete Push/Fall nonnegative-residual contrasts.}
No next residual minus Signed residual on the nonrecursive base;
final-return difference, five paired seeds.}
\label{tab:vthree-projection-pushfall}
\small
\begin{tabular}{llllr}
\toprule
Task & Reward & Corr. & $\lambda$ & Mean difference\\
\midrule
Fall & dense & off & $0.6$ & \VThreeFullProjectionPushfallAAMean{} \\
Fall & dense & off & $0.9$ & \VThreeFullProjectionPushfallABMean{} \\
Fall & dense & off & $0.95$ & \VThreeFullProjectionPushfallACMean{} \\
Fall & dense & off & $0.99$ & \VThreeFullProjectionPushfallADMean{} \\
Fall & dense & on & $0.6$ & \VThreeFullProjectionPushfallAEMean{} \\
Fall & dense & on & $0.9$ & \VThreeFullProjectionPushfallAFMean{} \\
Fall & dense & on & $0.95$ & \VThreeFullProjectionPushfallAGMean{} \\
Fall & dense & on & $0.99$ & \VThreeFullProjectionPushfallAHMean{} \\
Push & dense & off & $0.6$ & \VThreeFullProjectionPushfallAIMean{} \\
Push & dense & off & $0.9$ & \VThreeFullProjectionPushfallAJMean{} \\
Push & dense & off & $0.95$ & \VThreeFullProjectionPushfallAKMean{} \\
Push & dense & off & $0.99$ & \VThreeFullProjectionPushfallALMean{} \\
Push & dense & on & $0.6$ & \VThreeFullProjectionPushfallAMMean{} \\
Push & dense & on & $0.9$ & \VThreeFullProjectionPushfallANMean{} \\
Push & dense & on & $0.95$ & \VThreeFullProjectionPushfallAOMean{} \\
Push & dense & on & $0.99$ & \VThreeFullProjectionPushfallAPMean{} \\
\bottomrule
\end{tabular}
\end{table}

\begin{table}[p]
\centering
\caption{\textbf{Complete Reacher residual-recursion contrasts.}
HTB minus No next residual; success difference in percentage points,
eight paired seeds.}
\label{tab:vthree-nextgap-reacher}
\small
\begin{tabular}{llllr}
\toprule
Task & Reward & Corr. & $\lambda$ & Mean difference\\
\midrule
Reacher-v0 & dense & off & $0.6$ & \VThreeFullNextgapReacherAAMean{} \\
Reacher-v0 & dense & off & $0.9$ & \VThreeFullNextgapReacherABMean{} \\
Reacher-v0 & dense & off & $0.95$ & \VThreeFullNextgapReacherACMean{} \\
Reacher-v0 & dense & off & $0.99$ & \VThreeFullNextgapReacherADMean{} \\
Reacher-v0 & dense & on & $0.6$ & \VThreeFullNextgapReacherAEMean{} \\
Reacher-v0 & dense & on & $0.9$ & \VThreeFullNextgapReacherAFMean{} \\
Reacher-v0 & dense & on & $0.95$ & \VThreeFullNextgapReacherAGMean{} \\
Reacher-v0 & dense & on & $0.99$ & \VThreeFullNextgapReacherAHMean{} \\
Reacher-v0 & sparse & off & $0.6$ & \VThreeFullNextgapReacherAIMean{} \\
Reacher-v0 & sparse & off & $0.9$ & \VThreeFullNextgapReacherAJMean{} \\
Reacher-v0 & sparse & off & $0.95$ & \VThreeFullNextgapReacherAKMean{} \\
Reacher-v0 & sparse & off & $0.99$ & \VThreeFullNextgapReacherALMean{} \\
Reacher-v0 & sparse & on & $0.6$ & \VThreeFullNextgapReacherAMMean{} \\
Reacher-v0 & sparse & on & $0.9$ & \VThreeFullNextgapReacherANMean{} \\
Reacher-v0 & sparse & on & $0.95$ & \VThreeFullNextgapReacherAOMean{} \\
Reacher-v0 & sparse & on & $0.99$ & \VThreeFullNextgapReacherAPMean{} \\
Reacher-v1 & dense & off & $0.6$ & \VThreeFullNextgapReacherAQMean{} \\
Reacher-v1 & dense & off & $0.9$ & \VThreeFullNextgapReacherARMean{} \\
Reacher-v1 & dense & off & $0.95$ & \VThreeFullNextgapReacherASMean{} \\
Reacher-v1 & dense & off & $0.99$ & \VThreeFullNextgapReacherATMean{} \\
Reacher-v1 & dense & on & $0.6$ & \VThreeFullNextgapReacherAUMean{} \\
Reacher-v1 & dense & on & $0.9$ & \VThreeFullNextgapReacherAVMean{} \\
Reacher-v1 & dense & on & $0.95$ & \VThreeFullNextgapReacherAWMean{} \\
Reacher-v1 & dense & on & $0.99$ & \VThreeFullNextgapReacherAXMean{} \\
Reacher-v1 & sparse & off & $0.6$ & \VThreeFullNextgapReacherAYMean{} \\
Reacher-v1 & sparse & off & $0.9$ & \VThreeFullNextgapReacherAZMean{} \\
Reacher-v1 & sparse & off & $0.95$ & \VThreeFullNextgapReacherBAMean{} \\
Reacher-v1 & sparse & off & $0.99$ & \VThreeFullNextgapReacherBBMean{} \\
Reacher-v1 & sparse & on & $0.6$ & \VThreeFullNextgapReacherBCMean{} \\
Reacher-v1 & sparse & on & $0.9$ & \VThreeFullNextgapReacherBDMean{} \\
Reacher-v1 & sparse & on & $0.95$ & \VThreeFullNextgapReacherBEMean{} \\
Reacher-v1 & sparse & on & $0.99$ & \VThreeFullNextgapReacherBFMean{} \\
\bottomrule
\end{tabular}
\end{table}

\begin{table}[p]
\centering
\caption{\textbf{Complete Maze residual-recursion contrasts.}
HTB minus No next residual; success difference in percentage points,
eight paired seeds.}
\label{tab:vthree-nextgap-maze}
\small
\begin{tabular}{llllr}
\toprule
Task & Reward & Corr. & $\lambda$ & Mean difference\\
\midrule
Maze-v1 & dense & off & $0.6$ & \VThreeFullNextgapMazeavailableAAMean{} \\
Maze-v1 & dense & off & $0.9$ & \VThreeFullNextgapMazeavailableABMean{} \\
Maze-v1 & dense & off & $0.95$ & \VThreeFullNextgapMazeavailableACMean{} \\
Maze-v1 & dense & off & $0.99$ & \VThreeFullNextgapMazeavailableADMean{} \\
Maze-v1 & sparse & off & $0.6$ & \VThreeFullNextgapMazeavailableAEMean{} \\
Maze-v1 & sparse & off & $0.9$ & \VThreeFullNextgapMazeavailableAFMean{} \\
Maze-v1 & sparse & off & $0.95$ & \VThreeFullNextgapMazeavailableAGMean{} \\
Maze-v1 & sparse & off & $0.99$ & \VThreeFullNextgapMazeavailableAHMean{} \\
MazeW-v2 & dense & off & $0.6$ & \VThreeFullNextgapMazeavailableAIMean{} \\
MazeW-v2 & dense & off & $0.9$ & \VThreeFullNextgapMazeavailableAJMean{} \\
MazeW-v2 & dense & off & $0.95$ & \VThreeFullNextgapMazeavailableAKMean{} \\
MazeW-v2 & dense & off & $0.99$ & \VThreeFullNextgapMazeavailableALMean{} \\
MazeW-v2 & sparse & off & $0.6$ & \VThreeFullNextgapMazeavailableAMMean{} \\
MazeW-v2 & sparse & off & $0.9$ & \VThreeFullNextgapMazeavailableANMean{} \\
MazeW-v2 & sparse & off & $0.95$ & \VThreeFullNextgapMazeavailableAOMean{} \\
MazeW-v2 & sparse & off & $0.99$ & \VThreeFullNextgapMazeavailableAPMean{} \\
\bottomrule
\end{tabular}
\end{table}

\begin{table}[p]
\centering
\caption{\textbf{Complete Push/Fall residual-recursion contrasts.}
HTB minus No next residual; final-return difference, five paired seeds.}
\label{tab:vthree-nextgap-pushfall}
\small
\begin{tabular}{llllr}
\toprule
Task & Reward & Corr. & $\lambda$ & Mean difference\\
\midrule
Fall & dense & off & $0.6$ & \VThreeFullNextgapPushfallAAMean{} \\
Fall & dense & off & $0.9$ & \VThreeFullNextgapPushfallABMean{} \\
Fall & dense & off & $0.95$ & \VThreeFullNextgapPushfallACMean{} \\
Fall & dense & off & $0.99$ & \VThreeFullNextgapPushfallADMean{} \\
Fall & dense & on & $0.6$ & \VThreeFullNextgapPushfallAEMean{} \\
Fall & dense & on & $0.9$ & \VThreeFullNextgapPushfallAFMean{} \\
Fall & dense & on & $0.95$ & \VThreeFullNextgapPushfallAGMean{} \\
Fall & dense & on & $0.99$ & \VThreeFullNextgapPushfallAHMean{} \\
Push & dense & off & $0.6$ & \VThreeFullNextgapPushfallAIMean{} \\
Push & dense & off & $0.9$ & \VThreeFullNextgapPushfallAJMean{} \\
Push & dense & off & $0.95$ & \VThreeFullNextgapPushfallAKMean{} \\
Push & dense & off & $0.99$ & \VThreeFullNextgapPushfallALMean{} \\
Push & dense & on & $0.6$ & \VThreeFullNextgapPushfallAMMean{} \\
Push & dense & on & $0.9$ & \VThreeFullNextgapPushfallANMean{} \\
Push & dense & on & $0.95$ & \VThreeFullNextgapPushfallAOMean{} \\
Push & dense & on & $0.99$ & \VThreeFullNextgapPushfallAPMean{} \\
\bottomrule
\end{tabular}
\end{table}

\begin{table}[p]
\centering
\caption{\textbf{Complete Reacher query-distribution contrasts.}
Current-policy subgoal minus No next residual on the nonrecursive base;
success difference in percentage points, eight paired seeds.
Negative values favor replay goals over current high-level policy queries.}
\label{tab:vthree-query-reacher}
\small
\begin{tabular}{llllr}
\toprule
Task & Reward & Corr. & $\lambda$ & Mean difference\\
\midrule
Reacher-v0 & dense & off & $0.6$ & \VThreeFullQueryReacherAAMean{} \\
Reacher-v0 & dense & off & $0.9$ & \VThreeFullQueryReacherABMean{} \\
Reacher-v0 & dense & off & $0.95$ & \VThreeFullQueryReacherACMean{} \\
Reacher-v0 & dense & off & $0.99$ & \VThreeFullQueryReacherADMean{} \\
Reacher-v0 & dense & on & $0.6$ & \VThreeFullQueryReacherAEMean{} \\
Reacher-v0 & dense & on & $0.9$ & \VThreeFullQueryReacherAFMean{} \\
Reacher-v0 & dense & on & $0.95$ & \VThreeFullQueryReacherAGMean{} \\
Reacher-v0 & dense & on & $0.99$ & \VThreeFullQueryReacherAHMean{} \\
Reacher-v0 & sparse & off & $0.6$ & \VThreeFullQueryReacherAIMean{} \\
Reacher-v0 & sparse & off & $0.9$ & \VThreeFullQueryReacherAJMean{} \\
Reacher-v0 & sparse & off & $0.95$ & \VThreeFullQueryReacherAKMean{} \\
Reacher-v0 & sparse & off & $0.99$ & \VThreeFullQueryReacherALMean{} \\
Reacher-v0 & sparse & on & $0.6$ & \VThreeFullQueryReacherAMMean{} \\
Reacher-v0 & sparse & on & $0.9$ & \VThreeFullQueryReacherANMean{} \\
Reacher-v0 & sparse & on & $0.95$ & \VThreeFullQueryReacherAOMean{} \\
Reacher-v0 & sparse & on & $0.99$ & \VThreeFullQueryReacherAPMean{} \\
Reacher-v1 & dense & off & $0.6$ & \VThreeFullQueryReacherAQMean{} \\
Reacher-v1 & dense & off & $0.9$ & \VThreeFullQueryReacherARMean{} \\
Reacher-v1 & dense & off & $0.95$ & \VThreeFullQueryReacherASMean{} \\
Reacher-v1 & dense & off & $0.99$ & \VThreeFullQueryReacherATMean{} \\
Reacher-v1 & dense & on & $0.6$ & \VThreeFullQueryReacherAUMean{} \\
Reacher-v1 & dense & on & $0.9$ & \VThreeFullQueryReacherAVMean{} \\
Reacher-v1 & dense & on & $0.95$ & \VThreeFullQueryReacherAWMean{} \\
Reacher-v1 & dense & on & $0.99$ & \VThreeFullQueryReacherAXMean{} \\
Reacher-v1 & sparse & off & $0.6$ & \VThreeFullQueryReacherAYMean{} \\
Reacher-v1 & sparse & off & $0.9$ & \VThreeFullQueryReacherAZMean{} \\
Reacher-v1 & sparse & off & $0.95$ & \VThreeFullQueryReacherBAMean{} \\
Reacher-v1 & sparse & off & $0.99$ & \VThreeFullQueryReacherBBMean{} \\
Reacher-v1 & sparse & on & $0.6$ & \VThreeFullQueryReacherBCMean{} \\
Reacher-v1 & sparse & on & $0.9$ & \VThreeFullQueryReacherBDMean{} \\
Reacher-v1 & sparse & on & $0.95$ & \VThreeFullQueryReacherBEMean{} \\
Reacher-v1 & sparse & on & $0.99$ & \VThreeFullQueryReacherBFMean{} \\
\bottomrule
\end{tabular}
\end{table}

\begin{table}[p]
\centering
\caption{\textbf{Complete Maze query-distribution contrasts.}
Current-policy subgoal minus No next residual on the nonrecursive base;
success difference in percentage points, eight paired seeds.}
\label{tab:vthree-query-maze}
\small
\begin{tabular}{llllr}
\toprule
Task & Reward & Corr. & $\lambda$ & Mean difference\\
\midrule
Maze-v1 & dense & off & $0.6$ & \VThreeFullQueryMazeavailableAAMean{} \\
Maze-v1 & dense & off & $0.9$ & \VThreeFullQueryMazeavailableABMean{} \\
Maze-v1 & dense & off & $0.95$ & \VThreeFullQueryMazeavailableACMean{} \\
Maze-v1 & dense & off & $0.99$ & \VThreeFullQueryMazeavailableADMean{} \\
Maze-v1 & sparse & off & $0.6$ & \VThreeFullQueryMazeavailableAEMean{} \\
Maze-v1 & sparse & off & $0.9$ & \VThreeFullQueryMazeavailableAFMean{} \\
Maze-v1 & sparse & off & $0.95$ & \VThreeFullQueryMazeavailableAGMean{} \\
Maze-v1 & sparse & off & $0.99$ & \VThreeFullQueryMazeavailableAHMean{} \\
MazeW-v2 & dense & off & $0.6$ & \VThreeFullQueryMazeavailableAIMean{} \\
MazeW-v2 & dense & off & $0.9$ & \VThreeFullQueryMazeavailableAJMean{} \\
MazeW-v2 & dense & off & $0.95$ & \VThreeFullQueryMazeavailableAKMean{} \\
MazeW-v2 & dense & off & $0.99$ & \VThreeFullQueryMazeavailableALMean{} \\
MazeW-v2 & sparse & off & $0.6$ & \VThreeFullQueryMazeavailableAMMean{} \\
MazeW-v2 & sparse & off & $0.9$ & \VThreeFullQueryMazeavailableANMean{} \\
MazeW-v2 & sparse & off & $0.95$ & \VThreeFullQueryMazeavailableAOMean{} \\
MazeW-v2 & sparse & off & $0.99$ & \VThreeFullQueryMazeavailableAPMean{} \\
\bottomrule
\end{tabular}
\end{table}

\clearpage
\section{Supplementary value-estimation diagnostics}
\label{app:vseven-diagnostics}
These diagnostics isolate the estimation mechanisms behind the design in
Section~\ref{sec:experiments}. Matched-supervision critic fitting tests
whether a shared value improves generalization across primitive actions. A four-arm
comparison separates residual-output and target constraints, and a finite
MDP measures their interaction with maximization feedback. The end-to-end
task-success experiments remain the performance evaluation.

\subsection{Frozen-policy rollout returns}
We use \VSevenPolicySeeds{} trained policies per task from
Reacher-v0 dense and Reacher-v1 sparse, giving \VSevenUnits{} policy units.
The source dense policies use the signed-residual, nonrecursive variant
with relabeling at $\lambda=.9$; the sparse policies use full HTB without
relabeling at $\lambda=.95$. All policies were trained for one
million environment steps. The high-level and low-level actors remain
fixed throughout the diagnostic.

Each unit contains \VSevenEpisodes{} episodes, divided by complete episode
into \VSevenTrainEpisodes{} training, \VSevenCalEpisodes{} calibration,
and \VSevenTestEpisodes{} test episodes. The physical starting state is
fixed and task goals are sampled on reset. We retain
\VSevenSnapshots{} simulator snapshots per unit at early, middle, and late
episode stages, with no
overlap of state--task-goal--remaining-time inputs between splits.
Each snapshot has \VSevenGoalSlots{} candidate subgoal slots: the inherited
subgoal, the current $H$-step high-level proposal, the clipped displacement
to the task goal, and uniformly sampled subgoals within the allowed bounds.
For each candidate subgoal, \VSevenActions{} primitive actions are evaluated: the
deterministic low-level action, a clipped Gaussian perturbation, and a
uniformly sampled action. This gives \VSevenBankRows{} return labels per unit.

We restore the full simulator and wrapper state before each branch,
including episode time and random-generator state. Restoration is checked
against repeated action sequences. Each tested action is applied for one
primitive step from the restored state, which may lie anywhere along an
episode. It is followed
by the frozen low-level policy for the remaining subgoal duration; the
frozen high-level policy selects subsequent subgoals. Branches run to the
original episode termination without a critic bootstrap. Rewards use the
training reward scale $.1$, and discount $.99$ is applied only at high-level
decision boundaries. The environment and evaluation policies are deterministic,
so each label is the complete return for its prescribed branch.
The generalization distribution consists of new episode snapshots with
the same fixed starting state and randomized task goals; new starting
states and out-of-distribution tasks are outside this evaluation.

\subsection{Matched-supervision representation comparison}
The direct estimator has twin $Q$ networks with
\VSevenDirectParams{} trainable parameters; the factorized estimator has
twin $V$ networks and a shared signed residual network with
\VSevenSharedParams{} parameters. Both retain separate remaining-time
branches. They start at the same prediction $Q=0$; the factorized model
uses $V=.1$ and $A=.1$.
Both optimize the same action-value labels and a unit-weight policy-value
anchor. The anchor trains $V$ in the factorized estimator and
$Q(s,G,g,\mu(s,g))$ in the direct estimator, using the same policy-return
label. The controlled intervention is the shared-value learning design:
identical supervision is applied through estimator-specific action-value
and policy-value outputs.

Returns for the policy action and its Gaussian perturbation are available
for training; the uniformly sampled action's return is excluded
from every training and anchor loss. Training uses the same minibatch
stream, optimizer settings, and \VSevenUpdates{} gradient updates.
No actor is updated and the supervised-return experiment uses no target
network. Calibration episodes are kept separate from the final test split.
Table~\ref{tab:vseven-representation} separates two tests.
\emph{Separate test trajectories} evaluates all three action types at
states sampled from trajectories excluded from fitting and calibration.
\emph{Actions excluded from fitting} keeps $(s,G,g,t)$ at the training
inputs and tests the uniformly sampled actions whose returns entered no
loss. Their exact full $(s,G,g,a,t)$ inputs are also absent from fitting.
This second test asks whether supervision at other actions improves
the predicted return for the excluded action at the same input.
Figure~\ref{fig:vseven-estimation} shows the resulting learning curves.
Table~\ref{tab:vseven-neural} gives the complete final comparison.

\begin{figure}[t]
\centering
\includegraphics[width=\textwidth]{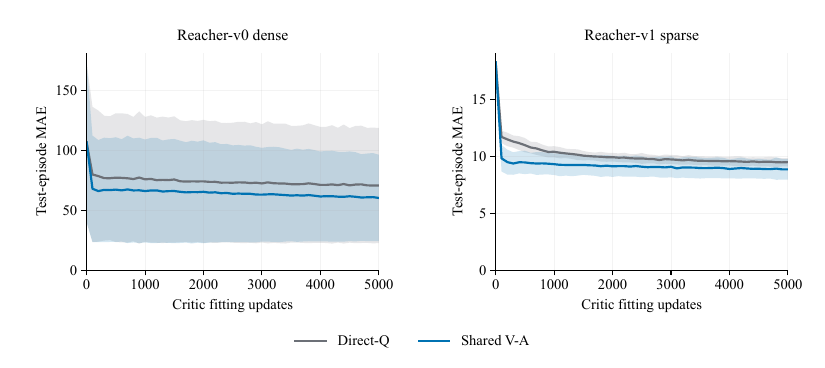}
\caption{\textbf{Value decomposition reduces error under matched supervision.}
MAE on separate test trajectories during fixed-policy fitting, averaged over
\VSevenPolicySeeds{} policy seeds per task. Shading is one standard
deviation across policy seeds. The horizontal axis counts critic-fitting
updates under frozen policies.}
\label{fig:vseven-estimation}
\end{figure}

\subsection{Separating residual output and target constraints}
A second experiment crosses nonnegative versus signed output with
projected versus signed target. These four estimators use a common
nonrecursive residual update, $\lambda=.6$, the same initialization,
balanced remaining-time batches, and target-network updates.
They learn from the original primitive transitions in the training
episodes; the counterfactual rollout labels are reserved for evaluation.
The policies remain fixed, no relabeling or target-action noise is used,
and the rollout bank supplies evaluation only.

For all neural estimators we evaluate the minimum of the twin reconstructed
action values. Subgoal regret compares the rollout value of the
critic-selected candidate with the best candidate's rollout value at the
same snapshot, using the deterministic policy action. It is distinct from
average calibration error. The two representation estimators and the four
projection estimators have different supervision protocols and are
compared within their respective experiment families.

\begin{table}[t]
\centering
\caption{\textbf{Complete neural diagnostic results.}
Mean $\pm$ SD across \VSevenPolicySeeds{} policy seeds, using candidate-slot weighting.
MAE and signed error use separate test trajectories; Action MAE evaluates their
uniformly sampled primitive actions, whose labels are excluded in the
representation comparison. Signed error is prediction minus rollout return.
Regret is measured among the evaluated subgoal candidates.}
\label{tab:vseven-neural}
\small
\setlength{\tabcolsep}{5pt}
\begin{tabular}{llrrrr}
\toprule
Task & Estimator & MAE & Action MAE & Signed error & Regret\\
\midrule
Reacher-v0 dense & Direct-Q & $\VSevenDenseDirectAllMean{}\!\pm\!\VSevenDenseDirectAllSd{}$ & $\VSevenDenseDirectActionMean{}\!\pm\!\VSevenDenseDirectActionSd{}$ & $\VSevenDenseDirectBiasMean{}\!\pm\!\VSevenDenseDirectBiasSd{}$ & $\VSevenDenseDirectRegretMean{}\!\pm\!\VSevenDenseDirectRegretSd{}$ \\
Reacher-v0 dense & Shared V-A & $\VSevenDenseSharedAllMean{}\!\pm\!\VSevenDenseSharedAllSd{}$ & $\VSevenDenseSharedActionMean{}\!\pm\!\VSevenDenseSharedActionSd{}$ & $\VSevenDenseSharedBiasMean{}\!\pm\!\VSevenDenseSharedBiasSd{}$ & $\VSevenDenseSharedRegretMean{}\!\pm\!\VSevenDenseSharedRegretSd{}$ \\
Reacher-v0 dense & Output+target & $\VSevenDenseOutputTargetAllMean{}\!\pm\!\VSevenDenseOutputTargetAllSd{}$ & $\VSevenDenseOutputTargetActionMean{}\!\pm\!\VSevenDenseOutputTargetActionSd{}$ & $\VSevenDenseOutputTargetBiasMean{}\!\pm\!\VSevenDenseOutputTargetBiasSd{}$ & $\VSevenDenseOutputTargetRegretMean{}\!\pm\!\VSevenDenseOutputTargetRegretSd{}$ \\
Reacher-v0 dense & Output only & $\VSevenDenseOutputOnlyAllMean{}\!\pm\!\VSevenDenseOutputOnlyAllSd{}$ & $\VSevenDenseOutputOnlyActionMean{}\!\pm\!\VSevenDenseOutputOnlyActionSd{}$ & $\VSevenDenseOutputOnlyBiasMean{}\!\pm\!\VSevenDenseOutputOnlyBiasSd{}$ & $\VSevenDenseOutputOnlyRegretMean{}\!\pm\!\VSevenDenseOutputOnlyRegretSd{}$ \\
Reacher-v0 dense & Target only & $\VSevenDenseTargetOnlyAllMean{}\!\pm\!\VSevenDenseTargetOnlyAllSd{}$ & $\VSevenDenseTargetOnlyActionMean{}\!\pm\!\VSevenDenseTargetOnlyActionSd{}$ & $\VSevenDenseTargetOnlyBiasMean{}\!\pm\!\VSevenDenseTargetOnlyBiasSd{}$ & $\VSevenDenseTargetOnlyRegretMean{}\!\pm\!\VSevenDenseTargetOnlyRegretSd{}$ \\
Reacher-v0 dense & Signed residual & $\VSevenDenseSignedAllMean{}\!\pm\!\VSevenDenseSignedAllSd{}$ & $\VSevenDenseSignedActionMean{}\!\pm\!\VSevenDenseSignedActionSd{}$ & $\VSevenDenseSignedBiasMean{}\!\pm\!\VSevenDenseSignedBiasSd{}$ & $\VSevenDenseSignedRegretMean{}\!\pm\!\VSevenDenseSignedRegretSd{}$ \\
Reacher-v1 sparse & Direct-Q & $\VSevenSparseDirectAllMean{}\!\pm\!\VSevenSparseDirectAllSd{}$ & $\VSevenSparseDirectActionMean{}\!\pm\!\VSevenSparseDirectActionSd{}$ & $\VSevenSparseDirectBiasMean{}\!\pm\!\VSevenSparseDirectBiasSd{}$ & $\VSevenSparseDirectRegretMean{}\!\pm\!\VSevenSparseDirectRegretSd{}$ \\
Reacher-v1 sparse & Shared V-A & $\VSevenSparseSharedAllMean{}\!\pm\!\VSevenSparseSharedAllSd{}$ & $\VSevenSparseSharedActionMean{}\!\pm\!\VSevenSparseSharedActionSd{}$ & $\VSevenSparseSharedBiasMean{}\!\pm\!\VSevenSparseSharedBiasSd{}$ & $\VSevenSparseSharedRegretMean{}\!\pm\!\VSevenSparseSharedRegretSd{}$ \\
Reacher-v1 sparse & Output+target & $\VSevenSparseOutputTargetAllMean{}\!\pm\!\VSevenSparseOutputTargetAllSd{}$ & $\VSevenSparseOutputTargetActionMean{}\!\pm\!\VSevenSparseOutputTargetActionSd{}$ & $\VSevenSparseOutputTargetBiasMean{}\!\pm\!\VSevenSparseOutputTargetBiasSd{}$ & $\VSevenSparseOutputTargetRegretMean{}\!\pm\!\VSevenSparseOutputTargetRegretSd{}$ \\
Reacher-v1 sparse & Output only & $\VSevenSparseOutputOnlyAllMean{}\!\pm\!\VSevenSparseOutputOnlyAllSd{}$ & $\VSevenSparseOutputOnlyActionMean{}\!\pm\!\VSevenSparseOutputOnlyActionSd{}$ & $\VSevenSparseOutputOnlyBiasMean{}\!\pm\!\VSevenSparseOutputOnlyBiasSd{}$ & $\VSevenSparseOutputOnlyRegretMean{}\!\pm\!\VSevenSparseOutputOnlyRegretSd{}$ \\
Reacher-v1 sparse & Target only & $\VSevenSparseTargetOnlyAllMean{}\!\pm\!\VSevenSparseTargetOnlyAllSd{}$ & $\VSevenSparseTargetOnlyActionMean{}\!\pm\!\VSevenSparseTargetOnlyActionSd{}$ & $\VSevenSparseTargetOnlyBiasMean{}\!\pm\!\VSevenSparseTargetOnlyBiasSd{}$ & $\VSevenSparseTargetOnlyRegretMean{}\!\pm\!\VSevenSparseTargetOnlyRegretSd{}$ \\
Reacher-v1 sparse & Signed residual & $\VSevenSparseSignedAllMean{}\!\pm\!\VSevenSparseSignedAllSd{}$ & $\VSevenSparseSignedActionMean{}\!\pm\!\VSevenSparseSignedActionSd{}$ & $\VSevenSparseSignedBiasMean{}\!\pm\!\VSevenSparseSignedBiasSd{}$ & $\VSevenSparseSignedRegretMean{}\!\pm\!\VSevenSparseSignedRegretSd{}$ \\
\bottomrule
\end{tabular}
\end{table}

The shared representation reduces test-episode MAE in all six policy units;
candidate-selection regret varies across tasks. Joint nonnegative output
and target constraints slightly improve MAE on the dense task and
increase it on the sparse task. These results separate the gain in
action-value estimation from the effects of a conservative constraint
and the resulting subgoal ranking. In the projection experiment,
action-type labels identify evaluation interventions; those actions may
also occur in the separately collected transition bank.

\subsection{Duplicate-candidate sensitivity}
At a high-level boundary, the inherited and current-policy subgoals can
coincide. Each bank contains \VSevenDuplicateGroups{} duplicated
float32 input groups, leaving \VSevenUniqueInputs{} distinct
$(s,G,g,a,t)$ inputs from \VSevenBankRows{} slots.
All duplicate return labels and predictions agree exactly at every
evaluation point during fitting. Candidate sets contain seven or eight distinct
subgoals; different perturbed actions under the same subgoal are not
duplicate inputs.

The primary analysis uses candidate-slot weighting. The sensitivity
analysis merges exactly equal float32 input vectors for prediction error
and exactly equal candidate subgoals within a snapshot for selection
metrics. No merge depends on the outcome.
Table~\ref{tab:vseven-dedup} shows that the representation comparison
retains the same direction in every policy unit.
Policy seeds are the outer replication unit, with query rows and snapshots
nested within them. The three policy seeds per task define the scope of
this comparison.

\begin{table}[t]
\centering
\caption{\textbf{Exact-input deduplication preserves the MAE comparison.}
Shared $V-A$ minus direct-$Q$ test-episode MAE; negative values favor
the shared representation.}
\label{tab:vseven-dedup}
\small
\begin{tabular}{llrr}
\toprule
Task & Policy seed & Slot-weighted difference & Deduplicated difference\\
\midrule
Reacher-v0 dense & \VSevenDenseSeedA{} & \VSevenDenseSeedASlotDifference{} & \VSevenDenseSeedADedupDifference{} \\
Reacher-v0 dense & \VSevenDenseSeedB{} & \VSevenDenseSeedBSlotDifference{} & \VSevenDenseSeedBDedupDifference{} \\
Reacher-v0 dense & \VSevenDenseSeedC{} & \VSevenDenseSeedCSlotDifference{} & \VSevenDenseSeedCDedupDifference{} \\
Reacher-v1 sparse & \VSevenSparseSeedA{} & \VSevenSparseSeedASlotDifference{} & \VSevenSparseSeedADedupDifference{} \\
Reacher-v1 sparse & \VSevenSparseSeedB{} & \VSevenSparseSeedBSlotDifference{} & \VSevenSparseSeedBDedupDifference{} \\
Reacher-v1 sparse & \VSevenSparseSeedC{} & \VSevenSparseSeedCSlotDifference{} & \VSevenSparseSeedCDedupDifference{} \\
\bottomrule
\end{tabular}
\end{table}

\subsection{Finite-MDP maximization-feedback experiment}
We separately test repeated maximization in a stochastic finite MDP with
six states, eight primitive actions, and a high-level duration of three
steps. Subgoals specify repeated primitive actions. Exact dynamic
programming computes the optimal hierarchical values from the transition
matrix and reward function, independently of the learned decomposition.
The factorial comparison crosses a learned maximizing boundary decision
with a fixed optimal boundary policy, and signed with projected residuals.
Both decision conditions have the same optimal-value target. A single
tabular estimator isolates this mechanism from the twin neural
architecture used by full HTB.

The grid uses two, four, or eight candidate subgoals; reward-noise standard
deviations of $0$, $.3$, or $1$; and \VSevenMDPNoiseSeeds{} optimization-noise
seeds per condition. Stochastic transitions remain noisy when reward noise
is zero. All estimators start at the same exact value to isolate subsequent
estimation drift, and use matched samples and update settings.
An additional exact-expectation control removes both noise sources.
The factorial design contains \VSevenMDPRuns{} conditions.

\begin{table}[t]
\centering
\caption{\textbf{Nonnegative projection trades optimism for underestimation.}
Final mean $\pm$ SD across \VSevenMDPNoiseSeeds{} noise seeds with repeated boundary
maximization. Selected error is the critic-selected subgoal's prediction
minus its exact optimal-policy value; MAE averages all evaluated values.
The complete candidate-count and reward-noise grid is shown.}
\label{tab:vseven-mdp}
\small
\begin{tabular}{rrrrrr}
\toprule
 & & \multicolumn{2}{c}{Selected signed error} & \multicolumn{2}{c}{MAE}\\
Candidates & Noise & Signed & Projected & Signed & Projected\\
\midrule
\VSevenMdpTwoZeroCandidates{} & \VSevenMdpTwoZeroNoise{} & $\VSevenMdpTwoZeroSignedSelectedBiasMean{}\!\pm\!\VSevenMdpTwoZeroSignedSelectedBiasSd{}$ & $\VSevenMdpTwoZeroProjectedSelectedBiasMean{}\!\pm\!\VSevenMdpTwoZeroProjectedSelectedBiasSd{}$ & $\VSevenMdpTwoZeroSignedMaeMean{}\!\pm\!\VSevenMdpTwoZeroSignedMaeSd{}$ & $\VSevenMdpTwoZeroProjectedMaeMean{}\!\pm\!\VSevenMdpTwoZeroProjectedMaeSd{}$ \\
\VSevenMdpTwoLowCandidates{} & \VSevenMdpTwoLowNoise{} & $\VSevenMdpTwoLowSignedSelectedBiasMean{}\!\pm\!\VSevenMdpTwoLowSignedSelectedBiasSd{}$ & $\VSevenMdpTwoLowProjectedSelectedBiasMean{}\!\pm\!\VSevenMdpTwoLowProjectedSelectedBiasSd{}$ & $\VSevenMdpTwoLowSignedMaeMean{}\!\pm\!\VSevenMdpTwoLowSignedMaeSd{}$ & $\VSevenMdpTwoLowProjectedMaeMean{}\!\pm\!\VSevenMdpTwoLowProjectedMaeSd{}$ \\
\VSevenMdpTwoHighCandidates{} & \VSevenMdpTwoHighNoise{} & $\VSevenMdpTwoHighSignedSelectedBiasMean{}\!\pm\!\VSevenMdpTwoHighSignedSelectedBiasSd{}$ & $\VSevenMdpTwoHighProjectedSelectedBiasMean{}\!\pm\!\VSevenMdpTwoHighProjectedSelectedBiasSd{}$ & $\VSevenMdpTwoHighSignedMaeMean{}\!\pm\!\VSevenMdpTwoHighSignedMaeSd{}$ & $\VSevenMdpTwoHighProjectedMaeMean{}\!\pm\!\VSevenMdpTwoHighProjectedMaeSd{}$ \\
\VSevenMdpFourZeroCandidates{} & \VSevenMdpFourZeroNoise{} & $\VSevenMdpFourZeroSignedSelectedBiasMean{}\!\pm\!\VSevenMdpFourZeroSignedSelectedBiasSd{}$ & $\VSevenMdpFourZeroProjectedSelectedBiasMean{}\!\pm\!\VSevenMdpFourZeroProjectedSelectedBiasSd{}$ & $\VSevenMdpFourZeroSignedMaeMean{}\!\pm\!\VSevenMdpFourZeroSignedMaeSd{}$ & $\VSevenMdpFourZeroProjectedMaeMean{}\!\pm\!\VSevenMdpFourZeroProjectedMaeSd{}$ \\
\VSevenMdpFourLowCandidates{} & \VSevenMdpFourLowNoise{} & $\VSevenMdpFourLowSignedSelectedBiasMean{}\!\pm\!\VSevenMdpFourLowSignedSelectedBiasSd{}$ & $\VSevenMdpFourLowProjectedSelectedBiasMean{}\!\pm\!\VSevenMdpFourLowProjectedSelectedBiasSd{}$ & $\VSevenMdpFourLowSignedMaeMean{}\!\pm\!\VSevenMdpFourLowSignedMaeSd{}$ & $\VSevenMdpFourLowProjectedMaeMean{}\!\pm\!\VSevenMdpFourLowProjectedMaeSd{}$ \\
\VSevenMdpFourHighCandidates{} & \VSevenMdpFourHighNoise{} & $\VSevenMdpFourHighSignedSelectedBiasMean{}\!\pm\!\VSevenMdpFourHighSignedSelectedBiasSd{}$ & $\VSevenMdpFourHighProjectedSelectedBiasMean{}\!\pm\!\VSevenMdpFourHighProjectedSelectedBiasSd{}$ & $\VSevenMdpFourHighSignedMaeMean{}\!\pm\!\VSevenMdpFourHighSignedMaeSd{}$ & $\VSevenMdpFourHighProjectedMaeMean{}\!\pm\!\VSevenMdpFourHighProjectedMaeSd{}$ \\
\VSevenMdpEightZeroCandidates{} & \VSevenMdpEightZeroNoise{} & $\VSevenMdpEightZeroSignedSelectedBiasMean{}\!\pm\!\VSevenMdpEightZeroSignedSelectedBiasSd{}$ & $\VSevenMdpEightZeroProjectedSelectedBiasMean{}\!\pm\!\VSevenMdpEightZeroProjectedSelectedBiasSd{}$ & $\VSevenMdpEightZeroSignedMaeMean{}\!\pm\!\VSevenMdpEightZeroSignedMaeSd{}$ & $\VSevenMdpEightZeroProjectedMaeMean{}\!\pm\!\VSevenMdpEightZeroProjectedMaeSd{}$ \\
\VSevenMdpEightLowCandidates{} & \VSevenMdpEightLowNoise{} & $\VSevenMdpEightLowSignedSelectedBiasMean{}\!\pm\!\VSevenMdpEightLowSignedSelectedBiasSd{}$ & $\VSevenMdpEightLowProjectedSelectedBiasMean{}\!\pm\!\VSevenMdpEightLowProjectedSelectedBiasSd{}$ & $\VSevenMdpEightLowSignedMaeMean{}\!\pm\!\VSevenMdpEightLowSignedMaeSd{}$ & $\VSevenMdpEightLowProjectedMaeMean{}\!\pm\!\VSevenMdpEightLowProjectedMaeSd{}$ \\
\VSevenMdpEightHighCandidates{} & \VSevenMdpEightHighNoise{} & $\VSevenMdpEightHighSignedSelectedBiasMean{}\!\pm\!\VSevenMdpEightHighSignedSelectedBiasSd{}$ & $\VSevenMdpEightHighProjectedSelectedBiasMean{}\!\pm\!\VSevenMdpEightHighProjectedSelectedBiasSd{}$ & $\VSevenMdpEightHighSignedMaeMean{}\!\pm\!\VSevenMdpEightHighSignedMaeSd{}$ & $\VSevenMdpEightHighProjectedMaeMean{}\!\pm\!\VSevenMdpEightHighProjectedMaeSd{}$ \\
\bottomrule
\end{tabular}
\end{table}

Table~\ref{tab:vseven-mdp} shows a one-sided estimation effect: projection
lowers the estimates and increases MAE throughout this grid. Its
interaction with maximizing versus fixed-policy backups has mixed
directions. The result characterizes the downward constraint on learned
values in both backup settings.
The exact-expectation controls retain the common fixed point.
The comparison evaluates both boundary-policy conditions and their paired
interaction, including underestimation, regret, and calibration against
the learned policy's true value.

\clearpage
\section{Connections to other hierarchical methods}
\label{app:extensions}

HTB separates the choice of a high-level command from its value update.
This distinction connects its evaluator to methods that learn subgoal
representations, discover skills, or plan over latent goals. The connections
below follow from their policy interfaces and the conditional Bellman
construction in Appendix~\ref{app:theory}; they describe possible algorithmic
combinations, not additional performance results.

\paragraph{Learned subgoal representations.}
\citet{nachum2019representation} learn goal representations for a
fixed-duration hierarchy built on HIRO. Their objective concerns which
representation supports effective hierarchical control; HTB concerns how
to evaluate a specified command as its low-level policy changes.
The two mechanisms could be combined by conditioning the time-aware critic
on the learned subgoal. This requires consistent goal semantics in replay,
correct transport of relative goals, and the Markov and action-coverage
conditions of the primitive backup. Adjacency constraints
\citep{zhang2020hrac} and landmark-guided proposals \citep{kim2021higl}
likewise act on subgoal selection and can remain separate from the choice
of subgoal supervised by the critic.

\paragraph{Unsupervised skills.}
DIAYN learns behaviorally diverse skills \citep{eysenbach2019diayn}, while
DADS learns skills with predictable outcomes and a skill-dynamics model
\citep{sharma2020dads}. In a fixed-duration controller over such skills,
a skill identifier can replace the spatial subgoal in the time-aware
value input and remain constant within its execution segment.
The high-level critic would still use accumulated task rewards, independently
of the intrinsic objective used to discover the skills. For stochastic
skill policies, the one-step value target uses the policy expectation in
Equation~\eqref{eq:short-target-expectation}. This provides an evaluation interface
when skills are jointly refined with the high-level policy; frozen skills
do not themselves create low-level-policy nonstationarity.

\paragraph{Recurrent latent-goal hierarchies.}
FeUdal Networks learn directional goals and a recurrent worker conditioned
on a history of manager goals \citep{vezhnevets2017fun}.
This motivates value updates for latent commands, but its pooled-goal
execution differs from holding one goal for a fixed segment.
A time-aware extension must represent the relevant recurrent state and
command history and use the actual command-switching rule in its continuation.
A single spatial goal and countdown do not encode that execution state.

\paragraph{Offline hierarchical policies.}
HIQL learns a latent subgoal policy and a low-level policy from an
action-free offline value function \citep{park2023hiql}.
Although its high-level training target looks several steps ahead,
both policies are queried at every primitive step during execution.
Combining this policy interface with temporally extended HTB evaluation
would therefore require explicit execution boundaries and action-labeled
transitions. Offline action coverage and policy-extraction constraints
remain necessary: the primitive conditional identity does not supply
information about actions absent from the dataset.

\paragraph{Model-based hierarchical control.}
Director selects latent goals at fixed intervals and trains hierarchical
policies through imagined trajectories in a learned world model
\citep{hafner2022director}. Its execution clock offers a natural interface
for remaining-time-conditioned goal evaluation. A replay-based extension
could attach an HTB-style critic to these goals while accounting for the
worker's action distribution and model error. Director's own actor updates
use on-policy imagined trajectories; stationarity of real primitive
transitions does not remove changes in a learned dynamics model.

\paragraph{Options with learned termination.}
Option-Critic already learns primitive-action-conditioned intra-option
values together with option policies and termination functions
\citep{bacon2017optioncritic}. Its termination rule determines whether
the next step continues the option or invokes the high-level policy.
This is the same distinction made explicit by HTB's remaining-time state
for scheduled boundaries. Extending the mixed backups to learned
termination would replace the deterministic boundary switch by the
appropriate termination-weighted continuation, preserving the chosen
task reward and discount convention.

\end{document}